\documentclass{article}

\usepackage[preprint]{neurips_2026}
\usepackage{listings}
\usepackage[utf8]{inputenc} 
\usepackage[T1]{fontenc}    
\usepackage[pdfencoding=unicode,psdextra,citecolor=blue,colorlinks=true]{hyperref}       
\usepackage{url}            
\usepackage{booktabs}       
\usepackage{amsfonts}       
\usepackage{nicefrac}       
\usepackage{microtype}      
\usepackage{xcolor}         
\usepackage{graphicx}
\usepackage{subcaption}

\usepackage{enumitem}

\usepackage{algorithm}
\usepackage{algorithmic}

\usepackage{amsmath}
\usepackage{amssymb}
\usepackage{mathtools}
\usepackage{amsthm}
\usepackage{thmtools}
\usepackage{multirow}
\usepackage{wrapfig}
\usepackage[capitalize,noabbrev]{cleveref}
 
\newtheorem{theorem}{Theorem}
\newtheorem{lemma}[theorem]{Lemma} 
\newtheorem{proposition}[theorem]{Proposition} 
\newtheorem{corollary}[theorem]{Corollary}

\usepackage{nccmath}
\theoremstyle{remark}
\newtheorem{remark}[theorem]{Remark}

\crefname{equation}{eqn.}{eqns.}
\crefname{appendix}{appendix}{appendices}
\Crefname{appendix}{Appendix}{Appendices}

\newcommand{\1}{\mathbf{1}}

\usepackage{amsmath,amsfonts,bm}

\def\eqref#1{equation~\ref{#1}}

\def\1{\bm{1}}

\def\eps{{\epsilon}}

\DeclareMathAlphabet{\mathsfit}{\encodingdefault}{\sfdefault}{m}{sl}
\SetMathAlphabet{\mathsfit}{bold}{\encodingdefault}{\sfdefault}{bx}{n}

\newcommand{\E}{\mathbb{E}}

\newcommand{\Var}{\mathrm{Var}}

\newcommand{\algo}{VSPO}

\definecolor{darkgreen}{RGB}{0,150,0}

\title{\algo: Vector-Steered Policy Optimization for Behavioral Control}

\author{%
  Xuechen Zhang\thanks{These authors contributed equally to this work. Correspondance to \{zxuechen,zijianh,oymak\}@umich.edu} \\
    University of Michigan \\
  \texttt{zxuechen@umich.edu} \\
  \And
  Zijian Huang\footnotemark[1] \\
  University of Michigan \\
\texttt{zijianh@umich.edu}\\
  \And
  Kai Yang \\
  University of Michigan \\
  \texttt{yangkai@umich.edu} \\
  \AND
  Weijia Zhang \\
  University of Michigan \\
  \texttt{wjzhang@umich.edu} \\
  \And
  Jiasi Chen \\
  University of Michigan \\
  \texttt{jiasi@umich.edu}\\
  \And
  Samet Oymak \\
  University of Michigan \\
\texttt{oymak@umich.edu}\\
 \\
}

\begin{document}

\maketitle

\begin{abstract}
Modern language models often need to optimize a primary accuracy objective while also accommodating secondary behavioral preferences, such as verbosity, agreeableness, or the level of technical expertise in its response.
In practice, a base model may exhibit a desired behavior very rarely or not at all. Thus, endowing the model with a target behavior creates a \emph{sparse behavioral reward} bottleneck. To address such multi-objective problems, we introduce Vector-Steered Policy Optimization (VSPO) which employs a steering vector associated with the target behavior to control the behavior intensity of the generated rollouts. VSPO is obtained by modifying GRPO to sample rollouts with varying steering intensities. This process can be interpreted as an on-policy \emph{latent self-distillation} procedure where the model internalizes its steering vector. By varying steering intensities, VSPO \emph{upsamples} rare behaviors and enriches rollout diversity, which alleviates the sparse reward issue and provably accelerates the policy optimization. Through comprehensive theory and experiments, we establish that VSPO has favorable properties compared to vanilla reward shaping and other alternative approaches. Specifically, under a bandit abstraction, VSPO provably achieves better iteration complexity than reward-shaped GRPO when the steering-induced distributions are sufficiently aligned with the target behavior. We evaluate VSPO across multiple reasoning benchmarks, including MATH and MMLU-Pro, for four target behaviors: explanation expertise, confidence expression, robustness to misleading context, and response verbosity. Our results show that VSPO consistently improves the control along target behavior while maintaining or improving task accuracy compared with reward shaping, teacher-trace distillation, and guidance-based baselines.

\end{abstract}

\section{Introduction}
\label{sec:intro}

Recent years witnessed remarkable progress on post-training and reasoning capabilities of large language models (LLM) \cite{ouyang2022training,shao2024deepseekmath}. These models can now tackle highly complex mathematical and scientific problems. However, many desirable model behaviors are not captured by correctness alone. In reasoning tasks, we often want the model to maintain a primary objective like answer correctness, while also controlling secondary characteristics such as verbosity, confidence expression, expertise level, or agreeableness. This setting is closely related to multi-objective alignment, where models must trade off several human-preference dimensions rather than optimizing a single scalar notion of quality. Prior work has explored multi-objective or dynamically adjustable reward formulations \cite{xiongprojection,yang2024rewards}, but such methods still primarily operate at the reward level and inherit the usual difficulties of reward design, scaling, and optimization stability.

\begin{figure}
\centering
\includegraphics[width=\textwidth]{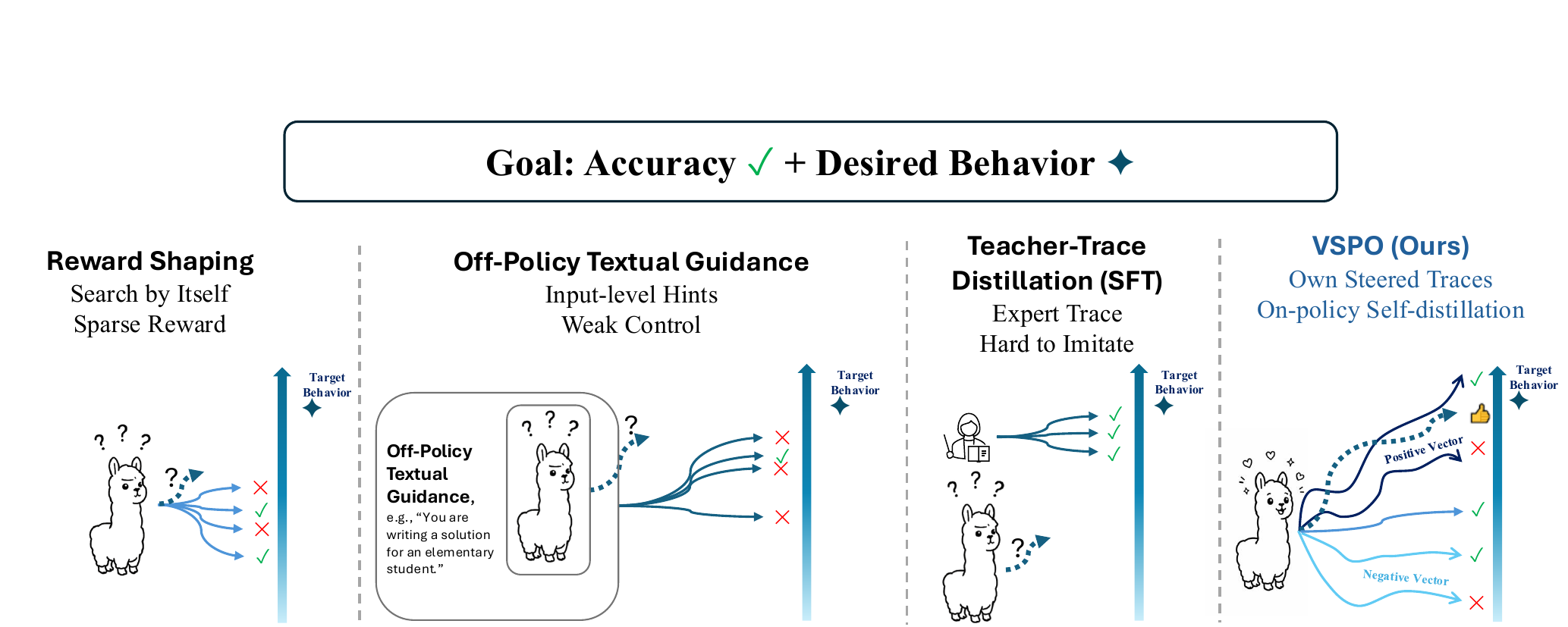}
\caption{\textbf{Motivation and overview of VSPO.}  The goal is to improve task accuracy while inducing a desired reasoning behavior. The figure illustrates VSPO and alternative methods for the goal of improving both task accuracy and a target behavior. Check marks and crosses indicate whether each generated trace is correct or incorrect. The vertical blue arrow denotes the target-behavior direction: traces higher along the arrow better exhibit the desired behavior. VSPO uses latent steering vectors to generate a structured family of the model’s own steered rollouts, spanning different behavior intensities while remaining close to the current policy. By rewarding accurate and behavior-aligned traces and distilling them back into the unsteered policy, VSPO enables on-policy self-distillation toward both correctness and controllable reasoning behavior without requiring inference-time textual guidance. The bold dashed trajectory refers to the rollout generated at inference-time and thumbs-up indicates that, after VSPO training, the model can generate correct target-behavior traces. 
}\label{fig:fig1}
\vspace{-5pt}
\end{figure}

A straightforward approach is to apply GRPO with an auxiliary shaped reward for the desired behavior~\cite{aggarwall1,team2025kimi,yang2024rewards,zhang2025making}. While intuitive and simple, reward shaping can be inefficient when the target behavior is rarely expressed in the sampled rollouts. Scalar rewards only evaluate generated trajectories; they do not directly improve the diversity or quality of the rollout distribution. If the majority of responses in a GRPO group are low-quality or behaviorally similar, relative advantage normalization provides little useful signal for learning the desired characteristic. Another common approach is to use a stronger teacher model to generate or rewrite reasoning traces endowed with the target behavior and then fine-tune the student model on these traces. Teacher traces can provide cleaner demonstrations of the desired behavior, but they are off-policy with respect to the student: the student is trained on trajectories produced by another model rather than on its own sampled outputs. This off-policy mismatch is one of the motivations behind recent on-policy self-distillation methods \cite{shenfeld2026self,hubotter2026reinforcement,zhao2026self}. 
These works suggest that staying on-policy via self generated traces helps reduce the mismatch introduced by external feedback, specifically, teacher-trace. This also inspired our method which could be interpreted as self-distillation in latent steering space.
A third approach is to use powerful teacher models during online RL, for example as AI-feedback annotators, judges, or rewriters to achieve the target behavior. RLAIF \cite{guo2024direct,lee2024rlaif} replaces or augments human feedback with feedback from AI models, and online AI-feedback methods query an LLM annotator during training to obtain preference labels for current-policy samples. These methods improve scalability relative to human annotation and can provide controllable feedback, but repeated online calls to powerful API models are expensive, especially when used for multi-rollout generation or trace rewriting inside the RL loop. Moreover, training the student on rewritten traces still incur off-policy drawbacks \cite{sharma2024critical}.


As depicted in Figure~\ref{fig:fig1}, we propose Vector-Steered Policy Optimization (\algo) to address these bottlenecks. Recent work on steering vectors \cite{chen2025persona} shows that behavioral traits can be represented as latent directions in a model’s activation space, suggesting a way to quantitatively characterize and steer target behaviors beyond surface-level prompting or scalar rewards. \algo~relies on a steering vector associated with the target behavior. {In practice, we construct this vector using a more powerful teacher model to obtain contrastive rewrite pairs associated with the behavior.} During RL, the current policy is rolled out under multiple steering intensities, producing a diverse and controllable rollout distribution. Because these rollouts are still generated by the target model, training remains anchored to the current policy. We then apply GRPO with a multi-objective reward, where the behavioral reward is proportional to the steering intensity. The full procedure is depicted in Figure \ref{fig:algo}.

Overall, our paper makes the following contributions:
\begin{itemize}[leftmargin=*, itemsep=0pt, topsep=0pt, parsep=0pt, partopsep=0pt]
    \item  We introduce \algo, a multi-objective policy optimization method that leverages steering vectors to control language model behavior while preserving task accuracy. Compared to vanilla GRPO or reward shaping, \algo~actively improves rollout diversity by explicitly sampling traces along the target latent vector. Compared with SFT on teacher traces, it avoids directly training on off-policy teacher trajectories: the teacher can assist vector construction, but the RL process is fully guided by the student rollouts. Compared with RLAIF or online rewriting, VSPO substantially reduces reliance on repeated teacher calls, since the steering vector is reused throughout optimization. 
    \item We formally characterize the advantage of VSPO over reward shaping: We establish how \algo~can provably achieve better iteration complexity than reward-shaped GRPO under a bandit abstraction by upsampling sparse but desirable behavioral patterns (see Section \ref{sec:theory}, Propositions \ref{prop:iter-rs}, \ref{prop:iter-lsd}).
    \item We empirically evaluate \algo\ on MATH and MMLU-Pro across four target behavior types (see Section \ref{subsubsec:task} for details) and show that it consistently outperforms reward shaping, teacher-trace SFT, and RLAIF-style baselines while maintaining correctness.
     
    \item VSPO has favorable properties: Compared to alternatives, it incurs mild distributional shift from base model (see Fig.~\ref{fig:nll}). It induces a natural learning curriculum along the behavior axis throughout policy optimization (see Fig.~\ref{fig:review_combine}). The behavioral component of its reward function is normalized by design as it corresponds to \emph{one unit of steering vector}.
\end{itemize}

\section{Related Work}
Teacher-trace distillation is a common technique to guide target models, where a stronger model generates reasoning traces and the target model is trained with supervised fine-tuning. This paradigm has been widely used for improving reasoning and controlling response format \citep{hassid2025don,kang2025c3ot,muennighoff2025s1}. However, teacher-trace SFT is off-policy: the student learns from trajectories generated by an external model rather than from its own sampled rollouts, which can cause a mismatch between the distilled behavior and the student policy's reachable distribution \citep{guo2025deepseek,zhang2025making}. 
RL post-training methods, especially RLVR, address this by optimizing the model on its own sampled responses using verifiable rewards and group-relative advantages \citep{shao2024deepseekmath,guo2025deepseek}. Yet standard GRPO relies on ordinary rollout sampling and scalar rewards, which may be insufficient when the goal is to preserve correctness while controlling secondary characteristics such as conciseness, confidence expression, expertise level, or sycophancy resistance. Another line of work incorporates target characteristics directly into rewards, such as length penalties or LLM-based preference scores, and more broadly studies multi-objective alignment \citep{team2025kimi,aggarwall1,zhang2025making}. However, reward shaping only changes scalar feedback after rollout generation and does not directly improve the diversity of the sampled candidates. In contrast, VSPO targets controllable secondary behaviors by shaping the rollout distribution through latent steering vectors. Similarly, textual guidance and RLAIF-style methods use auxiliary instructions, critiques, or AI feedback to elicit desired behaviors during RL \citep{snell2022learning,yao2026incorporating,yan2025learning,yang2026towards}, but repeated external guidance can be expensive and may introduce mismatch between guided training and unguided inference.
Recent self-distillation methods reduce off-policy mismatch by deriving learning signals from the model itself. SDFT uses a demonstration-conditioned version of the model as a self-teacher, while SDPO uses high-quality self-generated rollouts as implicit optimization targets \citep{shenfeld2026self,hubotter2026reinforcement}. Related work also improves RLVR through refined credit assignment, denser intermediate signals, and multi-attempt verifier feedback \citep{kazemnejadvineppo,Learning_to_Correct}. These methods are related to our motivation, but mainly target task-performance rather than controllable secondary behavior. Finally, recent curriculum-based RL methods have been shown to help overcome optimization bottlenecks in SFT and RL training \citep{zhang2026bread}. VSPO induces a natural curriculum along the behavior axis by varying steering intensity, and we empirically show that this improves over naive applications of SFT and RL for acquiring target behaviors.


Our work is also related to representation engineering and vector steering, which identify semantically meaningful directions in activation space and manipulate model behavior at inference time \citep{zou2023representation,turner2023steering,panickssery2023steering}. Persona vectors further specialize such directions to behavioral traits \citep{chen2025persona}. Unlike prior work that primarily uses steering as a test-time intervention, \algo\ uses vector steering as a mechanism for on-policy self-distillation. The details of related works is at \Cref{app:related}.

\section{Proposed Method: \textbf{V}ector-\textbf{S}teered \textbf{P}olicy \textbf{O}ptimization (\algo)}
\begin{figure}
\centering
\includegraphics[width=\textwidth]{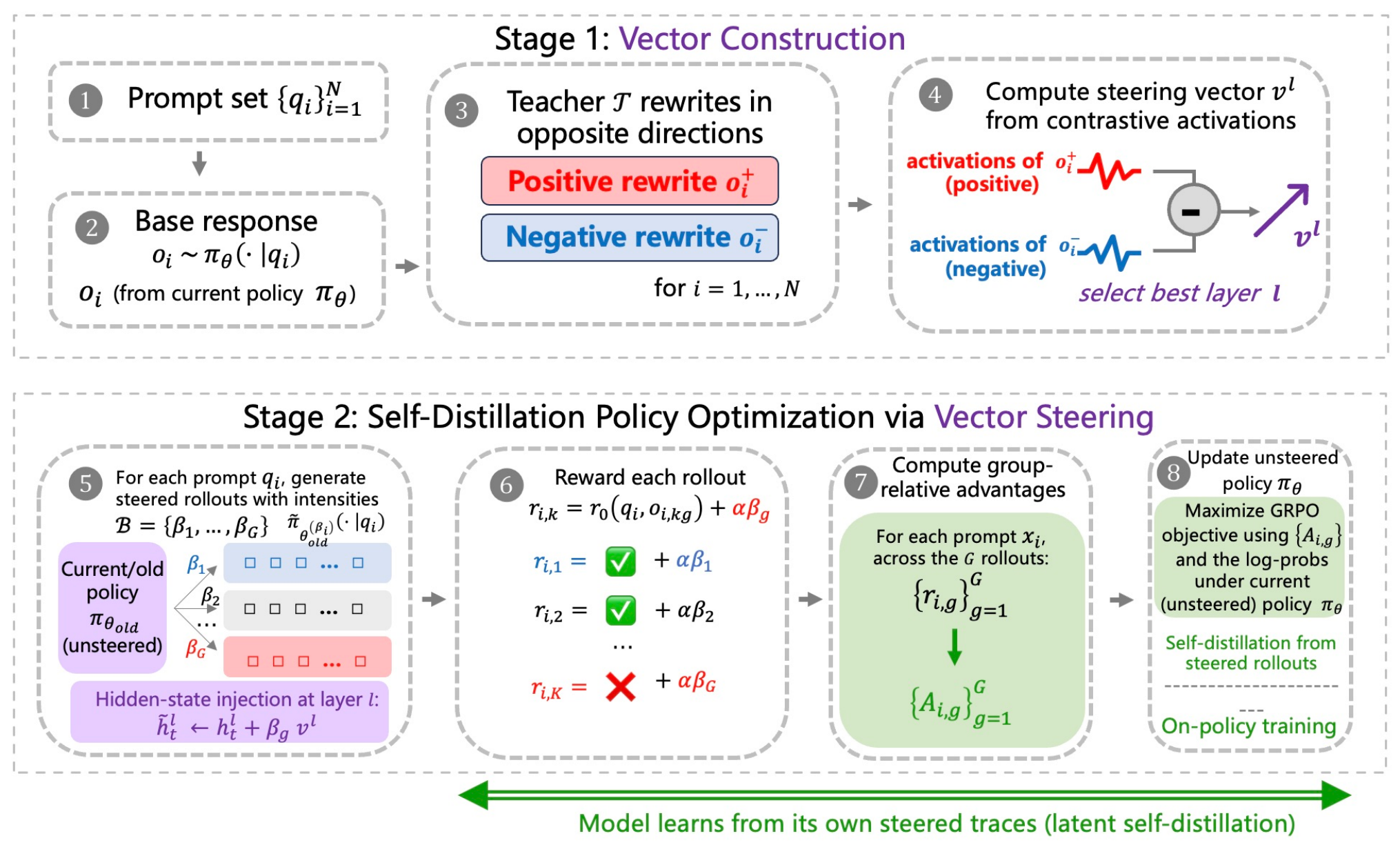}
\vspace{-8pt}
\caption{\textbf{Overview of \algo~algorithm}. In Stage 1, prompts are sampled from the current policy, a teacher rewrites the responses into contrastive positive and negative directions, and their activation differences are used to construct a steering vector. In Stage 2, the current policy generates on-policy rollouts under different steering intensities, receives rewards summing task correctness and steering-dependent preference signals, and is updated with GRPO. The final update is applied to the unsteered policy, enabling the model to distill useful behaviors from its own steered traces.} \label{fig:algo}
\end{figure}


\subsection{Algorithm Overview}
\label{subsec:algo}
The workflow of \algo\ is shown in \Cref{fig:algo} and proceeds as follows:
\begin{itemize}[leftmargin=*, topsep=0pt, parsep=0pt, partopsep=0pt]
    \item \textbf{Steering Vector Construction}: In the first stage, the target model generates base responses for a small set of sampled questions. A teacher model then rewrites each response into positive and negative behavioral variants, such as expert versus elementary explanations for the expertise-level axis. Using the original prompt, we extract target-model activations for both variants, compute their mean activation difference at each layer, and select the strongest layer-wise direction as the steering vector.
    \item \textbf{Self-Distillation Policy Optimization}: In the second stage, for each sampled prompt $q_i$,
    \begin{enumerate}[leftmargin=*, topsep=0pt, parsep=0pt, partopsep=0pt]
        \item \textbf{Rollout with intensities}:  Sample a group of $G$ rollouts $\{o_{i,g}\}_{g=1}^G$ with intensities $\mathcal{B}=\{\beta_g\}_{g=1}^G$.
        \item \textbf{Reward computation}: For each rollout $o_{i,g}$, the reward $r_{i,g}$ is a combination of the primary reward and the proxy behavioral reward, $r_0(q_i,o_{i,g})+\alpha\beta_g$, where $r_0$ is the primary reward function, and $\alpha$ is a steering preference hyperparameter. 
        \item \textbf{Policy updates}: Optimize the policy via the following objective:
        {\footnotesize\begin{equation}
            \mathcal{L}_{\algo}(\theta)
            =
            \mathbb{E}_{x}
            \left[
            \frac{1}{G}\sum_{g=1}^{G}
            \frac{1}{T_g}\sum_{t=1}^{T_g}
            \min\!\left(
            \rho_{g,t}(\theta)A_g,\;
            \mathrm{clip}\!\big(\rho_{g,t}(\theta),1-\epsilon,1+\epsilon\big)A_g
            \right)
            \right]
            -
            \lambda \mathcal{D}_{\mathrm{KL}}(\pi_\theta \,\|\, \pi_{\mathrm{ref}}).
            \label{eq:grpo_vs}
        \end{equation}}
        where $\bar{r},\sigma_r$ are the mean and standard deviation of the rewards in the same group
        {\footnotesize\begin{equation}
            \rho_{g,t}(\theta)
            =
            \frac{\pi_\theta(o_{g,t}\mid q,o_{g,<t})}
            {\pi_{\theta_{\mathrm{old}}}(o_{g,t}\mid q,o_{g,<t})},\quad A_g = \frac{r_g-\bar{r}}{\sigma_r}.
            \label{eq:ratio_and_advantage}
        \end{equation}}
    \end{enumerate}
\end{itemize}
The details of the algorithm can be found in \Cref{alg:popcorn} and \Cref{appendix:method}. 

\subsection{Theoretical Analysis}
\label{sec:theory}
VSPO can be viewed as a form of \emph{distribution shaping} during sampling: Instead of drawing a group entirely from the current policy, it draws from multiple related but contrasted distributions. 
In this section, we quantify the benefit of distribution shaping obtained by vector steering, comparing to reward shaping, which is a typical approach to solving multiple-objective optimization in GRPO. 

\paragraph{Theoretical Setup.}
We consider a $K$-armed bandit with deterministic rewards, i.e., a one-step decision problem with $K$ possible actions. 
Each arm $i\in [K]$ has a primary reward $x(i)$ and behavioral reward $y(i)$. 
The target arm $i^\star$ is defined as $i^\star
=
\arg\max_{i\in[K]}
\left\{
x(i): y(i)=\max_{j\in[K]} y(j)
\right\}$.


Denote $\pi_t$ as policy at iteration $t$. 
Define scalarized reward $r(i) := x(i)+\alpha y(i)$
, and expected reward of policy $\pi$ as $J(\pi) := \sum_{i=1}^K \pi(i) r(i)$. 
We assume $\pi_0(i)>0$ for all $i\in[K]$, and $\alpha>
\max_{i:\, y(i)<y(i^\star)}
\frac{\max\{x(i)-x(i^\star),0\}}
{y(i^\star)-y(i)}$ so that $i^\star$ is the unique maximizer of $r(i)$ (Proof in Appendix \ref{sec:optimal-arm}). 


For each suboptimal arm $i\neq i^\star$, define the scalar reward gap $\Delta_i := r(i^\star) - r(i)$, together with $\Delta_{\min} := \min_{i \neq i^\star} \Delta_i, \; \Delta_{\max} := \max_{i\ne i^\star} \Delta_i$.
We also define the \textit{bandit conditioning constant} $\lambda$ as 
\begin{equation}
\label{eq:def-misalign}
\lambda := \frac{D_x+\alpha D_y}{\Delta_{\max}}, \quad \text{where }
D_x := \max_{i\in[K]} x(i)-\min_{i\in[K]} x(i),\;
D_y := \max_{i\in[K]} y(i)-\min_{i\in[K]} y(i). 
\end{equation}

\paragraph{GRPO.}
At iteration $t$, we draw $G$ i.i.d. samples from the current policy $\pi_t$. 
The reward for each sample $i$ is given by reward shaping $r(i) = x(i)+\alpha y(i)$. 
Let $\bar r_t^{\text{GRPO}}$ and $\bar\sigma_t^{\text{GRPO}}$ be the mean and standard deviation of the sampled scalar rewards. 
The empirical arm-level GRPO score is
\begin{equation}
\label{eq:empirical-rs-score}
\widehat A_t^{\text{GRPO}}(i) :=
{N_{t}(i)}\frac{r_i - \bar r_t^{\text{GRPO}}}{\bar\sigma_t^{\text{GRPO}}},
\end{equation}
where $N_t(i)$ is the number of times arm $i$ appears in the group. 

\paragraph{VSPO.}
At each iteration $t$, we assume applying vector steering returns two distributions $\pi_t^+$ and $\pi_t^-$ satisfying $\pi_t = \frac{\pi_t^+ + \pi_t^-}{2}$. 
VSPO draws $G/2$ samples from $\pi_t^+$ and $G/2$ samples from $\pi_t^-$. 
The shaped reward for samples from $\pi_t^+,\pi_t^-$ are defined as $r_t^+, r_t^-$ respectively:
\begin{equation*}
    r_t^+(i) := x(i)+ \alpha \mu_{y,t}^+, \;r_t^-(i) := x(i)+ \alpha \mu_{y,t}^-,\; \text{where } \mu_{y,t}^+ := \E_{j\sim\pi_t^+}[y(j)],\; \mu_{y,t}^- := \E_{j\sim\pi_t^-}[y(j)].
\end{equation*}
Let $\bar r_t^{\text{VSPO}},\bar\sigma_t^{\text{VSPO}}$ be the mean and standard deviation of all shaped rewards, and $N_t^+(i),N_t^-(i)$ be the number of times arm $i$ appears in the groups $\pi_t^+, \pi_t^-$. 
The empirical arm-level VSPO score is
\begin{equation}
\label{eq:empirical-lsd-score}
\widehat A_t^\text{VSPO}(i)
:=
N_t^+(i)
\frac{r_t^+(i)-\bar r_t^\text{VSPO}}{\bar \sigma_t^\text{VSPO}}
+
N_t^-(i)
\frac{r_t^-(i)-\bar r_t^\text{VSPO}}{\bar \sigma_t^\text{VSPO}}.
\end{equation}

\paragraph{KL-Regularized Update.}
Given an arm-level score vector $A_t\in\mathbb{R}^K$, we update the policy by
\begin{equation}
\label{eq:grpo-objective}
\pi_{t+1}
=
\arg\max_{\pi}
\left\{
\frac{1}{G}\sum_{i=1}^K
\frac{\pi(i)}{\pi_t(i)}A_t(i)
-
\frac{1}{\eta}
\sum_{i=1}^K
\pi(i)\log\frac{\pi(i)}{\pi_t(i)}
\right\}.
\end{equation}

In the theoretical analysis below, we use the population scores $A_t^{\text{GRPO}}$ and $A_t^{\text{VSPO}}$, obtained by replacing the empirical quantities in \Cref{eq:empirical-rs-score,eq:empirical-lsd-score} with their population counterparts. 
The explicit population-score formulas and derivations are provided in Appendices \ref{sec:population-rs-advantage} and \ref{sec:population-lsd-advantage}.

\begin{proposition}[Iteration complexity of GRPO]
\label{prop:iter-rs}
Under the GRPO population score $A_t^\mathrm{GRPO}$ in Appendix \ref{sec:population-rs-advantage}, 
we have $J(\pi_{T})\ge r^\star-\varepsilon$ when
\begin{equation}
\label{eq:rs-iters}
T\geq T^\mathrm{GRPO}(\varepsilon)
:=
\frac{\Delta_{\max}}{2\eta \left(1-\frac1G\right)\Delta_{\min}}
\log\frac{C_0}{\varepsilon}.
\end{equation}
\end{proposition}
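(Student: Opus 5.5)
The plan is to reduce the KL-regularized update to a multiplicative-weights (mirror ascent) step driven by the population GRPO score, and then track how fast the mass on suboptimal arms decays relative to the target arm $i^\star$.

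\textbf{Step 1: population score.} With $G$ i.i.d. draws from $\pi_t$, write $N_t(i)=\sum_{s=1}^G \1\{a_s=i\}$ and $\bar r_t=\frac1G\sum_s r(a_s)$. Replacing the empirical standard deviation by the population one, $\sigma_t:=\sqrt{\Var_{j\sim\pi_t}[r(j)]}$, a direct expectation gives
\[
\E\big[N_t(i)(r(i)-\bar r_t)\big]=(G-1)\,\pi_t(i)\,\big(r(i)-J(\pi_t)\big).
\]
The cross terms $s\neq s'$ contribute $\pi_t(i)J(\pi_t)$, and the diagonal terms cancel the self-contribution. Hence
\[
A_t^{\mathrm{GRPO}}(i)=(G-1)\,\pi_t(i)\,\frac{r(i)-J(\pi_t)}{\sigma_t}.
\]
I will take this to be the formula of Appendix~\ref{sec:population-rs-advantage}.

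\textbf{Step 2: closed-form update.} The objective in \Cref{eq:grpo-objective} is a linear term plus a KL term over the simplex. Its maximizer is
\[
\pi_{t+1}(i)\propto \pi_t(i)\exp\!\Big(\tfrac{\eta}{G}\,\tfrac{A_t(i)}{\pi_t(i)}\Big)=\pi_t(i)\exp\!\Big(\eta\big(1-\tfrac1G\big)\tfrac{r(i)-J(\pi_t)}{\sigma_t}\Big).
\]
Taking ratios with the target arm eliminates both the normalizer and $J(\pi_t)$:
\[
\frac{\pi_{t+1}(i)}{\pi_{t+1}(i^\star)}=\frac{\pi_t(i)}{\pi_t(i^\star)}\exp\!\Big(-\eta\big(1-\tfrac1G\big)\tfrac{\Delta_i}{\sigma_t}\Big).
\]
Full support is preserved at every step, so $\sigma_t>0$ (assuming $K\ge2$ with distinct rewards among the arms in play).

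\textbf{Step 3: uniform bound on $\sigma_t$ (main obstacle).} The contraction factor depends on $\sigma_t$, which changes over time, so I need a bound on $\sigma_t$ that is independent of $t$. Because $i^\star$ uniquely maximizes $r$, every reward lies in the interval $[r^\star-\Delta_{\max},\,r^\star]$. Popoviciu's inequality then gives $\sigma_t\le \Delta_{\max}/2$. Combining this with $\Delta_i\ge\Delta_{\min}$, each ratio contracts per step by at least the factor
\[
\exp\!\big(-2\eta(1-\tfrac1G)\Delta_{\min}/\Delta_{\max}\big).
\]

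\textbf{Step 4: conclude.} The suboptimality can be bounded by the ratios:
\[
r^\star-J(\pi_T)=\sum_{i\ne i^\star}\pi_T(i)\Delta_i\le \Delta_{\max}\sum_{i\neq i^\star}\frac{\pi_T(i)}{\pi_T(i^\star)}.
\]
Unrolling Step 3 over $T$ iterations gives
\[
r^\star-J(\pi_T)\le C_0\exp\!\big(-2\eta(1-\tfrac1G)T\Delta_{\min}/\Delta_{\max}\big),
\qquad C_0:=\Delta_{\max}\frac{1-\pi_0(i^\star)}{\pi_0(i^\star)}.
\]
This is finite because $\pi_0$ has full support. Setting the right-hand side at most $\varepsilon$ and solving for $T$ yields exactly $T\ge T^{\mathrm{GRPO}}(\varepsilon)$ in \Cref{eq:rs-iters}.

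The delicate points are two: matching the $(1-\frac1G)$ factor produced by the correlation between $N_t(i)$ and $\bar r_t$ in Step 1, and the variance bound in Step 3, which is what turns a time-varying step size into the clean ratio $\Delta_{\max}/\Delta_{\min}$.
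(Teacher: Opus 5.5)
Your proposal is correct and follows essentially the same route as the paper: the closed-form exponentiated update, the ratio $\pi_t(i)/\pi_t(i^\star)$ contracting by $\exp\bigl(-\eta(1-\frac1G)\Delta_i/\sigma_{r,t}\bigr)$, the Popoviciu bound $\sigma_{r,t}\le\Delta_{\max}/2$, and unrolling over $T$ steps. The one difference is in the constant. The paper keeps $\Delta_i$ inside the sum and takes $C_0=\sum_{i\neq i^\star}\frac{\pi_0(i)}{\pi_0(i^\star)}\Delta_i$. This is slightly smaller than your $\Delta_{\max}\frac{1-\pi_0(i^\star)}{\pi_0(i^\star)}$, so bounding $\Delta_i\le\Delta_{\max}$ before unrolling only loosens the result.
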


\begin{proposition}[Iteration complexity of VSPO]
\label{prop:iter-lsd}
Define distributional reward gaps
\begin{equation*}
\label{eq:delta-def}
    \delta_{x,t}
    :=
    \mathbb{E}_{i\sim\pi_t^+}[x(i)]
    -
    \mathbb{E}_{i\sim\pi_t^-}[x(i)],
    \;
    \delta_{y,t}
    :=
    \mathbb{E}_{i\sim\pi_t^+}[y(i)]
    -
    \mathbb{E}_{i\sim\pi_t^-}[y(i)], \;
    \delta_t
    :=
    \delta_{x,t} + \alpha \delta_{y,t},
\end{equation*}
distributional gap $d_t(i):= \frac{\pi_t^+(i) - \pi_t^-(i)}{2}$, and relative distributional contrast $\rho_t(i):=\frac{d_t(i)}{\pi_t(i)}$. 
Assume the vector steering is $\gamma$-good: for all $t\ge 0$ and all
$i\neq i^\star$,
\begin{equation}
\label{eq:gamma-good}
    \frac{\delta_t}{2G}
    \left(
    \rho_t(i^\star)-\rho_t(i)
    \right)
    \geq \gamma,
    \;
    \delta_{y,t}
    \left(
    \rho_t(i^\star)-\rho_t(i)
    \right)
    \geq
    2\left(y(i^\star)-y(i)\right).
\end{equation}
Then, under the VSPO population score $A_t^\mathrm{VSPO}$ in Appendix \ref{sec:population-lsd-advantage}, 
we have $J(\pi_{T})\ge r(i^\star)-\varepsilon$ when
\begin{equation}
\label{eq:lsd-iters}
T\geq T^\mathrm{VSPO}(\varepsilon)
:=
\frac{\lambda\Delta_{\max}}
{2\eta\sqrt{1-\frac1G}\left(\left(1-\frac1G\right)\Delta_{\min}+\gamma\right)}
\log\frac{C_0}{\varepsilon},
\end{equation}
where $\lambda$ is the {bandit conditioning constant} defined in \Cref{eq:def-misalign}.
\end{proposition}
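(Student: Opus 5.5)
We track the log-ratio potential between the target arm and the others. The KL-regularized update in \Cref{eq:grpo-objective} has the closed form
\begin{equation*}
\pi_{t+1}(i)\propto \pi_t(i)\exp\!\left(\frac{\eta}{G}\frac{A_t(i)}{\pi_t(i)}\right).
\end{equation*}
Hence for every $i\neq i^\star$,
\begin{equation*}
\log\frac{\pi_{t+1}(i)}{\pi_{t+1}(i^\star)}=\log\frac{\pi_t(i)}{\pi_t(i^\star)}-\frac{\eta}{G}\left(\frac{A_t(i^\star)}{\pi_t(i^\star)}-\frac{A_t(i)}{\pi_t(i)}\right).
\end{equation*}
The whole proof therefore reduces to a uniform lower bound on the per-arm normalized score gap $g_t(i):=\frac{1}{G}\bigl(\frac{A_t(i^\star)}{\pi_t(i^\star)}-\frac{A_t(i)}{\pi_t(i)}\bigr)$. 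Suppose we can show $g_t(i)\ge c\,\Delta_i/\Delta_{\max}$, or more crudely $g_t(i)\ge \kappa$ for all $i\neq i^\star$. Then $\pi_T(i)/\pi_T(i^\star)\le (\pi_0(i)/\pi_0(i^\star))e^{-\eta\kappa T}$. Writing $r^\star-J(\pi_T)=\sum_{i\neq i^\star}\pi_T(i)\Delta_i\le \Delta_{\max}\sum_{i\ne i^\star}\pi_T(i)/\pi_T(i^\star)$, this gives geometric decay with $C_0$ absorbing $\Delta_{\max}\sum_i\pi_0(i)/\pi_0(i^\star)$. Inverting $C_0e^{-\eta\kappa T}\le\varepsilon$ yields $T\ge \frac{1}{\eta\kappa}\log\frac{C_0}{\varepsilon}$, so it remains to identify $\kappa$.

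For VSPO I would use the population score from Appendix~\ref{sec:population-lsd-advantage}. Its expected counts are $\E N_t^\pm(i)=\frac G2\pi_t^\pm(i)$. The mean of the shaped reward is $\bar r=\E_{\pi_t}[x]+\alpha\frac{\mu^++\mu^-}{2}$. Substituting $\pi_t^\pm=\pi_t\pm d_t$, the score divided by $G\pi_t(i)$ should split into two parts. The first is a GRPO-like term, namely $x(i)$ plus the averaged behavioral reward minus the mean, normalized by $\sigma$ and with the $(1-1/G)$ finite-group correction coming from the sample-mean/variance bias. The second is a contrast term proportional to $\rho_t(i)$ times $\frac{\alpha\delta_{y,t}}{2}$, together with a cross term in $\delta_{x,t}$.

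The conditions in \Cref{eq:gamma-good} are designed to handle these two parts separately. The second inequality ensures that the contrast term in $y$ dominates twice the missing $\alpha(y(i^\star)-y(i))$, so that the GRPO-like part still recovers a full $(1-\frac1G)\Delta_i/\sigma\ge(1-\frac1G)\Delta_{\min}/\sigma$. The first inequality adds an extra $\gamma/\sigma$.

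The normalizer $\sigma^{\text{VSPO}}_t$ must then be bounded above. The shaped rewards lie in an interval of width at most $D_x+\alpha D_y=\lambda\Delta_{\max}$, so $\sigma\le\lambda\Delta_{\max}/2$ up to the factor $\sqrt{1-1/G}$ from the unbiased/biased variance convention. This produces $\kappa=\frac{2\sqrt{1-1/G}\,((1-\frac1G)\Delta_{\min}+\gamma)}{\lambda\Delta_{\max}}$, and hence the claimed $T^{\text{VSPO}}$.

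The main obstacle is the exact algebra of the population score. The mean-subtraction and the $1/G$ self-inclusion correction must produce precisely $(1-\frac1G)$ multiplying $\Delta$ and $\frac{\delta_t}{2G}(\rho_t(i^\star)-\rho_t(i))$ for the contrast. This requires expanding $\E[N(i)(r(i)-\bar r)]$, where $\bar r$ contains $N(i)$ itself, and that expansion is where the $\frac1G$ terms arise.

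I would first carry out this expansion carefully for GRPO (as a check reproducing Proposition~\ref{prop:iter-rs}). I would then redo it with the two subgroups, treating $\sigma$ as a deterministic population constant so that it factors out. I would bound it only at the end, via $\sigma^2\le$ (range)$^2/4$ (Popoviciu).
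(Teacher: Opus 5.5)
Your proposal is correct and follows essentially the same route as the paper. Both arguments use the closed-form soft update and a per-step contraction of $q_{i,t}=\pi_t(i)/\pi_t(i^\star)$. That contraction comes from splitting the score gap (times the normalizer) into $(1-\frac1G)(x(i)-x(i^\star))$ and the contrast term $\frac{\delta_t+\alpha(G-1)\delta_{y,t}}{2G}(\rho_t(i)-\rho_t(i^\star))$, then applying the two $\gamma$-good inequalities, then bounding the normalizer by Popoviciu's inequality and iterating.

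Two small points of precision. The $\sqrt{1-\frac1G}$ factor comes specifically from the between-subgroup term in the normalizer $\sqrt{v_{r,t}^2+\delta_t^2/(4(G-1))}$, bounded using $|\delta_t|\le\lambda\Delta_{\max}$. Also, the paper uses the slightly tighter constant $C_0=\sum_{i\neq i^\star}\Delta_i\,\pi_0(i)/\pi_0(i^\star)$.
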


\begin{corollary}
\label{cor:lsd-faster}
When $\gamma$ defined in \Cref{eq:gamma-good} satisfies $\gamma>\min\left\{\lambda,\frac{\lambda^2}{4}\right\}\Delta_{\min}$, we have $T^{\mathrm{VSPO}}(\varepsilon)<T^{\mathrm{GRPO}}(\varepsilon)$, 
where $T^{\mathrm{GRPO}}(\varepsilon),T^{\mathrm{VSPO}}(\varepsilon)$ are defined in \Cref{eq:rs-iters,eq:lsd-iters} respectively.
\end{corollary}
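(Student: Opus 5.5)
The corollary is a direct comparison of the two closed-form bounds in Propositions \ref{prop:iter-rs} and \ref{prop:iter-lsd}. Both share the factor $\frac{\Delta_{\max}}{2\eta}\log\frac{C_0}{\varepsilon}$, which is positive; we assume $C_0>\varepsilon$, since otherwise both thresholds are nonpositive and there is nothing to prove. Cancelling this factor and writing $a:=1-\frac1G\in(0,1)$, the claim $T^{\mathrm{VSPO}}<T^{\mathrm{GRPO}}$ becomes
\begin{equation*}
\frac{\lambda}{\sqrt{a}\,(a\Delta_{\min}+\gamma)}<\frac{1}{a\Delta_{\min}},
\quad\text{i.e.}\quad
\gamma> \lambda\sqrt{a}\,\Delta_{\min}-a\Delta_{\min}=\Delta_{\min}\bigl(\lambda s-s^2\bigr),
\end{equation*}
where $s:=\sqrt{a}\in(0,1)$. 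So the whole task is to show that the hypothesis $\gamma>\min\{\lambda,\lambda^2/4\}\Delta_{\min}$ implies $\gamma>\Delta_{\min}\,h(s)$ with $h(s):=\lambda s-s^2$. It therefore suffices to bound $h(s)\le\min\{\lambda,\lambda^2/4\}$ uniformly over $s\in(0,1)$.

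I would prove this bound in two elementary steps. First, $h(s)=\lambda s-s^2\le\lambda s\le\lambda$, using $s\in(0,1)$ and $\lambda\ge0$. The second fact holds because $D_x+\alpha D_y\ge 0$ and $\Delta_{\max}>0$; in fact $\lambda\ge 1$, since $D_x+\alpha D_y\ge \Delta_i$ for every $i$, though only nonnegativity is needed. Second, completing the square gives $h(s)=\frac{\lambda^2}{4}-(s-\frac{\lambda}{2})^2\le\frac{\lambda^2}{4}$. Combining the two, $h(s)\le\min\{\lambda,\lambda^2/4\}$. Then $\gamma>\min\{\lambda,\lambda^2/4\}\Delta_{\min}\ge h(s)\Delta_{\min}$, and we conclude by reversing the chain of equivalences. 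That reversal is valid because every quantity we multiplied through by is positive: $a\Delta_{\min}>0$ by uniqueness of $i^\star$, and $a\Delta_{\min}+\gamma>0$ since $\gamma>0$ under the hypothesis.

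The only delicate point is bookkeeping, not difficulty. I need to check that the constant $C_0$ and the $\log$ factor are literally the same in both propositions, and that $\Delta_{\min}>0$ and $\gamma>0$ hold, so the divisions preserve the direction of the inequality. I would also note that the minimum is attained by $\lambda^2/4$ when $\lambda\le4$ and by $\lambda$ otherwise. A sharper sufficient condition $\gamma>\Delta_{\min}(\lambda\sqrt{1-1/G}-(1-1/G))$ drops out of the same computation, and I would remark on it.
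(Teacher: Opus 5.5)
Your proposal is correct and is essentially the paper's proof: the paper likewise reduces the claim to $\gamma>\bigl[\lambda\sqrt{1-\frac1G}-\bigl(1-\frac1G\bigr)\bigr]\Delta_{\min}$, bounds the bracket by $\lambda$ and by $\frac{\lambda^2}{4}$, and then compares the denominators of $T^{\mathrm{VSPO}}$ and $T^{\mathrm{GRPO}}$. Your positivity bookkeeping is more careful than the paper's. One correction: when $C_0<\varepsilon$ the strict inequality actually reverses rather than holding vacuously, so the corollary should be read as tacitly assuming $\varepsilon<C_0$, which the paper also does.
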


\looseness=-1 The $\gamma$-good condition defined in \Cref{eq:gamma-good} has a direct interpretation. 
The first inequality requires the positive distribution $\pi_t^+$ to place relatively more mass on the target arm than on any suboptimal arm, producing an additional margin $\gamma$ in the scalarized reward direction. 
The second inequality requires this distributional contrast to be aligned with the behavioral reward $y$. 
Thus, by \Cref{cor:lsd-faster} we know VSPO improves over GRPO (with reward shaping) when the positive and negative sampling distributions create a useful contrast between high-quality and low-quality regions of the response space.

On the other hand, we can show an upper bound for $\gamma$ defined in \Cref{eq:gamma-good} (See Appendix \ref{sec:gamma-bound}). Thus, in this setting, one cannot expect VSPO to achieve arbitrarily large distributional contrast. In practice, overly large distributional contrast may also introduce significant off-policy issues during training.




\section{Experiments}\label{sec:exp}
In this section, we first explain the experimental setup (\Cref{subsec:experiment_setup}), including the implementation details of \algo\ in \Cref{subsec:implement_detail}, the tasks with 4 different target behaviors in \Cref{subsubsec:task}, and baselines to compare with in \Cref{subsubsec:baselines}. Then, we present our main experimental results in \Cref{subsec:main_results}, where we show \algo\ outperforms all the baselines in 4 settings under 3 different properties. Finally, we analyze the advantages of \algo\ in \Cref{subsec:analysis}, from the perspective of on-policy, curriculum learning and exploration diversity. We provide example responses for each target behavior in \Cref{app:solution}. 
\subsection{Experimental Setup}
\label{subsec:experiment_setup}
\subsubsection{\algo\ Implementation Details}
\label{subsec:implement_detail}
\textbf{Steering Vector Construction.}
Since the base model may not reliably produce well-separated behavioral variants on its own, we construct the steering direction with the teacher-assisted pipeline summarized in Stage 1 of \Cref{fig:algo}. In our experiments, we use 200 positive-negative pairs per target behavior. After obtaining the layer-wise vector following \cite{chen2025persona}, we select the strongest layer and normalize the resulting vector, which makes the steering coefficient comparable across tasks and behaviors. The complete construction procedure, is provided in Appendix \ref{app:Vector_Construction}.

\textbf{Self-Distillation Policy Optimization via Vector Steering.}
For rollout generation, we use steering coefficients that are symmetric around zero, e.g., $\mathcal{B}=\{-0.3,-0.15,0,0.15,0.3\}$. Rather than sampling only from the target side of the steering direction, this design forms a balanced local neighborhood around the unsteered policy. Positive coefficients encourage stronger expression of the target behavior, while zero and negative coefficients provide local reference rollouts for measuring whether target-side variants truly improve the combined task and behavior reward. This structure is especially well matched to GRPO: since GRPO normalizes rewards within each rollout group, the symmetric coefficient set induces a naturally normalized local comparison centered on the current policy. As a result, the update favors steered traces with higher relative advantage, rather than blindly distilling all positively steered generations. 
\subsubsection{Tasks and Settings} 
\label{subsubsec:task}
We evaluate our method on two complementary benchmarks: MMLU-Pro~\cite{wang2024mmlu} and MATH~\cite{hendrycksmath2021}. MMLU-Pro covers a mixture of knowledge-intensive and reasoning tasks in a broad range, while MATH focuses on structured mathematical reasoning. This combination allows us to study controllability across both general-domain QA and step-by-step problem solving. Unless otherwise specified, all experiments are conducted with Qwen3-4B.


We consider three orthogonal but practically entangled control dimensions: preference-aligned reasoning behavior, robustness to misleading context, and concise reasoning. Each dimension captures a distinct aspect of output quality, yet none can be reliably reduced to another (e.g., conciseness alone cannot explain writing type and factual reliability). 

\begin{itemize}[nosep,leftmargin=*]
\item \looseness=-1\textbf{Preference-Aligned Reasoning Behavior: Expertise Level.}
We study controllability over the level of explanation, focusing on two levels. \emph{Elementary-level} solutions are designed for readers with limited knowledge: they use simple language, short sentences, and explicit step-by-step reasoning, carefully explaining each operation while avoiding advanced notations. 
In contrast, \emph{expert-level} solutions target sophisticated readers: they use professional language, formal notation, and compact derivations, omitting routine intermediate steps and avoiding explanations of basic concepts. 




\item \textbf{Preference-Aligned Reasoning Behavior: Confidence Expression.}
We control the stylistic presentation of confidence according to user preference. The \emph{confident response} is written in a direct and decisive tone, using assertive language and minimal hedging. The \emph{cautious response} is written in a calibrated tone, using uncertainty-aware phrasing such as ``the best-supported answer is'' or ``based on the information given'', while still committing to a final option.

    \item \textbf{Robustness to Misleading Context.} We evaluate whether models can maintain correct reasoning under explicitly misleading user-provided context. We use a diverse set of strong prompts to induce agreement pressure. Models are evaluated on their ability to produce the ground-truth answer despite this bias (independence) versus following the injected answer (agreement). This setting isolates the tension between correctness and context alignment, providing a controlled testbed for studying sycophancy and its mitigation.

    \item \textbf{Concise Reasoning:} Modern LLMs, particularly distilled small models, face the verbosity issue. Excessively long responses increase latency and cost, while overly short responses may omit critical reasoning steps. Our goal is to compress response length without sacrificing correctness. A central challenge is that the optimal length is instance-dependent: different questions require different levels of detail, and there is no global target length that is uniformly optimal. 
\end{itemize}

We evaluate Preference-Aligned Reasoning Behavior using an LLM-based evaluator (Claude Sonnet 4.6~\cite{anthropic2026sonnet46}), with prompts provided in~\Cref{lst:prompt-expertise-eval}. More details are given in Appendix~\ref{app:review_level_setting}. \Cref{app:evaluator} further verify that our preference-aligned behavior results are robust to the choice of LLM evaluator.
\vspace{-5pt}
\subsubsection{Baselines}
\label{subsubsec:baselines}

We compare \algo\ with representative baselines. Since some methods, such as SDFT and SDPO, were not designed for controllable model behavior, we adapt their data selection, prompting, or reward signals to include the target behavioral attribute. Additional details are provided in \Cref{app:baseline}.

\begin{itemize}[nosep,leftmargin=*]
\item \textbf{Teacher-Trace Distillation (SFT).}
Teacher-Trace Distillation (SFT) uses an external teacher model to generate reasoning traces aligned with target behaviors, such as concise reasoning or expert-level explanations. The student model is then trained on these traces using SFT. This follows the standard offline teacher-distillation paradigm~\cite{muennighoff2025s1} and is closely related to short-CoT distillation methods~\cite{hassid2025don,kang2025c3ot}.

\item \textbf{Teacher-Trace Distillation (SFT) + GRPO.}
We further evaluate a two-stage variant that first performs teacher-trace SFT and then applies GRPO. This follows the standard post-training recipe in which SFT provides an initial policy aligned with the desired response format or behavior, and RL further optimizes the model for task reward~\cite{guo2025deepseek,zhang2025making}.
\item \looseness=-1\textbf{Self-Distillation Fine-Tuning (SDFT).} 
SDFT~\cite{shenfeld2026self} serves as a self-distillation baseline. SDFT uses the model itself as the teacher by generating training data under demonstration-conditioned prompting. In the experiments, we condition the demonstrations and prompts on the target behaviors.
    
    \item \textbf{Self-Distillation Policy Optimization (SDPO).} 
    SDPO~\cite{hubotter2026reinforcement} improves a policy using high-quality rollouts sampled from the model itself. We adapt SDPO by selecting preferred rollouts according to both task correctness and the target behavioral attribute, and then optimize the model with a GRPO-style objective. This gives SDPO access to the same behavioral criterion used in our evaluation.


\item \textbf{Reward-Shaped GRPO.}
    Reward shaping directly incorporates the target behavior into the GRPO reward. For concise reasoning, we add a length penalty to encourage shorter responses~\cite{team2025kimi,aggarwall1,zhang2025making}. For preference-aligned reasoning behaviors, we use an LLM-based evaluator to reward responses that match the desired preference. For robustness to misleading context, we do not add an extra behavior-specific reward, since the target behavior is already captured by task accuracy.

\item \textbf{GRPO with Off-Policy Textual Guidance.}
    We include a text-guided GRPO baseline inspired by methods that use auxiliary textual supervision, such as instructions or feedback, during RL training~\cite{yao2026incorporating,yan2025learning,snell2022learning,yang2026towards}. During rollout generation, we append the target-behavior instruction to the original prompt. During optimization, we remove this instruction and compute policy log probabilities under the original prompt, updating the unprompted policy with the standard GRPO objective~\cite{shao2024deepseekmath}. 

\end{itemize}
\subsection{Main Results}
\label{subsec:main_results}
\begin{wrapfigure}{r}{0.53\textwidth}
    \centering
\includegraphics[width=0.5\textwidth]{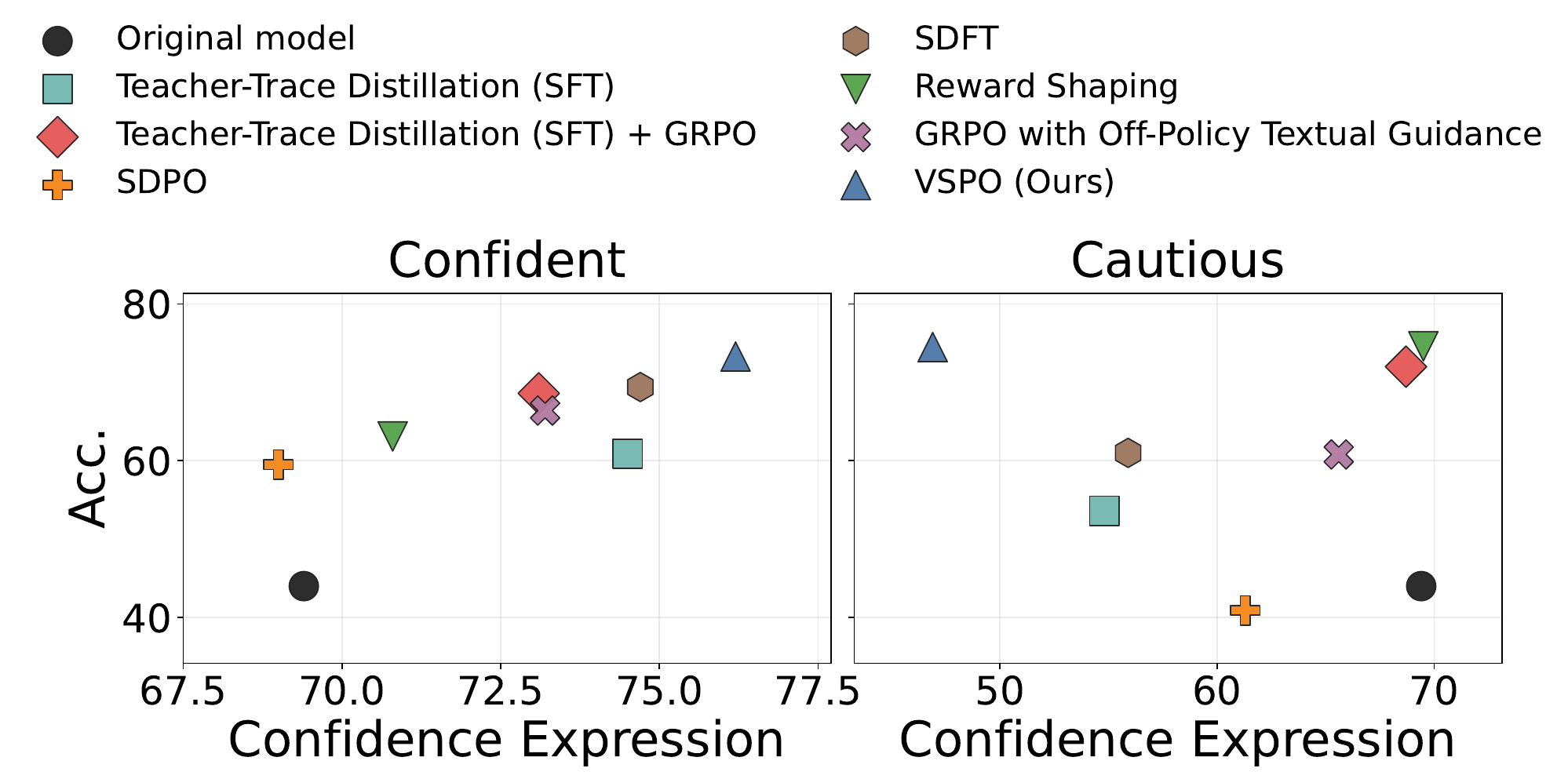}
\caption{Results on MMLU-Pro for confident and cautious target behaviors. Higher confidence-expression scores indicate a more confident style; thus, upper-right is preferred for confident control, while upper-left is preferred for cautious control.}\label{fig:confidence}
\end{wrapfigure}
\textbf{Results on Preference-Aligned Reasoning Behavior.}
We evaluate two preference-aligned reasoning characteristics: expertise level and confidence expression. As shown in \Cref{tab:review}, existing baselines exhibit a clear trade-off between behavioral control and task accuracy. Some methods can shift the response style but degrade accuracy, such as Teacher-Trace Distillation and SDFT, while others preserve accuracy but fail to induce the desired behavior reliably, such as Teacher-Trace Distillation (SFT) + GRPO and Reward-Shaped GRPO. In contrast, \algo\ achieves the strongest overall performance, steering responses toward the target expertise direction while preserving, and in some cases improving, task accuracy. We observe a similar trend for confidence expression in \Cref{fig:confidence}.


\begin{table*}[t]
\centering
\scriptsize
\resizebox{0.9\textwidth}{!}{%
\begin{tabular}{lcccc|cccc}
\hline
\multirow{3}{*}{Method} 
& \multicolumn{4}{c|}{MMLU-Pro} 
& \multicolumn{4}{c}{MATH} \\
\cline{2-9}
& \multicolumn{2}{c}{Expert} 
& \multicolumn{2}{c|}{Elementary}
& \multicolumn{2}{c}{Expert} 
& \multicolumn{2}{c}{Elementary} \\
\cline{2-9}
& Level $\uparrow$ & Acc & Level $\downarrow$ & Acc 
& Level $\uparrow$ & Acc & Level $\downarrow$ & Acc \\
\hline
Original model & 26.3 & 44.0 & 26.3 & 44.0 & 26.5 & 75.1 & 26.5 & 75.1 \\
Teacher-Trace Distillation (SFT) & 45.0 & 66.3 & 21.4 & 64.9& 43.1 & 74.0 & 22.4 & 73.9 \\
Teacher-Trace Distillation (SFT) + GRPO & 29.1 & 71.6 & 28.0 & 67.5 & 38.1 & 75.2 & 24.8 & 77.0 \\
SDFT & 33.4 & 61.3 & 20.7 & 59.3 & 40.6 & 76.3 & 23.1 & 76.9 \\
SDPO & 31.2 & 45.2 & 22.5 & 54.8 & 29.2 & 72.9 & 25.4 & 73.8 \\
Reward-Shaped GRPO & 30.6 & 63.5 & 25.8 & 60.1 & 27.0 & 75.2 & 24.3 & 73.7 \\
GRPO with Off-Policy Textual Guidance. & 30.5 & 59.2 & 26.1 & 52.9 & 39.8 & 75.3 & 20.1 & 76.4 \\
\algo\ (Ours) & \textbf{64.3} & \textbf{75.4} & \textbf{12.4} & \textbf{73.7} & \textbf{46.1} & \textbf{83.6} & \textbf{19.9} & \textbf{84.5} \\
\hline
\end{tabular}%
}
\caption{Preference-Aligned Reasoning Behavior: Expertise Level. 
We evaluate whether each method can adapt the explanation style to different explanation expertise preferences, on MMLU-Pro and MATH. ``Level'' measures explanation expertise, where higher values indicate more expert-like reasoning and lower values indicate more elementary reasoning. 
``Acc'' denotes answer accuracy. 
VSPO achieves the strongest bidirectional expertise control, while maintaining high task accuracy on both datasets.}
\label{tab:review}
\vspace{-5pt}
\end{table*}

\textbf{Results on Robustness to Misleading Context.} 
\Cref{fig:misleading} evaluates robustness to misleading context, where the desired behavior, reasoning independently rather than sycophancy, is directly measured by task accuracy. This setting should favor vanilla GRPO, since no separate behavior reward is needed: rollouts that resist the misleading context receive higher task reward. In practice, however, the original model is strongly biased by the misleading statement, achieving only 23.6\% accuracy. As a result, many rollout groups contain no correct responses, leaving GRPO with sparse signal for escaping sycophancy.
\algo\ addresses this bottleneck by using vector steering to elicit more independent reasoning traces during rollout generation. These steered rollouts increase the probability that each group contains a correct response, yielding more informative relative advantages. As shown in \Cref{fig:misleading}, \algo\ achieves the highest final accuracy 64.2\% and lower agreement rate. Its training curve starts from a higher critic score and improves faster, suggesting that latent steering exposes useful high-reward trajectories early and enables more stable optimization under misleading-context perturbations.

\begin{figure}
\centering
\includegraphics[width=\textwidth]{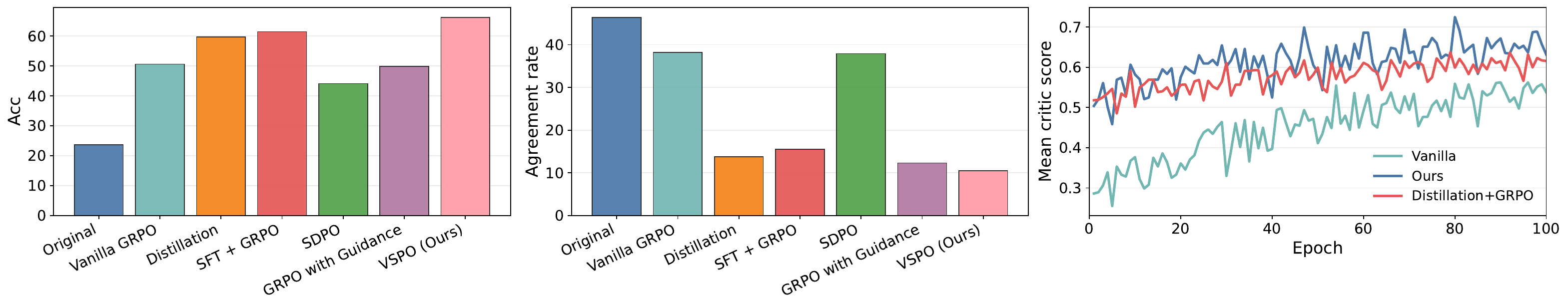}
\caption{
Results on robustness to misleading context. Left: task accuracy. Middle: agreement rate with the misleading context. Right: training dynamics measured by mean critic score. \algo\ uses vector-steered rollouts to expose more independent reasoning traces, leading to higher final accuracy, lower agreement with the misleading context, and faster improvement in critic score.
} \label{fig:misleading}
\end{figure}





\textbf{Results on Concise Reasoning.}
We evaluate concise reasoning with the goal of reducing verbosity without sacrificing accuracy. Following prior work, we use DeepSeek-R1-Distill-Qwen-7B and DeepSeek-R1-Distill-Llama-8B, and compare against length-penalty reward shaping as the main baseline. \Cref{fig:verbosity} reports the accuracy--length trade-off. Using the verbosity vectors from \cite{azizi2025activation}, \algo\ consistently improves over both vanilla GRPO and reward shaping. Notably, on MMLU-Pro, these vectors remain effective despite being constructed from mathematical questions: \algo\ achieves 60.8\% accuracy with 2345.26 tokens, compared to 59.1\% accuracy and 2616.70 tokens for reward shaping. With a Qwen3-4B verbosity vector constructed directly on MMLU-Pro, \algo\ again obtains the best trade-off. Unlike length penalties, which only adjust rewards after generation, \algo\ changes the rollout distribution by steering responses toward different verbosity levels. This structured exploration increases the chance of sampling concise yet correct traces, allowing GRPO to assign more informative advantages and distill shorter high-reward responses into the unsteered policy.

\begin{figure}
\centering
\includegraphics[width=\textwidth]{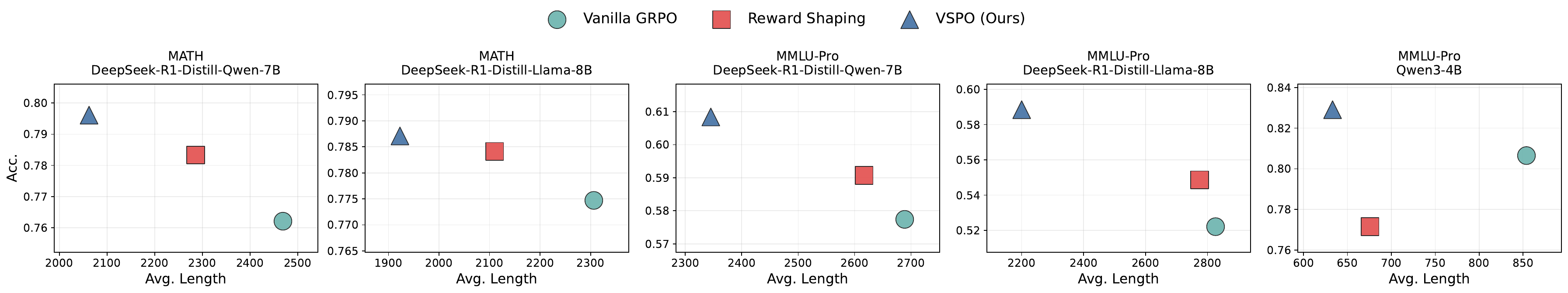}
\caption{
Accuracy-length trade-off for concise reasoning target behavior, on MATH and MMLU-Pro for different base models. VSPO consistently moves the model toward the upper-left region, achieving higher accuracy with shorter responses.
} \label{fig:verbosity}
\vspace{-5pt}
\end{figure}

\subsection{Understanding VSPO: Policy-Proximal Exploration, Curriculum Effects and Efficiency}
\label{subsec:analysis}
\begin{figure}
   \vspace{-8pt}
    \centering
    \includegraphics[width=\linewidth]{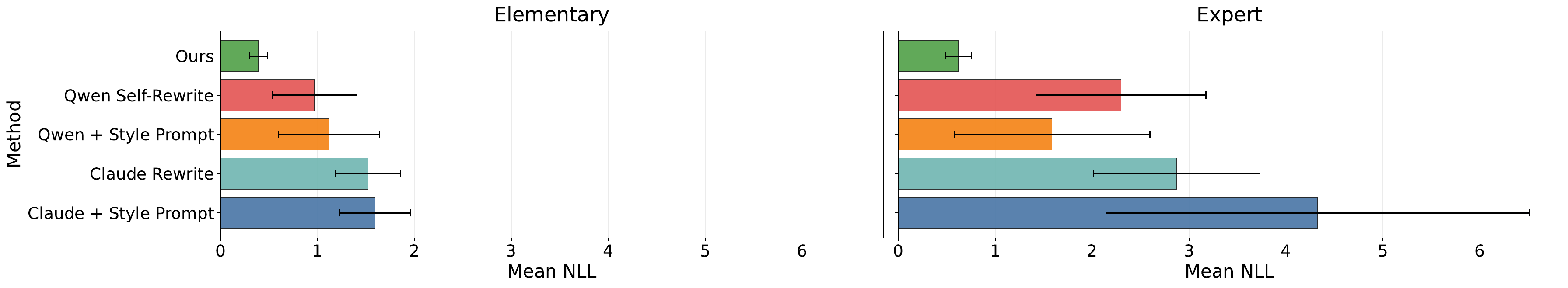}
    \vspace{-8pt}
    \caption{On-policy NLL by trace source. We report the mean token-level negative log-likelihood of generated traces under the original model. VSPO achieves substantially lower NLL than all baselines, indicating that its steered traces remain closest to the model's native distribution.
    }
        \vspace{-8pt}
    \label{fig:nll}
\end{figure}
\textbf{Vector-steered rollouts remain policy-proximal.}
We measure distributional proximity to the unsteered policy using token-level negative log-likelihood, where lower values indicate traces closer to the model's native distribution. As shown in \Cref{fig:nll}, vector-steered Qwen has substantially lower NLL than baselines in both settings, indicating much smaller distribution mismatch. This gap is especially obvious in the expert setting: because expert-style reasoning is farther from the original model's behavior, textual prompting and rewriting produce traces with very high NLL, while vector steering remains close to the unsteered policy. Notably, although VSPO samples rollouts from a range of steering coefficients during training, this NLL analysis uses only the largest coefficient in that range. Averaging over the smaller coefficients used in training would further reduce the mean NLL. These results suggest that vector steering exposes policy-proximal behavioral variants rather than imposing off-policy teacher traces, supporting stable and efficient policy optimization.

\begin{figure}
\centering
\includegraphics[width=0.9\textwidth]{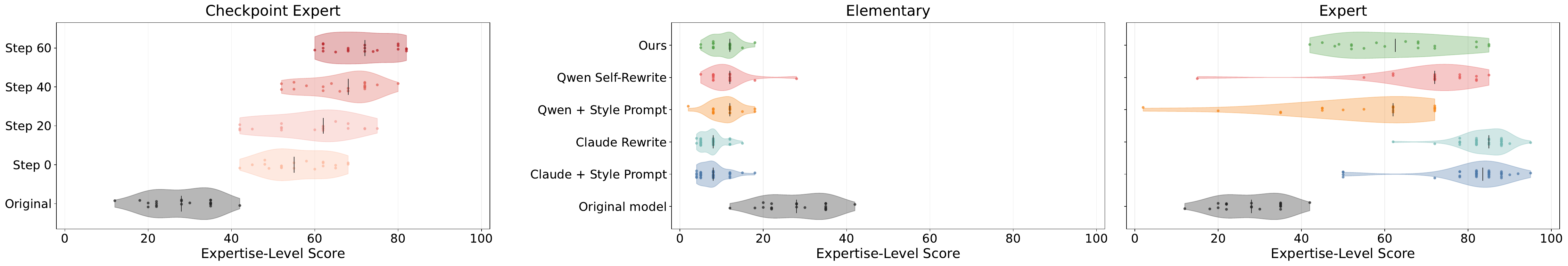}
\caption{Training dynamics and behavior-space coverage induced by vector steering.
Left: expertise-level score distributions obtained by applying the same fixed expert-style steering vector to model checkpoints at different VSPO training stages. The shift toward higher scores shows that the vector remains effective with training. Middle and right: expertise-level score distributions for different trace sources in the elementary and expert settings. VSPO produces structured shifts toward the target direction. By contrast, other baselines often provide unreliable control, either collapsing to narrow regions or producing noisy/misaligned samples.
} \label{fig:review_combine}
\vspace{-5pt}
\end{figure}


\textbf{Vector steering remains effective throughout training and induces an automatic curriculum.}
\Cref{fig:review_combine}(left) applies the same fixed expert-style steering vector to checkpoints from different stages of VSPO training. Despite changes in model weights, the vector continues to move generations in the target direction, with expertise-level scores increasing steadily over training. This indicates that the learned direction remains aligned with the evolving policy.
This persistence induces an implicit curriculum. Early in training, steering exposes moderate, policy-close behavioral shifts; later, the same vector elicits stronger and more target-aligned traces. Consequently, VSPO obtains progressively more informative self-distillation signals without recomputing the vector or relying on additional external guidance, which helps stabilize optimization and improve sample efficiency.


\textbf{Vector steering enables structured behavior exploration for more informative policy updates.}
\Cref{fig:review_combine}(middle and right) compares the expertise-level score distributions induced by different trace sources. The original model occupies a narrow style region, suggesting that ordinary sampling rarely exposes target-style traces and therefore gives reward shaping limited signal. Prompting and rewriting can shift the distribution, but the control is often unstable: baselines either collapse to narrow ranges or produce noisy, misaligned samples; in the expert setting, Qwen prompting and self-rewriting can even move responses toward more elementary styles than the original model. In contrast, vector-steered rollouts induce structured variation along the intended expertise direction: lower scores for elementary control and higher scores for expert control. For expert control, VSPO covers a broader high-score region with smoother variation because it generates rollouts with multiple steering coefficients, producing different behavior intensities along the same latent direction. This structured diversity is important for policy optimization: rather than relying on rare samples or externally rewritten traces, VSPO exposes a family of student-generated behavioral variants with different target-style intensities. These variants provide richer and more learnable supervision for distilling the desired behavior into the unsteered policy.

We provide additional \textbf{efficiency} discussion in \Cref{app:Efficiency}.

\vspace{-8pt}
\section{Discussion}
\label{sec:discussion}
\vspace{-8pt}
We introduced VSPO as a method for learning controllable model behaviors while preserving model accuracy. By using steering vectors to generate on-policy rollouts with varying intensities, VSPO alleviates sparse behavioral rewards and provides more informative candidates for GRPO-style optimization. Across explanation expertise, confidence expression, robustness to misleading context, and verbosity control, VSPO improves target control while maintaining or improving accuracy compared with reward shaping, teacher-guided distillation, and self-distillation baselines. A key limitation is that VSPO depends on the quality and alignment of the learned steering vector: if the vector is weak or fails to expose useful high-reward variants, the theoretical and empirical benefits may diminish. Future works include more robust and fully self-guided vector construction and extensions to broader multi-objective alignment settings.

\bibliography{ref}
\bibliographystyle{plain}







\appendix
\crefalias{section}{appendix}

\clearpage
\section*{Appendix}
The appendix is organized as follows:
\begin{itemize}[leftmargin=*, topsep=2pt, itemsep=1pt, parsep=0pt, partopsep=0pt]
    \item \Cref{app:llm} describes our usage of LLMs.
    \item \Cref{app:code} provides the pseudocode for VSPO.
    \item \Cref{app:related} gives additional related work.
    \item \Cref{appendix:method} presents further methodological details.
    \item \Cref{sec:proofs} contains the full theoretical analysis and proofs.
    \item \Cref{app:baseline} describes the baseline implementations.
    \item \Cref{app:exp_details} provides experimental details, including compute resources and evaluation protocols.
    \item \Cref{app:Efficiency} discusses the efficiency of VSPO.
    \item \Cref{app:evaluator} studies robustness to the choice of LLM evaluator.
    \item \Cref{app:prompt} and \Cref{app:solution} provide the prompts and example responses used in our experiments.
\end{itemize}

\section{Usage of LLMs}
\label{app:llm}
In our experiments, we use LLMs as evaluators for preference-aligned reasoning behavior and as trace generators or rewriters in several baselines. Specifically, we use Claude Sonnet 4.6 and Qwen3-4B for style evaluation, and use Claude Sonnet 4.6 and Qwen3-4B for teacher/rewrite baselines. Task accuracy is computed using task-specific answer extraction and matching rather than LLM judgments. LLMs were also used for writing assistance.

\section{Pseudo Code}\label{app:code}
\begin{algorithm}[H]
\caption{\algo: Vector-Steered Policy Optimization}
\label{alg:popcorn}
\begin{algorithmic}[1]
\REQUIRE Initial policy $\pi_\theta$, reference policy $\pi_{\mathrm{ref}}$, stronger teacher model $\mathcal{T}$, steering intensity set $\mathcal{B}=\{\beta_1,\ldots,\beta_G\}$, reward weight $\alpha$

\STATE \textbf{Stage 1: Vector construction}
\STATE Sample a set of prompts $\{q_i\}_{i=1}^{N}$ for vector construction
\FOR{each prompt $q_i$}
    \STATE Generate a base response $o_i \sim \pi_\theta(\cdot \mid q_i)$
    \STATE Use teacher $\mathcal{T}$ to rewrite $o_i$ into two opposite directions of the target property:
    \STATE \hspace{1em} positive rewrite $o_i^{+} \leftarrow \mathcal{T}(q_i, o_i, \text{positive direction})$
    \STATE \hspace{1em} negative rewrite $o_i^{-} \leftarrow \mathcal{T}(q_i, o_i, \text{negative direction})$
\ENDFOR
\STATE Form contrastive sets $\mathcal{D}^{+}=\{(q_i,o_i^{+})\}_{i=1}^{N}$ and $\mathcal{D}^{-}=\{(q_i,o_i^{-})\}_{i=1}^{N}$
\STATE Extract activations and compute steering vector $v^\ell$ using \Cref{eq:vector_construction}
\STATE Optionally normalize $v^\ell$ and select the best layer $\ell$ on validation data

\STATE \textbf{Stage 2: GRPO with vector steering}
\FOR{each training iteration}
    \STATE Sample a batch of prompts $\{q_i\}_{i=1}^{B}$
    \FOR{each prompt $q_i$}
        \FOR{each $\beta_g \in \mathcal{B}$}
            \STATE Generate steered rollout $o_{i,k}\sim \tilde{\pi}_{\theta_{\mathrm{old}}}^{(\beta_g)}(\cdot \mid q_i)$
            \STATE \hspace{1em} by injecting $\tilde{h}_t^\ell \leftarrow h_t^\ell + \beta_g v^\ell$
            \STATE Compute reward
            \[
            r_{i,g}=r_0(q_i,o_{i,g})+\alpha\beta_g
            \]
        \ENDFOR
        \STATE Compute group-relative advantages $\{A_{i,g}\}_{g=1}^{G}$ from $\{r_{i,g}\}_{g=1}^{G}$
    \ENDFOR
    \STATE Recompute token probabilities using the unsteered current policy $\pi_\theta$
    \STATE Update $\theta$ by maximizing \Cref{eq:grpo_vs}
\ENDFOR
\end{algorithmic}
\end{algorithm}

\section{Related Work}
\label{app:related}
\textbf{Teacher-Trace Distillation and SFT.}
A common alternative to RL is to use a stronger teacher model to generate reasoning traces and then train the target model with supervised fine-tuning. This paradigm has been widely used for improving reasoning performance and controlling response format, including recent work on distilling short or compressed chain-of-thought traces to reduce inference length \citep{hassid2025don,kang2025c3ot,muennighoff2025s1}. In our setting, such methods can be adapted by asking the teacher to produce traces aligned with a desired behavior, such as expert-level explanations or concise reasoning. However, teacher-trace SFT is fundamentally off-policy: the student learns from trajectories generated by an external model rather than from its own sampled rollouts. As a result, the distilled behavior may not match the student policy's reachable trajectory distribution, and further RL is often needed to recover task performance or adapt the learned behavior to the student model \citep{guo2025deepseek,zhang2025making}.

\textbf{RL Post-Training for Reasoning.}
Reinforcement learning has become a central tool for post-training LLMs, especially for reasoning tasks where rewards can be derived from verifiable correctness. Early RLHF methods optimize policies against learned preference rewards, while recent RLVR methods use task-specific verifiers to provide more direct supervision for mathematical and code reasoning. GRPO is particularly relevant to our setting because it removes the need for a learned critic and computes relative advantages within a group of sampled responses, making it a practical optimization method for reasoning models \citep{shao2024deepseekmath,guo2025deepseek}. However, standard GRPO primarily improves task performance through scalar reward signals and ordinary rollout sampling. When the desired behavior involves both correctness and a secondary characteristic, such as conciseness, confidence expression, or resistance to sycophantic bias, vanilla rollout sampling may not expose sufficiently diverse or high-quality behavioral variants for the group-relative objective to exploit.

\textbf{Self-Distillation for LLM Post-Training.}
Recent self-distillation methods aim to reduce the off-policy mismatch of teacher-trace training by deriving learning signals from the model itself. SDFT uses the model in a demonstration-conditioned mode as a self-teacher, producing distillation targets from the model's own conditional distribution rather than from fixed external traces \citep{shenfeld2026self}. SDPO further extends this idea to policy optimization by using high-quality self-generated rollouts as implicit training targets and optimizing them with a GRPO-style objective \citep{hubotter2026reinforcement}. These methods are closely related to our motivation, since they treat the current model as a source of supervision rather than relying entirely on an external teacher. However, they are primarily designed for improving task performance, not for learning controllable secondary characteristics. In contrast, our method uses vector steering to systematically reveal behaviorally diverse variants of the current policy along a target latent direction, and then distills the useful variants back into the unsteered model through RL.

\textbf{Reward Shaping and Multi-Objective RL.}
Another natural approach is to incorporate the target characteristic directly into the reward. For example, concise reasoning can be encouraged with a length penalty, while preference-aligned reasoning behavior can be rewarded with an LLM-based evaluator \citep{team2025kimi,aggarwall1,zhang2025making}. More broadly, multi-objective alignment methods study how to optimize LLMs under multiple preference dimensions or dynamically trade off competing objectives. These approaches are simple and often compatible with existing GRPO pipelines, but they only modify the scalar feedback after rollouts have already been generated. Therefore, if the rollout group lacks high-quality or behaviorally diverse candidates, reward shaping alone may provide a weak learning signal. Our method instead shapes the rollout distribution itself: vector steering actively elicits a structured set of behavioral variants before reward assignment, giving group-relative optimization more informative candidates to compare.

\textbf{Textual Guidance and AI Feedback During RL.}
Several recent methods augment RL training with additional textual guidance, such as instructions, critiques, feedback, or AI-generated supervision, to elicit desired behaviors during rollout generation \citep{snell2022learning,yao2026incorporating,yan2025learning,yang2026towards}. Related RLAIF-style approaches use powerful models as judges or feedback providers, reducing the need for human annotation but often requiring repeated model calls during training. These methods can improve controllability, but their reliance on external textual guidance or online feedback can be expensive and may introduce a mismatch between guided rollout generation and unguided inference. Our method shares the goal of eliciting more useful rollouts during RL, but replaces repeated textual guidance with a reusable latent steering vector. This makes the intervention cheaper to apply and less dependent on surface-form prompting, while still allowing the final policy to be optimized and evaluated without additional guidance at inference time.

\textbf{Persona Vectors and Vector Steering.} Persona vectors and vector steering are most closely related to the literature on \emph{representation engineering}, which studies how semantically meaningful directions in a model’s internal activation space can be identified and manipulated to control behavior at inference time \citep{zou2023representation}. Early activation-steering methods, such as Activation Addition (ActAdd) and Contrastive Activation Addition (CAA), showed that steering vectors extracted from contrastive prompts or examples can reliably shift model outputs toward target properties without updating model weights, providing a lightweight alternative to fine-tuning for controllability and alignment \citep{turner2023steering,panickssery2023steering}. Building on this line of work, persona vectors specialize such latent directions to behavioral traits or character tendencies, such as sycophancy, hallucination, or harmfulness, and have been used both to monitor personality shifts and to intervene on them during or after training \citep{chen2025persona}. This perspective is appealing because it treats behavior as at least partially organized in representation space, making control more modular and interpretable than purely prompt-level or objective-level interventions \citep{zou2023representation,chen2025persona}. At the same time, recent work has emphasized that steering effects can be highly sensitive to vector construction, layer choice, and steering magnitude, and may be entangled across multiple attributes, which limits the robustness of simple inference-time activation editing \citep{braun2025understanding,im2025unified}. In contrast to prior work that primarily applies vector steering only at inference time, our method integrates vector steering into GRPO training, using steering-induced variation during policy optimization to shape the model’s latent behavioral distribution and thereby produce more persistent and controllable adaptation.
\section{Method}
\label{appendix:method}
We propose \algo, an on-policy self-distillation algorithm that uses latent steering for credit assignment. Our starting point is the observation that the current policy already contains diverse behavioral variants, but standard RL only observes them through ordinary sampling and scalar rewards. This creates a bottleneck: the policy may contain useful behaviors that are difficult to elicit reliably, and scalar reward alone does not determine how the policy should move toward them. Our key idea is to use vector steering to expose such behaviors in a controlled way, and then use RL to distill them back into the unsteered policy.

The key object in our method is the \emph{steered policy}
\begin{equation}
\tilde{\pi}_{\theta}^{(\beta)}(\cdot \mid x),
\end{equation}
which denotes the current policy under a latent perturbation of strength $\beta$ along a learned steering direction. For a prompt $x$, the steered policy generates a rollout that can be viewed as a behaviorally modified variant of what the current model would produce on its own. By evaluating multiple such variants within each training group, we obtain a structured set of on-policy candidates that differ along a targeted behavior axis. The role of RL is then to reinforce the variants that best improve the task objective and thereby distill this behavior back into the base policy.

Algorithm~\ref{alg:popcorn} summarizes the full pipeline. First, we construct a steering vector that captures a target behavior in latent space. Second, for each prompt, we sample a group of rollouts under different steering intensities. Third, we score these rollouts with a reward that combines the primary task signal with a steering-dependent preference term, and update the unsteered policy using a GRPO-style objective. In this way, \algo\ turns vector steering into a mechanism for self-distillation: steering reveals candidate behaviors from the current model, and policy optimization internalizes the useful ones.

\subsection{Vector Construction}
We first construct a steering vector $v^\ell$ at a transformer layer $\ell$ that captures a target behavioral direction. At a high level, we obtain this direction from contrastive examples that differ mainly in the property of interest. In our setting, these contrastive examples are produced with teacher assistance: for prompts sampled from the training distribution, we first generate responses from the current small model, and then use a stronger model(Claude Sonnet 4.6) to rewrite them toward two opposite directions of the target property. We then compute a mean activation difference between the resulting positive and negative examples to obtain
\begin{equation}
v^\ell
=
\frac{1}{N}\sum_{i=1}^{N}\phi^\ell(s_i^{+})
-
\frac{1}{N}\sum_{i=1}^{N}\phi^\ell(s_i^{-}),
\label{eq:vector_construction}
\end{equation}
where $\phi^\ell(\cdot)$ denotes an activation summary at layer $\ell$, $N$ is the number of pairs, $s_i^{+}$ is the positive sample and $s_i^{-}$ is the corresponding negative sample. We optionally normalize the vector and select the best layer on held-out data. The details of vector construction are deferred to \Cref{subsec:implement_detail}.

The purpose of this stage is not to produce supervised targets for policy learning. Rather, it identifies a latent direction along which the current policy can be systematically perturbed during training. This keeps the teacher's role limited: the teacher helps define the behavior axis, while the actual learning still happens through the model's own rollouts.

\subsection{Self-Distillation Policy Optimization via Vector Steering}

We now describe how vector steering induces self-distillation during RL. For each prompt $x$, instead of sampling an unstructured group of rollouts, we generate one rollout for each steering intensity in a fixed set
\begin{equation}
\mathcal{B} = \{\beta_1,\dots,\beta_G\}.
\end{equation}
For example, $\mathcal{B}$ may be chosen as $\{-a,-b,0,b,a\}$. Given a steering intensity $\beta_g$, we inject the steering vector into the residual stream during decoding,
\begin{equation}
\tilde{h}_t^\ell = h_t^\ell + \beta_g v^\ell,
\label{eq:steer_injection}
\end{equation}
and sample
\begin{equation}
o_k \sim \tilde{\pi}_{\theta_{\mathrm{old}}}^{(\beta_g)}(\cdot \mid q).
\end{equation}

These steered rollouts remain on-policy: they are produced by the current model itself, not by an external teacher, and differ only through a controlled latent perturbation. We therefore interpret the steered policy as a \emph{self-teacher}: it reveals how the current model behaves when pushed along a targeted latent direction. This exposes a structured set of candidate behaviors that would otherwise be hard to obtain consistently from ordinary sampling.

Each rollout is assigned reward
\begin{equation}
r_g = r_0(q,o_g) + \alpha \beta_g,
\label{eq:reward_main}
\end{equation}
where $r_0$ is the primary task reward and $\alpha$ controls the steering preference. Let $\bar{r}$ and $\sigma_r$ be the mean and standard deviation of the group of rewards $\{r_k\}_{g=1}^G$, where $G$ is the number of samples in the same GRPO group. The advantge of each rollout is defined as
\begin{equation}
A_g = \frac{r_g-\bar{r}}{\sigma_r}.
\label{eq:advantage}
\end{equation}

A key design choice is that we use steering only for sampling, not for the policy update itself. For a rollout $y_g=(y_{g,1},\dots,y_{g,T_g})$, where $T_g$ is the token number of the rollout $y_g$, we compute the standard token-level ratio under the unsteered policy,
\begin{equation}
\rho_{g,t}(\theta)
=
\frac{\pi_\theta(o_{g,t}\mid q,o_{g,<t})}
{\pi_{\theta_{\mathrm{old}}}(o_{g,t}\mid q,o_{g,<t})},
\label{eq:ratio_appendix}
\end{equation}
with no steering in either numerator or denominator. The resulting objective is
\begin{equation}
\small
\mathcal{L}_{\algo}(\theta)
=
\mathbb{E}_{x}
\left[
\frac{1}{G}\sum_{g=1}^{G}
\frac{1}{T_g}\sum_{t=1}^{T_g}
\min\!\left(
\rho_{g,t}(\theta)A_g,\;
\mathrm{clip}\!\big(\rho_{g,t}(\theta),1-\epsilon,1+\epsilon\big)A_g
\right)
\right]
-
\lambda \mathcal{D}_{\mathrm{KL}}(\pi_\theta \,\|\, \pi_{\mathrm{ref}}).
\label{eq:grpo_vs_appendix}
\end{equation}

where $\mathrm{clip}$ is the clip function, $\epsilon$ is the clip ratio, $\mathcal{D}_{\mathrm{KL}}$ is the KL divergence, and $\pi_{\mathrm{ref}}$ is the reference policy. This objective has a simple interpretation. Vector steering changes the rollout distribution so that the current policy reveals diverse but targeted behavioral variants. GRPO then selects which of these variants should be preferred and updates the unsteered policy accordingly. Thus, \algo\ performs self-distillation in latent space: the model learns from steered variants of its own behavior, rather than from fully off-policy teacher traces.

\paragraph{Discussion.}
Our method is closely aligned with the spirit of self-distillation policy optimization. The main difference is the source of the distillation signal. In SDPO, the self-teacher arises by conditioning the current model on rich feedback and re-evaluating the same rollout, yielding a denser signal for credit assignment. In our setting, the self-teacher arises by steering the current model along a learned latent direction, yielding a controlled family of on-policy behavioral variants. In both cases, the current policy provides the source of supervision, and policy optimization distills that information back into the model. The advantage of our formulation is that it does not require tokenized environment feedback; instead, it uses latent steering to generate diverse and controllable rollouts directly during sampling, while remaining compatible with standard GRPO-style training.

\newenvironment{thmbis}[1]
  {\renewcommand{\thetheorem}{\ref{#1}}%
   \addtocounter{theorem}{-1}%
   \begin{theorem}}
  {\end{theorem}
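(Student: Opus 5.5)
The proof is a direct comparison of the closed-form bounds in Propositions \ref{prop:iter-rs} and \ref{prop:iter-lsd}. Both share the factor $\frac{\Delta_{\max}}{2\eta}\log\frac{C_0}{\varepsilon}$, which is positive when $\varepsilon<C_0$ (the nontrivial regime). So $T^{\mathrm{VSPO}}(\varepsilon)<T^{\mathrm{GRPO}}(\varepsilon)$ is equivalent to
\begin{equation*}
\frac{\lambda}{\sqrt{1-\frac1G}\left(\left(1-\frac1G\right)\Delta_{\min}+\gamma\right)}
<
\frac{1}{\left(1-\frac1G\right)\Delta_{\min}} .
\end{equation*}

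Set $s:=\sqrt{1-\frac1G}$. Since $G\ge 2$, we have $s\in(0,1)$. Also $\Delta_{\min}>0$, because $i^\star$ is the unique maximizer of $r$ by the choice of $\alpha$ (Appendix \ref{sec:optimal-arm}). All denominators are then positive, and cross-multiplying reduces the inequality to
\begin{equation*}
\lambda s^2\Delta_{\min} < s\left(s^2\Delta_{\min}+\gamma\right)
\quad\Longleftrightarrow\quad
\gamma > \Delta_{\min}\left(\lambda s - s^2\right).
\end{equation*}
The sufficient condition should therefore be uniform over $G$, obtained by bounding $\lambda s-s^2$ over $s\in(0,1)$.

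There are two elementary bounds. First, $\lambda s - s^2 < \lambda s < \lambda$, using $\lambda>0$ and $0<s<1$. Note $\lambda\ge 1$ in fact, since $D_x+\alpha D_y\ge \Delta_{\max}$, but only positivity is needed. Second, completing the square gives $\lambda s - s^2 = \frac{\lambda^2}{4}-\left(s-\frac{\lambda}{2}\right)^2\le \frac{\lambda^2}{4}$. Combining, $\lambda s - s^2\le \min\{\lambda,\frac{\lambda^2}{4}\}$. Hence the hypothesis $\gamma>\min\{\lambda,\frac{\lambda^2}{4}\}\Delta_{\min}$ implies $\gamma>\Delta_{\min}(\lambda s-s^2)$, and reversing the equivalences gives the claim.

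There is no real obstacle. The only points needing care are the sign bookkeeping that justifies cross-multiplying (positivity of $\Delta_{\min}$, $\lambda$, $s$, $\gamma$ and of the logarithm) and the remark that $G\ge 2$, so that $s>0$.
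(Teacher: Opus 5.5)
The extracted statement is only the body of the \texttt{thmbis} environment definition, so reading it as Corollary~\ref{cor:lsd-faster} is the sensible interpretation, and your proof of that corollary is correct. It follows essentially the paper's argument: reduce to $\gamma>(\lambda s-s^2)\Delta_{\min}$ with $s=\sqrt{1-\frac1G}$, and bound $\lambda s-s^2$ by both $\lambda$ and $\frac{\lambda^2}{4}$. Your explicit requirement $\varepsilon<C_0$, so that $\log\frac{C_0}{\varepsilon}>0$, makes rigorous a step the paper leaves implicit.
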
}
\newenvironment{propbis}[1]
  {\renewcommand{\theproposition}{\ref{#1}}%
   \addtocounter{proposition}{-1}%
   \begin{proposition}}
  {\end{proposition}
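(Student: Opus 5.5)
The plan is to compare the two closed-form thresholds in \Cref{eq:rs-iters,eq:lsd-iters} directly, with no further dynamics. Both share the factor $\frac{\Delta_{\max}}{2\eta}\log\frac{C_0}{\varepsilon}$. This factor is positive in the nontrivial regime $\varepsilon<C_0$; outside that regime both thresholds are nonpositive and the comparison is vacuous. We cancel it and write $s:=\sqrt{1-\frac1G}\in(0,1)$, so that $1-\frac1G=s^2$. The claim $T^{\mathrm{VSPO}}(\varepsilon)<T^{\mathrm{GRPO}}(\varepsilon)$ then becomes
\begin{equation*}
\frac{\lambda}{s\,(s^2\Delta_{\min}+\gamma)}<\frac{1}{s^2\Delta_{\min}}.
\end{equation*}
Before cross-multiplying we check that all quantities are positive. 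We have $\Delta_{\min}>0$ because $i^\star$ is the unique maximizer of $r$. We have $\lambda>0$ because $D_x+\alpha D_y>0$ whenever some suboptimal arm exists. The hypothesis then forces $\gamma>0$, so $s^2\Delta_{\min}+\gamma>0$. The inequality is therefore equivalent to $\lambda s\Delta_{\min}<s^2\Delta_{\min}+\gamma$, that is,
\begin{equation*}
\gamma>\bigl(\lambda s-s^2\bigr)\Delta_{\min}.
\end{equation*}

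It then suffices to bound $f(s):=\lambda s-s^2$ uniformly over $s\in(0,1)$ by $\min\{\lambda,\lambda^2/4\}$. Two elementary bounds give this. First, $f(s)\le \lambda s\le\lambda$, since $s\le 1$ and $s^2\ge0$. Second, completing the square gives $f(s)=\frac{\lambda^2}{4}-\bigl(s-\frac{\lambda}{2}\bigr)^2\le\frac{\lambda^2}{4}$. Hence $f(s)\le\min\{\lambda,\lambda^2/4\}$. Combined with the strict hypothesis $\gamma>\min\{\lambda,\lambda^2/4\}\Delta_{\min}$, this yields $\gamma>f(s)\Delta_{\min}$, which is the required strict inequality.

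There is no real obstacle here. The only points needing care are the signs of the quantities used when cross-multiplying, and the fact that the bound must hold for every group size $G\ge2$. That is why the argument maximizes over $s$ rather than fixing it. Both bounds in the $\min$ are needed: $\lambda^2/4$ is tighter when $\lambda<4$, and $\lambda$ is tighter when $\lambda>4$.
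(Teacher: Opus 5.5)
The displayed statement is only the body of the \texttt{propbis} restatement wrapper and has no mathematical content. What you actually prove is Corollary~\ref{cor:lsd-faster}, and for that result your argument is correct and is the paper's own. You cancel the common factor, reduce to $\gamma>(\lambda s-s^2)\Delta_{\min}$ with $s=\sqrt{1-1/G}$, and bound $\lambda s-s^2$ by both $\lambda$ and $\lambda^2/4$. Your version also avoids a typo in the paper's chain, which writes ``$\lambda>\min\{\lambda,\lambda^2/4\}\Delta_{\min}$'' where $\gamma$ is meant.

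One correction: for $\varepsilon\ge C_0$ the comparison is not vacuous but false as written. The two thresholds coincide at $\varepsilon=C_0$, and the inequality reverses for $\varepsilon>C_0$. So $\varepsilon<C_0$ is a genuine hypothesis, which the paper also leaves implicit.

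If the intended target was instead one of the propositions that \texttt{propbis} restates (Propositions~\ref{prop:iter-rs} and~\ref{prop:iter-lsd}), your proposal does not address it, because you take $T^{\mathrm{GRPO}}$ and $T^{\mathrm{VSPO}}$ as given. Proving those requires deriving the thresholds from the closed-form update of Lemma~\ref{lem:soft-update}, the per-step geometric decay of $q_{i,t}=\pi_t(i)/\pi_t(i^\star)$, and the variance bound of Lemma~\ref{lem:var-bound}.
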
}
\newenvironment{corbis}[1]
  {\renewcommand{\thecorollary}{\ref{#1}}%
   \addtocounter{corollary}{-1}%
   \begin{corollary}}
  {\end{corollary}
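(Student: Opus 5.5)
The plan is to compare the two closed-form bounds in \Cref{eq:rs-iters,eq:lsd-iters} directly; no dynamics need to be revisited, since both propositions are taken as given. Both expressions share the positive factor $\frac{\Delta_{\max}}{2\eta}\log\frac{C_0}{\varepsilon}$. I will assume $\varepsilon<C_0$, so that this logarithm is strictly positive and the comparison is nontrivial; in the degenerate case both thresholds are nonpositive and the claim is vacuous. Writing $s:=\sqrt{1-\frac1G}\in(0,1)$ for $G\ge 2$, and dividing out the common factor, the inequality $T^{\mathrm{VSPO}}(\varepsilon)<T^{\mathrm{GRPO}}(\varepsilon)$ is equivalent to
\begin{equation*}
\frac{\lambda}{s\left(s^2\Delta_{\min}+\gamma\right)}<\frac{1}{s^2\Delta_{\min}}.
\end{equation*}
Since $\Delta_{\min}>0$ (by uniqueness of $i^\star$) and $\gamma\ge 0$, both denominators are positive. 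Cross-multiplying therefore gives the equivalent condition
\begin{equation*}
\gamma>\left(\lambda s-s^2\right)\Delta_{\min}.
\end{equation*}

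It then suffices to show that $\lambda s-s^2\le\min\{\lambda,\lambda^2/4\}$ for every $s\in(0,1)$; the hypothesis on $\gamma$ then yields the condition above. There are two elementary bounds.
\begin{itemize}
\item Completing the square gives $\lambda s-s^2=\frac{\lambda^2}{4}-\left(s-\frac{\lambda}{2}\right)^2\le\frac{\lambda^2}{4}$.
\item Since $s^2>0$ and $s<1$, we have $\lambda s-s^2<\lambda s\le\lambda$, using $\lambda\ge 0$. The sign of $\lambda$ holds because $D_x+\alpha D_y\ge 0$ and $\Delta_{\max}>0$.
\end{itemize}
Hence $(\lambda s-s^2)\Delta_{\min}\le\min\{\lambda,\lambda^2/4\}\Delta_{\min}<\gamma$, which proves the corollary.

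There is no real obstacle; the only points needing care are bookkeeping. The first is checking the positivity of every quantity that is cross-multiplied or divided out: $\eta$, $\Delta_{\max}$, $\Delta_{\min}$, $s$, and the logarithm. The second is noting that the $\min$ in the hypothesis is harmless, because each of the two bounds holds separately for all admissible $s$. Optionally, I would remark that $\lambda\ge 1$ in fact holds, since $\Delta_{\max}\le D_x+\alpha D_y$. This means the $\lambda^2/4$ branch is the active one when $\lambda\le 4$, and the $\lambda$ branch is active otherwise.
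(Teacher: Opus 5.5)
Your proposal is correct and follows essentially the same route as the paper. Both arguments reduce $T^{\mathrm{VSPO}}(\varepsilon)<T^{\mathrm{GRPO}}(\varepsilon)$ to $\gamma>\bigl(\lambda\sqrt{1-\frac1G}-(1-\frac1G)\bigr)\Delta_{\min}$ and bound the bracket by both $\lambda$ and $\lambda^2/4$; your version also justifies these two bounds and the positivity of $\log\frac{C_0}{\varepsilon}$, which the paper leaves implicit.
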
}

\section{More on Theoretical Analysis}
\label{sec:proofs}

\subsection{More on Theoretical Setup}
\label{sec:optimal-arm}
\begin{lemma}
    Define the scalarized reward $r(i) = x(i) + \alpha y(i)$ where $\alpha >0$. If
    \begin{equation*}
\alpha>
\max_{i:\, y(i)<y(i^\star)}
\frac{\max\{x(i)-x(i^\star),0\}}
{y(i^\star)-y(i)},
\end{equation*}
then $i^\star$ defined as
\begin{equation*}
i^\star
=
\arg\max_{i\in[K]}
\left\{
x(i): y(i)=\max_{j\in[K]} y(j)
\right\}
\end{equation*}
is the unique maximizer of the scalarized reward $r(i)$, i.e., $i^\star = \arg\max_{i\in[K]} r(i)$.
\end{lemma}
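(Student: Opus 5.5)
The plan is a direct comparison: fix any $i\neq i^\star$ and show $r(i^\star)>r(i)$, splitting into two cases by the value of $y(i)$ relative to $y(i^\star)=\max_j y(j)$. Since $i^\star$ attains the maximal behavioral reward, every arm satisfies either $y(i)=y(i^\star)$ or $y(i)<y(i^\star)$.

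\emph{Case 1: $y(i)=y(i^\star)$.} Here the $\alpha$-terms coincide, so $r(i^\star)-r(i)=x(i^\star)-x(i)$. By definition, $i^\star$ maximizes $x$ over the set of $y$-maximizers. To get strict inequality I need $i^\star$ to be the \emph{unique} such maximizer, so I will read the $\arg\max$ in the definition as a single well-defined arm. In other words, I assume no ties in $x$ among the $y$-maximizers, which is implicit in calling $i^\star$ ``the'' target arm. Under that reading, $x(i)<x(i^\star)$ and hence $r(i)<r(i^\star)$. This is the one place where the statement leans on a convention rather than the hypothesis on $\alpha$, and I would flag it explicitly. Without it, uniqueness can fail: two arms with identical $(x,y)$ would both maximize $r$.

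\emph{Case 2: $y(i)<y(i^\star)$.} Write
\[
r(i^\star)-r(i)=\bigl(x(i^\star)-x(i)\bigr)+\alpha\bigl(y(i^\star)-y(i)\bigr).
\]
If $x(i)\le x(i^\star)$, both terms are nonnegative and the second is strictly positive because $\alpha>0$ and $y(i^\star)>y(i)$; so the difference is positive. If $x(i)>x(i^\star)$, the hypothesis gives $\alpha>\frac{x(i)-x(i^\star)}{y(i^\star)-y(i)}$, since this arm is included in the maximum and $\max\{x(i)-x(i^\star),0\}=x(i)-x(i^\star)$ for it. Multiplying by the positive quantity $y(i^\star)-y(i)$ yields $\alpha(y(i^\star)-y(i))>x(i)-x(i^\star)$, which is exactly $r(i^\star)>r(i)$.

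Combining the two cases, $r(i^\star)>r(i)$ for every $i\ne i^\star$, so $i^\star$ is the unique maximizer of $r$. If the set $\{i:y(i)<y(i^\star)\}$ is empty, the condition on $\alpha$ is vacuous (only $\alpha>0$ is needed) and only Case 1 arises. The only real obstacle is the tie-breaking issue in Case 1; the rest is a one-line rearrangement.
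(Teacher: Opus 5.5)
Your proposal is correct and follows the same route as the paper's proof. The paper uses the same three-way split: $y(i)=y(i^\star)$; $y(i)<y(i^\star)$ with $x(i)\le x(i^\star)$; and $y(i)<y(i^\star)$ with $x(i)>x(i^\star)$, where the last case is handled by rearranging the hypothesis on $\alpha$. Your tie-breaking caveat in Case 1 is well taken, since the paper's proof also silently uses the strict inequality $x(i^\star)>x(i)$ there, i.e., it implicitly assumes the $\arg\max$ defining $i^\star$ is unique.
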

\begin{proof}
    For any arm $i\ne i^\star$, if $y(i) = y(i^\star)$, then we have $r(i^\star) > r(i)$ since $x(i^\star) > x(i)$.

    If $y(i) < y(i^\star)$ and $x(i) \leq  x(i^\star)$, then we also have $r(i^\star) = x(i^\star) + \alpha y(i^\star) > x(i) + \alpha y(i) = r(i)$.

    If $y(i) < y(i^\star)$ and $x(i)  > x(i^\star)$, then by the condition of $\alpha$ we have
    \begin{equation*}
        \alpha  > \frac{x(i) - x(i^\star)}{y(i^\star) - y(i)} \implies \alpha y(i^\star)  - \alpha y(i) > x(i) - x(i^\star),
    \end{equation*}
    which implies $r(i^\star) = x(i^\star) + \alpha y(i^\star) > x(i) + \alpha y(i) = r(i)$, as desired.
\end{proof}

\subsection{Population Advantage Score for GRPO}
\label{sec:population-rs-advantage}
\begin{lemma}[GRPO Populations]
\label{lem:rs-population}
Define
\begin{equation*}
\mu_{r,t} := \sum_{j=1}^K \pi_t(j)r(j),\;
\sigma_{r,t}^2 := \sum_{j=1}^K \pi_t(j)\bigl(r(j)-\mu_{r,t}\bigr)^2.
\end{equation*}
Then, for every arm $i$,
\begin{equation*}
\mathbb{E}\left[N_{t}(i)(r(i) - \bar r_t^{\mathrm{GRPO}})\;\middle|\;\pi_t\right]=
(G-1)\pi_t(i)\bigl(r(i)-\mu_{r,t}\bigr),\;
\mathbb{E}\left[\left(\bar \sigma_t^{\mathrm{GRPO}}\right)^2\;\middle|\;\pi_t\right] =\sigma_{r,t}^2.
\end{equation*}
\end{lemma}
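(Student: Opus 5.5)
The plan is a direct moment computation. At iteration $t$, write the group as i.i.d.\ draws $a_1,\dots,a_G\sim\pi_t$. Then
\[
N_t(i)=\sum_{h=1}^G \mathbf{1}\{a_h=i\},\qquad \bar r_t^{\mathrm{GRPO}}=\frac1G\sum_{g=1}^G r(a_g).
\]
Every expectation below is conditional on $\pi_t$. We only use two facts: samples are independent across indices, and each has mean reward $\mu_{r,t}$ and variance $\sigma_{r,t}^2$.

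\emph{First identity.} We split
\[
\mathbb{E}\!\left[N_t(i)\bigl(r(i)-\bar r_t^{\mathrm{GRPO}}\bigr)\right]=r(i)\,\mathbb{E}[N_t(i)]-\mathbb{E}\!\left[N_t(i)\,\bar r_t^{\mathrm{GRPO}}\right].
\]
The first term is $G\pi_t(i)r(i)$. For the second, expand it as the double sum
\[
\frac1G\sum_{g,h}\mathbb{E}\!\left[\mathbf{1}\{a_h=i\}\,r(a_g)\right].
\]
\begin{itemize}
\item On the diagonal $h=g$, each summand equals $\pi_t(i)r(i)$, and there are $G$ such pairs.
\item Off the diagonal, independence gives $\pi_t(i)\mu_{r,t}$ for each of the $G(G-1)$ pairs.
\end{itemize}
Dividing by $G$, the second term is $\pi_t(i)r(i)+(G-1)\pi_t(i)\mu_{r,t}$. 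Subtracting gives $(G-1)\pi_t(i)\bigl(r(i)-\mu_{r,t}\bigr)$.

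The factor $G-1$ instead of $G$ comes from the sample $a_h=i$ contributing to its own group mean. This is exactly the origin of the $(1-\frac1G)$ factors in Propositions~\ref{prop:iter-rs} and~\ref{prop:iter-lsd}.

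\emph{Second identity.} Here the only subtlety is the normalization convention for $\bar\sigma_t^{\mathrm{GRPO}}$. The identity $\mathbb{E}[(\bar\sigma_t^{\mathrm{GRPO}})^2]=\sigma_{r,t}^2$ holds exactly for the Bessel-corrected estimator
\[
(\bar\sigma_t^{\mathrm{GRPO}})^2=\frac{1}{G-1}\sum_{g=1}^G\bigl(r(a_g)-\bar r_t^{\mathrm{GRPO}}\bigr)^2 .
\]
With a $1/G$ normalization one would instead get $\frac{G-1}{G}\sigma_{r,t}^2$. So I would state this convention explicitly, since it is the step most likely to be misread.

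The proof then uses the standard decomposition
\[
\sum_g\bigl(r(a_g)-\bar r\bigr)^2=\sum_g\bigl(r(a_g)-\mu_{r,t}\bigr)^2-G\bigl(\bar r-\mu_{r,t}\bigr)^2 .
\]
Taking expectations, the first term on the right has mean $G\sigma_{r,t}^2$. The second has mean $G\cdot\sigma_{r,t}^2/G=\sigma_{r,t}^2$, because the variance of an average of $G$ i.i.d.\ terms is $\sigma_{r,t}^2/G$. Hence the left side has expectation $(G-1)\sigma_{r,t}^2$, and dividing by $G-1$ gives the claim.

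Beyond fixing the normalization, I expect no real obstacle: both identities are elementary consequences of linearity and independence. I would close by noting how these moments are used in the paper's analysis. The population score $A_t^{\mathrm{GRPO}}$ substitutes the first moment for the numerator and $\sigma_{r,t}$ for the empirical standard deviation; this is a modelling choice and not a consequence of the lemma, since $\mathbb{E}[1/\bar\sigma]\neq 1/\sigma$.
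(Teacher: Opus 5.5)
Your proposal is correct and matches the paper's proof. The first identity uses the same diagonal/off-diagonal split of $\sum_{g,h}\mathbb{E}[\mathbf{1}\{a_h=i\}r(a_g)]$. For the second, the paper also uses the $\frac{1}{G-1}$ (Bessel-corrected) normalization you flag; it expands raw moments $\sum_g r(a_g)^2-G(\bar r_t^{\mathrm{GRPO}})^2$ with $\mathbb{E}[(\bar r_t^{\mathrm{GRPO}})^2]=\mu_{r,t}^2+\sigma_{r,t}^2/G$ rather than centering at $\mu_{r,t}$, which is only a cosmetic difference.
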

\begin{proof}
Since the $G$ samples are i.i.d. from $\pi_t$, we have 
\begin{equation*}
    \E[N_{t}(i)\mid \pi_t] = G\pi_t(i).
\end{equation*}
It remains to compute $\E[N_{t}(i)\bar r_t^\text{GRPO}\mid \pi_t]$. Since
\begin{equation*}
    N_{t}(i) = \sum_{g=1}^G \1\left\{a_t^{(g)}=i\right\}, \; \bar r_t^\text{GRPO} = \frac{1}{G}\sum_{h=1}^G r\left(a_t^{(h)}\right),
\end{equation*}
we have
\begin{equation*}
    \E[N_{t}(i)\bar r_t^\text{GRPO} \mid \pi_t] =
    \frac{1}{G}\sum_{g=1}^G\sum_{h=1}^G
    \E \left[\1\left\{a_t^{(g)}=i\right\}r\left(a_t^{(h)}\right) \;\middle|\; \pi_t \right].
\end{equation*}
Since
\begin{equation*}
    \E \left[\1\left\{a_t^{(g)}=i\right\}r\left(a_t^{(h)}\right) \;\middle|\; \pi_t \right] = 
    \begin{cases}
        \pi_t(i)r(i), \quad & {g=h}, \\
        \pi_t(i)\mu_{r,t}, \quad & g\ne h,
    \end{cases}
\end{equation*}
we have
\begin{equation*}
    \E[N_{t}(i)\bar r_t^\text{GRPO}\mid \pi_t] =
    \frac{1}{G} \left(G\pi_t(i)r(i) + G(G-1)\pi_t(i)\mu_{r,t}\right) = \pi_t(i)r(i) + (G-1)\pi_t(i)\mu_{r,t}.
\end{equation*}
This implies
\begin{equation*}
    \E\left[N_{t}(i)(r(i) - \bar r_t^{\text{GRPO}})\;\middle|\; \pi_t\right] =
    \E[N_{t}(i)\mid \pi_t] r_i - \E[N_{t}(i)\bar r_t^\text{GRPO}\mid \pi_t] = (G-1)\pi_t(i)(r(i)-\mu_{r,t}),
\end{equation*}
as desired.

On the other hand, notice that
\begin{equation*}
    \left(\bar \sigma_t^{\text{GRPO}}\right)^2 = \frac{1}{G-1}\sum_{g=1}^G \left(r\left(a_t^{(g)}\right)-\bar r_t^\text{GRPO}\right)^2
     = \frac{1}{G-1}\left(\sum_{g=1}^G r\left(a_t^{(g)}\right)^2  - G \left(\frac{1}{G}\sum_{g=1}^G r\left(a_t^{(g)}\right)\right)^2\right).
\end{equation*}
By linearity of expectation and independce, We have
\begin{equation*}
    \E\left[r\left(a_t^{(g)}\right)^2 \;\middle| \;\pi_t\right] = \sigma_{r,t}^2 + \mu_{r,t}^2,
\end{equation*}
and
\begin{equation*}
    \E\left[\frac{1}{G}\sum_{g=1}^G r\left(a_t^{(g)}\right) \;\middle|\; \pi_t\right] = \mu_{r,t}, \; 
    \Var\left[\frac{1}{G}\sum_{g=1}^G r\left(a_t^{(g)}\right) \;\middle|\; \pi_t\right] = \frac{\sigma_{r,t}^2}{G}.
\end{equation*}

This implies
\begin{equation*}
\E[\left(\bar \sigma_t^{\text{GRPO}}\right)^2\mid \pi_t] =
\frac{G}{G-1}\left(\sigma_{r,t}^2 + \mu_{r,t}^2\right) -
\frac{G}{G-1}\left(\mu_{r,t}^2 + \frac{\sigma_{r,t}^2}{G} \right)  = \sigma_{r,t}^2,
\end{equation*}
as desired.
\end{proof}

\begin{remark}
    In \Cref{prop:iter-rs}, we use the GRPO population arm-level score as
\begin{equation}
\label{eq:rs-population-advantage}
A_t^{\text {GRPO}}(i) :=
\frac{(G-1)\pi_t(i)(r(i)-\mu_{r,t})}{\sigma_{r,t}}.
\end{equation}
\end{remark}

\subsection{Population Advantage Score for VSPO}
\label{sec:population-lsd-advantage}

\begin{lemma}[VSPO Populations]
Define $\mu_{x,t}^\pm:=\sum_{j=1}^K \pi_t^\pm(j)x(j);\;\mu_{x,t} := \frac{\mu_{x,t}^+ + \mu_{x,t}^-}{2}, 
\mu_{y,t} := \frac{\mu_{y,t}^+ + \mu_{y,t}^-}{2}; \mu_t^\pm := \mu_{x,t}^\pm + \alpha \mu_{y,t}^\pm; \mu_t = \frac{\mu_t^+ + \mu_t^-}{2}; d_t(i) = \frac{\pi_t^+(i) - \pi_t^-(i)}{2};  \delta_{x,t} = \mu_{x,t}^+ - \mu_{x,t}^-, \delta_{y,t} = \mu_{y,t}^+ - \mu_{y,t}^-, \delta_t = \delta_{x,t} + \alpha \delta_{y,t} = \mu_t^+ - \mu_t^-$, and 
\begin{equation*}
    v_{r,t}^2:=
\frac{1}{2}\sum_{i=1}^K \pi_t^+(i)\left(r_t^+(i)-\mu_{t}\right)^2
+
\frac{1}{2}\sum_{i=1}^K \pi_t^-(i)\left(r_t^-(i)-\mu_{t}\right)^2,\\
\end{equation*}
Then, for every arm $i$,
\begin{align*}
&\mathbb{E}\Bigl[
N_t^+(i)\bigl(r_t^+(i)-\bar r_t^\mathrm{VSPO}\bigr)
+
N_t^-(i)\bigl(r_t^-(i)-\bar r_t^\mathrm{VSPO}\bigr)
\,\Big|\,\pi_t,\pi_t^+,\pi_t^-
\Bigr]
\\
&\qquad=
(G-1)\pi_t(i)\left(x(i)-\mu_{x,t}\right)
+
\frac{d_t(i)}{2}
\left(\delta_t+\alpha(G-1)\delta_{y,t}\right),\\
& \mathbb{E}\Bigl[\bigl(\bar \sigma_t^{\mathrm{VSPO}}\bigr)^2\,\Big|\,\pi_t,\pi_t^+,\pi_t^-\Bigr]
=
v_{r,t}^2+\frac{\delta_t^2}{4(G-1)}.
\end{align*}
\end{lemma}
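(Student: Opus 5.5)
The plan is to mirror the proof of \Cref{lem:rs-population}, with one change: the group is now stratified. Label the $G$ samples so that $a_t^{(1)},\dots,a_t^{(G/2)}\sim\pi_t^+$ and $a_t^{(G/2+1)},\dots,a_t^{(G)}\sim\pi_t^-$, all independent. Write $R^{(h)}$ for the shaped reward of sample $h$, so $R^{(h)}=r_t^+(a_t^{(h)})$ in the first half and $R^{(h)}=r_t^-(a_t^{(h)})$ in the second. Then $\bar r_t^{\mathrm{VSPO}}=\frac1G\sum_h R^{(h)}$. Because $\alpha\mu_{y,t}^\pm$ is constant within each half, the shaped reward has mean $\mu_t^\pm$ under $\pi_t^\pm$. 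Its variance there is $\mathrm{Var}_{\pi_t^\pm}(x)=:s_\pm^2$, the variance of $x$ alone.

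\textbf{First identity.} I would compute $\E[N_t^+(i)\bar r_t^{\mathrm{VSPO}}]$ by expanding $N_t^+(i)=\sum_{g\le G/2}\1\{a_t^{(g)}=i\}$ and splitting the double sum over $(g,h)$ into three cases.
\begin{itemize}
\item $g=h$ contributes $\pi_t^+(i)r_t^+(i)$.
\item $h\neq g$ in the same half ($G/2-1$ choices) contributes $\pi_t^+(i)\mu_t^+$ each.
\item $h$ in the other half ($G/2$ choices) contributes $\pi_t^+(i)\mu_t^-$ each.
\end{itemize}
This gives
\[
\E\bigl[N_t^+(i)\bigl(r_t^+(i)-\bar r_t^{\mathrm{VSPO}}\bigr)\bigr]=\tfrac{G}{2}\pi_t^+(i)\Bigl[\bigl(1-\tfrac1G\bigr)r_t^+(i)-\tfrac{G/2-1}{G}\mu_t^+-\tfrac12\mu_t^-\Bigr].
\]
The $\pi_t^-$ half gives the symmetric expression. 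I would add the two and substitute $r_t^\pm(i)=x(i)+\alpha\mu_{y,t}^\pm$ and $\pi_t^\pm(i)=\pi_t(i)\pm d_t(i)$. The terms then group as follows.
\begin{itemize}
\item The $x(i)$ terms combine to $(G-1)\pi_t(i)x(i)$.
\item The $\pi_t(i)$ multiples of the means combine to $-(G-1)\pi_t(i)\mu_{x,t}$, because the $\alpha\mu_{y,t}^\pm$ pieces cancel against the $\alpha$ parts of $\mu_t^\pm$ when averaged symmetrically.
\item The $d_t(i)$ multiples collect into $\frac{d_t(i)}{2}\bigl(\delta_t+\alpha(G-1)\delta_{y,t}\bigr)$, using $\mu_t^+-\mu_t^-=\delta_t$ and $\mu_{y,t}^+-\mu_{y,t}^-=\delta_{y,t}$.
\end{itemize}

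\textbf{Second identity.} Write $(\bar\sigma_t^{\mathrm{VSPO}})^2=\frac1{G-1}\bigl(\sum_h(R^{(h)}-\mu_t)^2-G(\bar r_t^{\mathrm{VSPO}}-\mu_t)^2\bigr)$. The first sum has expectation $Gv_{r,t}^2$ by definition of $v_{r,t}^2$. Since $\E[\bar r_t^{\mathrm{VSPO}}]=\mu_t$ and the samples are independent,
\[
G\,\E\bigl[(\bar r_t^{\mathrm{VSPO}}-\mu_t)^2\bigr]=G\,\Var\bigl(\bar r_t^{\mathrm{VSPO}}\bigr)=\tfrac{1}{G}\cdot\tfrac G2\bigl(s_+^2+s_-^2\bigr)=\tfrac{s_+^2+s_-^2}{2}.
\]
A within/between decomposition gives $v_{r,t}^2=\frac{s_+^2+s_-^2}{2}+\frac{\delta_t^2}{4}$, because $\mu_t^\pm-\mu_t=\pm\delta_t/2$. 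Hence the bracket has expectation $(G-1)v_{r,t}^2+\delta_t^2/4$. Dividing by $G-1$ yields the claim.

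\textbf{Main obstacle.} The only delicate step is the bookkeeping in the first identity: correctly separating the $\pi_t(i)$ and $d_t(i)$ coefficients and checking that the $\alpha\mu_{y,t}$ contributions cancel in the $\pi_t(i)$ part while producing the $\alpha(G-1)\delta_{y,t}$ term in the $d_t(i)$ part. I would verify this by writing both halves with the common prefactor $\frac G2$ and expanding in the basis $\{\pi_t(i),d_t(i)\}$.
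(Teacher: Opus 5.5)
Your proposal is correct and takes essentially the same route as the paper. For the first identity you use the same indicator-product expansion with the same-sample / same-half / other-half case split; the paper groups it as the four cross terms $\E[N_t^{\pm}(i)S_t^{\pm}]$, while you group by half and expand in $\{\pi_t(i),d_t(i)\}$. For the second identity you use the same within/between decomposition $v_{r,t}^2=\frac{s_+^2+s_-^2}{2}+\frac{\delta_t^2}{4}$ together with $\Var(\bar r_t^{\mathrm{VSPO}})=\frac{s_+^2+s_-^2}{2G}$; centering the sample variance at $\mu_t$ instead of using raw second moments, as the paper does, is only a cosmetic difference.
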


\begin{proof}
Let $m := G/2$. At iteration $t$, applying vector steering returns $\pi_t^+$ and $\pi_t^-$ such that $\pi_t = \frac{\pi_t^+ + \pi_t^-}{2}$.
Let 
\begin{equation*}
    S_t^+ := \sum_{g=1}^m r_+\left(a_{t,+}^{(g)}\right),\; S_t^- := \sum_{g=1}^m r_-\left(a_{t,-}^{(g)}\right).
\end{equation*}
Then
\begin{equation*}
    N_{t}(i) = N_{t}^+(i) + N_{t}^-(i), \; \bar r_t^\text{VSPO} = \frac{S_t^+ + S_t^-}{G}.
\end{equation*}
By definitions we have
\begin{equation*}
    \pi_t^\pm(i) = \pi_t(i)\pm d_t(i), \; \mu_t^\pm = \mu_t \pm \frac{\delta_t}{2}.
\end{equation*}

We first compute $\E\left[(N_{t}^+(i) + N_{t}^-(i)) \;\bar r_t^\text{VSPO} \;\middle|\; \pi_t,\pi_t^+,\pi_t^-\right]$. We have
\begin{align*}
&\quad\; \E\left[(N_{t}^+(i) + N_{t}^-(i)) \;\bar r_t^\text{VSPO} \;\middle|\; \pi_t,\pi_t^+,\pi_t^-\right] \\
&=
\frac{1}{G} \E\left[(N_{t}^+(i) + N_{t}^-(i))(S_t^+ + S_t^-) \;\middle|\;\pi_t,\pi_t^+,\pi_t^-\right] \\
&=\frac{1}{G}
\left( \E[N_{t}^+(i)S_t^+] + \E[N_{t}^-(i)S_t^-] + \E[N_{t}^+(i)S_t^-] + \E[N_{t}^-(i)S_t^+]\right).
\end{align*}
Similar to the proof of \Cref{lem:rs-population} yields
\begin{align*}
    \E[N_{t}^+(i)S_t^+] &= m\pi_t^+(i)r_t^+(i) + m(m-1)\pi_t^+(i)(\mu_{x,t}^+ + \alpha\mu_{y,t}^+) \\
    &=  m\pi_t^+(i)x(i) + m(m-1)\pi_t^+(i)\mu_{x,t}^+ + m^2\pi_t^+(i)\alpha\mu_{y,t}^+\\
    \E[N_{t}^-(i)S_t^-] &= m\pi_t^-(i)r_t^-(i) + m(m-1)\pi_t^-(i)(\mu_{x,t}^- + \alpha\mu_{y,t}^-)\\
    &= m\pi_t^-(i)x(i) + m(m-1)\pi_t^-(i)\mu_{x,t}^- + m^2\pi_t^-(i)\alpha\mu_{y,t}^-
\end{align*}
By independence we have
\begin{equation*}
    \E[N_{t}^+(i)S_t^-] = m^2 \pi_t^+(i)(\mu_{x,t}^- + \alpha\mu_{y,t}^-), \; \E[N_{t}^-(i)S_t^+] = m^2 \pi_t^-(i)(\mu_{x,t}^+ + \alpha\mu_{y,t}^+).
\end{equation*}
Combining these terms yields
\begin{align*}
\E\left[(N_{t}^+(i) + N_{t}^-(i)) \;\bar r_t^\text{VSPO} \;\middle|\; \pi_t,\pi_t^+,\pi_t^-\right]
&=
\frac{1}{G}
\Big( m(\pi_t^+(i)+\pi_t^-(i))x(i) \\
&\qquad\quad + m(m-1)\left(\pi_t^+(i)\mu_{x,t}^+ + \pi_t^-(i)\mu_{x,t}^-\right) \\
&\qquad\quad + m^2\left(\pi_t^+(i)\mu_{x,t}^- + \pi_t^-(i)\mu_{x,t}^+\right)\\
&\qquad \quad + m^2\alpha (\pi_t^+(i) + \pi_t^-(i))(\mu_{y,t}^+ + \mu_{y,t}^-)
\Big).
\end{align*}
Using the facts that
\begin{align*}
    &\pi_t^+(i)+\pi_t^-(i)=2\pi_t(i),\\
    &\pi_t^+(i)\mu_{x,t}^+ + \pi_{t}^-(i)\mu_{x,t}^- = 2\pi_t(i)\mu_{x,t} + d_{t}(i)\delta_{x,t},\\
    &\pi_t^+(i)\mu_{x,t}^- + \pi_t^-(i)\mu_{x,t}^+ = 2\pi_t(i)\mu_{x,t} - d_{t}(i)\delta_{x,t},\\
    &\mu_{y,t}^+ + \mu_{y,t}^- = 2\mu_{y,t},
\end{align*}
and $G=2m$, we obtain
\begin{equation*}
    \E\left[(N_{t}^+(i) + N_{t}^-(i)) \;\bar r_t^\text{VSPO} \;\middle|\; \pi_t,\pi_t^+,\pi_t^-\right] = 
    \pi_t(i)x(i) + (G-1)\pi_t(i)\mu_{x,t} - \frac{1}{2}d_{t}(i)\delta_{x,t} + \alpha G \pi_t(i)\mu_{y,t}.
\end{equation*}
Since
\begin{equation*}
    \E[N_{t}^+(i)\mid \pi_t,\pi_t^+,\pi_t^-] = m\pi_t^+(i),\;
    \E[N_{t}^-(i)\mid \pi_t,\pi_t^+,\pi_t^-] = m\pi_t^-(i),
\end{equation*}
we have
\begin{align*}
 \E[N_{t}^+(i)r_t^+(i) + N_{t}^-(i)r_t^-(i)] &= 
    m\pi_t^+(i)[x(i) + \alpha\mu_{y,t}^+]  +m\pi_t^-(i)[x(i) + \alpha\mu_{y,t}^-] \\
    &= G\pi_t(i)x(i) + G\alpha (\pi_t(i)\mu_{y,t} + \frac12 d_{t}(i)\delta_{y,t})
\end{align*}

we conclude
\begin{align*}
&\quad \; \E\left[N_{t}^+(i) (r_t^+(i) - \bar r_t^\text{VSPO}) + N_{t}^-(i) (r_t^-(i) - \bar r_t^\text{VSPO})\;\middle |\; \pi_t,\pi_t^+,\pi_t^-\right] \\
&= (G-1)\pi_t(i)(x(i)-\mu_{x,t})+\frac{d_{t}(i)\delta_{x,t}}{2} + \frac{\alpha Gd_{t}(i)\delta_{y,t}}{2} \\
&=  (G-1)\pi_t(i)(x(i)-\mu_{x,t})+\frac{d_{t}(i)}{2}\left(\delta_{x,t} + \alpha G \delta_{y,t}\right) \\
&= (G-1)\pi_t(i)(x(i)-\mu_{x,t})+\frac{d_{t}(i)}{2}\left[\delta_{t} + \alpha (G-1) \delta_{y,t}\right].
\end{align*}
On the other hand, notice that
\begin{equation*}
    \bigl(\bar \sigma_t^\text{VSPO}\bigr)^2
    = \frac{1}{G-1}\left(
    \sum_{g=1}^{m} r_t^+\left(a_{t,+}^{(g)}\right)^2
    + \sum_{g=1}^{m} r_t^-\left(a_{t,-}^{(g)}\right)^2
    - G(\bar r_t^\text{VSPO})^2
    \right).
\end{equation*}
Recall the reward means
\begin{equation*}
    \mu_t^+ := \sum_{j=1}^K \pi_t^+(j)r_+(j)
    = \mu_{x,t}^+ + \alpha \mu_{y,t}^+,
    \;
    \mu_t^- := \sum_{j=1}^K \pi_t^-(j)r_-(j)
    = \mu_{x,t}^- + \alpha \mu_{y,t}^-,
\end{equation*}
and
\begin{equation*}
    \mu_t := \frac{\mu_t^+ + \mu_t^-}{2}
    = \mu_{x,t} + \alpha \mu_{y,t},
    \;
    \delta_t := \mu_t^+ - \mu_t^-
    = \delta_{x,t} + \alpha \delta_{y,t}.
\end{equation*}
Define the within-group variances
\begin{align*}
    &\left(\sigma_{t}^+\right)^2 := \sum_i \pi_t^+(i)\bigl(r_t^+(i)-\mu_t^+\bigr)^2
    = \sum_i \pi_t^+(i)\bigl(x(i)-\mu_{x,t}^+\bigr)^2, \\
    &\left(\sigma_{t}^-\right)^2 := \sum_i \pi_t^-(i)\bigl(r_t^-(i)-\mu_t^-\bigr)^2
    = \sum_i \pi_t^-(i)\bigl(x(i)-\mu_{x,t}^-\bigr)^2,
\end{align*}
and also recall the pooled population variance
\begin{equation*}
    v_{r,t}^2
    := \frac12 \sum_i \pi_t^+(i)\bigl(r_t^+(i)-\mu_t\bigr)^2
     + \frac12 \sum_i \pi_t^-(i)\bigl(r_t^-(i)-\mu_t\bigr)^2.
\end{equation*}

We first relate $v_{r,t}^2$ to $\left(\sigma_{t}^+\right)^2,\left(\sigma_{t}^-\right)^2$.
By expanding around $\mu_t$, we have
\begin{align*}
v_{r,t}^2
&= \frac12 \sum_i \pi_t^+(i)\bigl(r_t^+(i)-\mu_t\bigr)^2
 + \frac12 \sum_i \pi_t^-(i)\bigl(r_t^-(i)-\mu_t\bigr)^2 \\
&= \frac12\left[\left(\sigma_{t}^+\right)^2 + (\mu_t^+ - \mu_t)^2\right]
 + \frac12\left[\left(\sigma_{t}^-\right)^2 + (\mu_t^- - \mu_t)^2\right] \\
&= \frac{\left(\sigma_{t}^+\right)^2+\left(\sigma_{t}^-\right)^2}{2} + \frac{\delta_t^2}{4},
\end{align*}
which implies
\begin{equation}
\frac{\left(\sigma_{t}^+\right)^2+\left(\sigma_{t}^-\right)^2}{2}
= v_{r,t}^2 - \frac{\delta_t^2}{4}.
\label{eq:within-variance-identity-oracle-new}
\end{equation}

Similarly, we have
\begin{equation*}
    \mu_t^2 = \left(\frac{\mu_{t}^+ + \mu_{t}^-}{2}\right)^2 = \frac{\left(\mu_{t}^+\right)^2 + \left(\mu_{t}^-\right)^2}{2} - \frac{(\mu_{t}^+ - \mu_{t}^-)^2}{4} =\frac{\left(\mu_{t}^+\right)^2 + \left(\mu_{t}^-\right)^2}{2} - \frac{\delta_t^2}{4},
\end{equation*}
which implies
\begin{equation*}
    \frac{\left(\mu_{t}^+\right)^2 + \left(\mu_{t}^-\right)^2}{2} = \mu_t^2 + \frac{\delta_t^2}{4}.
\end{equation*}

Thus we have
\begin{align*}
\frac{1}{G}\E\left[
\sum_{g=1}^{m} r_t^+\left(a_{t,+}^{(g)}\right)^2
+ \sum_{g=1}^{m} r_t^-\left(a_{t,-}^{(g)}\right)^2
\;\middle|\; \pi_t,\pi_t^+,\pi_t^-
\right]
&=
\frac12\left(\left(\sigma_{t}^+\right)^2 + (\mu_t^+)^2\right)
+\frac12\left(\left(\sigma_{t}^-\right)^2 + (\mu_t^-)^2\right) \\
&= \frac{\left(\sigma_{t}^+\right)^2+\left(\sigma_{t}^-\right)^2}{2} + \frac{\left(\mu_{t}^+\right)^2 + \left(\mu_{t}^-\right)^2}{2}  \\
&= v_{r,t}^2 + \mu_t^2
\end{align*}

Notice that
\begin{equation*}
    \E[\bar r_t^\text{VSPO} \mid \pi_t,\pi_t^+,\pi_t^-] = \mu_t,
\end{equation*}
and by independence,
\begin{align*}
\Var(\bar r_t^\text{VSPO} \mid \pi_t,\pi_t^+,\pi_t^-)
&=
\frac{1}{G^2}\left(m\left(\sigma_t^+\right)^2 + m\left(\sigma_t^-\right)^2\right) \\
&=
\frac{\left(\sigma_{t}^+\right)^2+\left(\sigma_{t}^-\right)^2}{2G}
=
\frac{1}{G}\left(v_{r,t}^2 - \frac{\delta_t^2}{4}\right),
\end{align*}
where the last step uses \Cref{eq:within-variance-identity-oracle-new}.
Therefore,
\begin{equation*}
    \E[\left(\bar r_t^\text{VSPO}\right)^2 \mid \pi_t,\pi_t^+,\pi_t^-]
    =
    \mu_t^2 + \frac{1}{G}\left(v_{r,t}^2 - \frac{\delta_t^2}{4}\right).
\end{equation*}
Substituting into the definition of $\bar\sigma_t^2$, we obtain
\begin{align*}
\E[\bigl(\bar \sigma_t^\text{VSPO}\bigr)^2\mid \pi_t,\pi_t^+,\pi_t^-]
&=
\frac{G}{G-1}\left(v_{r,t}^2 + \mu_t^2\right)
-
\frac{G}{G-1}\left[
\mu_t^2 + \frac{1}{G}\left(v_{r,t}^2 - \frac{\delta_t^2}{4}\right)
\right] \\
&=
v_{r,t}^2 + \frac{\delta_t^2}{4(G-1)},
\end{align*}
as desired.
\end{proof}

\begin{remark}
    In \Cref{prop:iter-lsd}, we use the VSPO population arm-level score as
\begin{equation}
\label{eq:lsd-population-advantage}
A_t^{\text {VSPO}}(i) :=
\frac{\left(G-1\right)\pi_t(i)\bigl(x(i)-\mu_{x,t}\bigr)+\frac{d_t(i)}{2}
\left(\delta_t+\alpha(G-1)\delta_{y,t}\right)}{\sqrt{v_{r,t}^2+\frac{\delta_t^2}{4(G-1)}}}.
\end{equation}
\end{remark}

\subsection{Technical Lemmas}

\begin{lemma}[Soft policy update]
\label{lem:soft-update}
For any arm-level advantage score $A_t \in \mathbb{R}^K$, the unique optimizer of \Cref{eq:grpo-objective} is
\begin{equation}
\pi_{t+1}(i) = \frac{\pi_t(i)\exp\left(\frac{\eta A_t(i)}{G\pi_t(i)}\right)} {\sum_{j=1}^K \pi_t(j)\exp\left(\frac{\eta A_t(j)}{G\pi_t(j)}\right)}.
\label{eq:soft-update}
\end{equation}
\end{lemma}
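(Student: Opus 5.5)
This is the standard mirror-descent / exponentiated-gradient computation on the simplex. Write $c_i := A_t(i)/(G\pi_t(i))$, so the objective in \Cref{eq:grpo-objective} becomes
\[
F(\pi)=\sum_i \pi(i)c_i-\tfrac{1}{\eta}\sum_i \pi(i)\log\tfrac{\pi(i)}{\pi_t(i)},
\]
to be maximized over the probability simplex $\Delta_K$. Since $\pi_t(i)>0$ for all $i$, every $c_i$ is finite. This positivity holds at $t=0$ by assumption and propagates inductively, because the claimed update has full support.

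The plan is to use the Gibbs variational principle rather than Lagrange multipliers, which avoids any separate argument about boundary points. Define
\[
Z:=\sum_j \pi_t(j)e^{\eta c_j},\qquad q(i):=\pi_t(i)e^{\eta c_i}/Z .
\]
Then $q$ is a valid distribution with full support, and $\log\frac{q(i)}{\pi_t(i)}=\eta c_i-\log Z$. Substituting this into $F$ and using the convention $0\log 0=0$ gives, for every $\pi\in\Delta_K$,
\[
F(\pi)=\tfrac1\eta\Big(\log Z-\sum_i\pi(i)\log\tfrac{\pi(i)}{q(i)}\Big)=\tfrac1\eta\big(\log Z-\mathrm{KL}(\pi\,\|\,q)\big).
\]
This step is routine algebra: add and subtract $\frac1\eta\sum_i\pi(i)\log q(i)$ and use $\sum_i\pi(i)=1$.

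Gibbs' inequality gives $\mathrm{KL}(\pi\|q)\ge 0$, with equality if and only if $\pi=q$. Hence $F(\pi)\le \frac1\eta\log Z$, and equality holds exactly at $\pi=q$. So $q$ is the unique maximizer, and it coincides with \Cref{eq:soft-update}.

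The only real care point is the handling of $\pi$ with zero entries and of $\pi_t$ having full support, which the identity above handles uniformly. Everything else is direct computation. As a sanity check, one could also confirm via the KKT condition $c_i-\frac1\eta(\log\frac{\pi(i)}{\pi_t(i)}+1)-\nu=0$. Strict concavity of $F$, which follows from the strictly convex entropy term, gives uniqueness independently.
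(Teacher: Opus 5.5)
Your proof is correct, but it takes a different route from the paper. The paper forms a Lagrangian with a multiplier only for $\sum_i \pi(i)=1$. It sets the partial derivatives to zero, which gives $\pi(i)=\pi_t(i)\exp\bigl(\eta A_t(i)/(G\pi_t(i))\bigr)$ times a common factor fixed by normalization, and it invokes strict concavity of the objective for uniqueness. You instead posit the Gibbs distribution $q$ and rewrite the objective exactly as $\frac{1}{\eta}\bigl(\log Z-\mathrm{KL}(\pi\,\|\,q)\bigr)$. Gibbs' inequality then gives optimality and uniqueness in one step.

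Your route is more rigorous at the boundary. The paper's Lagrangian ignores the constraints $\pi(i)\ge 0$. It therefore tacitly relies on two facts it never states: the stationary point it finds is interior, and a relative-interior stationary point of a concave function on the simplex is a global maximizer. Your identity holds for every $\pi$ in the simplex (with $0\log 0=0$), so it needs neither fact, and it also yields the optimal value $\frac{1}{\eta}\log Z$. The paper's route has the advantage of deriving the exponential form rather than requiring it to be guessed in advance.

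Both arguments tacitly assume $\eta>0$, which is needed to conclude $F(\pi)\le\frac{1}{\eta}\log Z$. You additionally make explicit that $\pi_t(i)>0$ for all $i$ persists from $\pi_0$, because the update has full support; the paper leaves this implicit.
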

\begin{proof}
The objective in \Cref{eq:grpo-objective} is 
\begin{equation*}
\pi_{t+1}
=
\arg\max_{\pi}
\left\{
\frac{1}{G}\sum_{i=1}^K
\frac{\pi(i)}{\pi_t(i)}A_t(i)
-
\frac{1}{\eta}
\sum_{i=1}^K
\pi(i)\log\frac{\pi(i)}{\pi_t(i)}
\right\},
\end{equation*}
which is strictly concave over the simplex, so it has a unique maximizer. Introduce the Lagrangian
\begin{equation*}
    \mathcal{L}(\pi,\lambda) = \sum_{i=1}^K \pi(i)\frac{A_t(i)}{G\pi_t(i)} - \frac{1}{\eta}\sum_{i=1}^K \pi(i)\log\frac{\pi(i)}{\pi_t(i)}
    + \lambda\left(\sum_{i=1}^K \pi(i)-1\right).
\end{equation*}
Taking derivatives with respect to $\pi(i)$ gives
\begin{equation*}
    0 = \frac{\partial \mathcal{L}}{\partial \pi(i)} = 
    \frac{A_t(i)}{G\pi_t(i)} - \frac{1}{\eta}\left(\log\frac{\pi(i)}{\pi_t(i)} + 1\right) + \lambda.
\end{equation*}
Rearranging,
\begin{equation*}
    \pi(i) = \pi_t(i)\exp\left( \frac{\eta A_t(i)}{G\pi_t(i)} + \eta \lambda - 1\right).
\end{equation*}
The term $\exp(\eta\lambda-1)$ is the same for every arm and is determined by the normalization constraint $\sum_i \pi(i)=1$. Substituting the normalizing constant yields \Cref{eq:soft-update}.
\end{proof}


\begin{lemma}
\label{lem:var-bound}
    For any bounded random variable $X\in[a,b]$, we have $\Var(X)\leq \frac{(b-a)^2}{4}$.
\end{lemma}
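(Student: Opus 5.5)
The plan is to prove Popoviciu's inequality directly, by comparing the variance with the second moment about the midpoint of $[a,b]$. Let $m := \frac{a+b}{2}$.

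The first step is the standard fact that the mean minimizes mean-squared deviation: for any constant $c$,
\begin{equation*}
\E[(X-c)^2] = \Var(X) + (\E[X]-c)^2 \ge \Var(X).
\end{equation*}
This follows by expanding $X-c = (X-\E[X]) + (\E[X]-c)$ and noting that the cross term has zero expectation.

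The second step takes $c = m$. Since $X\in[a,b]$, we have $|X-m|\le \frac{b-a}{2}$ pointwise. Hence $(X-m)^2 \le \frac{(b-a)^2}{4}$ almost surely, and taking expectations gives $\E[(X-m)^2]\le \frac{(b-a)^2}{4}$. Combining this with the first step yields $\Var(X)\le \frac{(b-a)^2}{4}$.

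No step is a real obstacle. The only point needing care is that $X$ is bounded, so all moments exist and the expansion in the first step is justified. Equality holds exactly when $X$ places mass $\tfrac12$ on each of $a$ and $b$.

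In the paper, this lemma will presumably be applied to the reward variances $\sigma_{r,t}^2$ and $v_{r,t}^2$, with range at most $D_x+\alpha D_y$. That gives the denominators of $A_t^{\mathrm{GRPO}}$ and $A_t^{\mathrm{VSPO}}$ a bound of order $\frac{D_x+\alpha D_y}{2}$, which is where the conditioning constant $\lambda$ enters $T^{\mathrm{VSPO}}$.
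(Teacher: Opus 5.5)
Your proof is correct and is essentially the paper's argument: your pointwise bound $(X-m)^2\le \frac{(b-a)^2}{4}$ with $m=\frac{a+b}{2}$ is algebraically identical to the paper's starting inequality $(X-a)(X-b)\le 0$, since $(X-m)^2-\frac{(b-a)^2}{4}=(X-a)(X-b)$. Your decomposition $\E[(X-m)^2]=\Var(X)+(\E[X]-m)^2$ is the same completed square the paper reaches, so both routes give the intermediate bound $\Var(X)\le \frac{(b-a)^2}{4}-\bigl(\E[X]-\frac{a+b}{2}\bigr)^2$.
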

\begin{proof}
    Notice that $(X-a)(X-b)\leq 0$, which implies $X^2 \leq (a+b)X - ab$. Therefore, we have
    \begin{align*}
        \Var(X) &= \E[X^2] - \E^2[X] \leq (a+b)\E[X] - ab - \E^2[X]  \\
        &= -\left(\E[X]  - \frac{a+b}{2}\right)^2 + \frac{(b-a)^2}{4}\leq \frac{(b-a)^2}{4},
    \end{align*}
    as desired.
\end{proof}

\subsection{Proof of Proposition \ref{prop:iter-rs}}

\begin{propbis}{prop:iter-rs}[Iteration complexity of GRPO]
Under the GRPO population score $A_t^\mathrm{GRPO}$ in Appendix \ref{sec:population-rs-advantage}, 
we have $J(\pi_{T})\ge r^\star-\varepsilon$ when
\begin{equation*}
T\geq T^\mathrm{GRPO}(\varepsilon)
:=
\frac{\Delta_{\max}}{2\eta \left(1-\frac1G\right)\Delta_{\min}}
\log\frac{C_0}{\varepsilon}.
\end{equation*}
\end{propbis}

\begin{proof}
For every suboptimal arm $i\neq i^\star$, define
\begin{equation*}
    q_{i,t} := \frac{\pi_t(i)}{\pi_t(i^\star)}.
\end{equation*}
By \Cref{lem:soft-update} and \Cref{eq:rs-population-advantage}, we have
\begin{equation*}
    \frac{\pi_{t+1}(i)}{\pi_{t+1}(i^\star)} = \frac{\pi_{t}(i)}{\pi_{t}(i^\star)}\exp\left[\eta\left(\frac{A_t^\text{RS}(i)}{G\pi_t(i)} - \frac{A_t^\text{RS}(i^*)}{G\pi_t(i^\star)}\right)\right]
    = \frac{\pi_{t}(i)}{\pi_{t}(i^\star)}\exp\left[\eta\left(1-\frac1G\right)\frac{r(i) - r(i^\star)}{\sigma_{r,t}}\right],
\end{equation*}
which implies
\begin{equation*}
    q_{i,t+1} = q_{i,t} \exp\left[-\eta\left(1-\frac1G\right)\frac{\Delta_i}{\sigma_{r,t}}\right]
    \leq q_{i,t} \exp\left[-\eta\left(1-\frac1G\right)\frac{\Delta_{\min}}{\sigma_{r,t}}\right].
\end{equation*}
Since $r$ is bounded by $[r(i^\star) - \Delta_{\max}, r(i^\star)]$, by \Cref{lem:var-bound} we have $\sigma_{r,t} \leq \frac{\Delta_{\max}}{2}$. This implies 
\begin{equation*}
    q_{i,t+1}     \leq q_{i,t} \exp\left[-2\eta\left(1-\frac1G\right)\frac{\Delta_{\min}}{\Delta_{\max}}\right].
\end{equation*}
Iterating gives
\begin{equation*}
    q_{i,t} \leq q_{i,0} \exp\left(-\frac{2\eta \left(1-\frac1G\right)\Delta_{\min}}{\Delta_{\max}}t\right).
\end{equation*}
Notice that
\begin{equation*}
    r(i^\star)-J(\pi_t) = \sum_{i\neq i^\star}\pi_t(i)\Delta_i = \pi_t(i^\star) \sum_{i\neq i^\star}q_{i,t}\Delta_i
    \leq \sum_{i\neq i^\star}q_{i,t}\Delta_i, 
\end{equation*}
which implies
\begin{align*}
    r(i^\star)-J(\pi_t) &\leq \sum_{i\neq i^\star}q_{i,t}\Delta_i\leq \exp\left(-\frac{2\eta \left(1-\frac1G\right)\Delta_{\min}}{\Delta_{\max}}t\right)\sum_{i\ne i^\star} q_{i,0}\Delta_i \\
    &= C_0 \exp\left(-\frac{2\eta \left(1-\frac1G\right)\Delta_{\min}}{\Delta_{\max}}t\right).
\end{align*}
This further implies when 
\begin{equation*}
    T\geq T^\text{GRPO}(\varepsilon)
=
\frac{\Delta_{\max}}{2\eta \left(1-\frac1G\right)\Delta_{\min}}
\log\frac{C_0}{\varepsilon},
\end{equation*}
we have
\begin{equation*}
    r(i^\star)
-J(\pi_{T})\leq C_0 \exp\left(-\frac{2\eta \left(1-\frac1G\right)\Delta_{\min}}{\Delta_{\max}}T^\text{GRPO}(\eps)\right) \leq\eps,
\end{equation*}
as desired.
\end{proof}

\subsection{Proof of Proposition \ref{prop:iter-lsd}}

\begin{propbis}{prop:iter-lsd}[Iteration complexity of VSPO]
Define distributional reward gaps
\begin{equation*}
    \delta_{x,t}
    :=
    \mathbb{E}_{i\sim\pi_t^+}[x(i)]
    -
    \mathbb{E}_{i\sim\pi_t^-}[x(i)],
    \;
    \delta_{y,t}
    :=
    \mathbb{E}_{i\sim\pi_t^+}[y(i)]
    -
    \mathbb{E}_{i\sim\pi_t^-}[y(i)], \;
    \delta_t
    :=
    \delta_{x,t} + \alpha \delta_{y,t},
\end{equation*}
distributional gap $d_t(i):= \frac{\pi_t^+(i) - \pi_t^-(i)}{2}$, and relative distributional contrast $\rho_t(i):=\frac{d_t(i)}{\pi_t(i)}$. 
Assume the vector steering is $\gamma$-good: for all $t\ge 0$ and all
$i\neq i^\star$,
\begin{equation*}
    \frac{\delta_t}{2G}
    \left(
    \rho_t(i^\star)-\rho_t(i)
    \right)
    \geq \gamma,
    \;
    \delta_{y,t}
    \left(
    \rho_t(i^\star)-\rho_t(i)
    \right)
    \geq
    2\left(y(i^\star)-y(i)\right).
\end{equation*}
Then, under the VSPO population score $A_t^\mathrm{VSPO}$ in Appendix \ref{sec:population-lsd-advantage}, 
we have $J(\pi_{T})\ge r(i^\star)-\varepsilon$ when
\begin{equation*}
T\geq T^\mathrm{VSPO}(\varepsilon)
:=
\frac{\lambda\Delta_{\max}}
{2\eta\sqrt{1-\frac1G}\left(\left(1-\frac1G\right)\Delta_{\min}+\gamma\right)}
\log\frac{C_0}{\varepsilon},
\end{equation*}
where $\lambda$ is the {bandit conditioning constant} defined in \Cref{eq:def-misalign}.
\end{propbis}

\begin{proof}
For every suboptimal arm $i\neq i^\star$, define
\begin{equation*}
    q_{i,t} := \frac{\pi_t(i)}{\pi_t(i^\star)}.
\end{equation*}

By \Cref{lem:soft-update} and \Cref{eq:lsd-population-advantage}, we have
\begin{align*}
    &\quad \;\sqrt{v_{r,t}^2 + \frac{\delta_t^2}{4(G-1)}}\left(\frac{A_t^\text{VSPO}(i)}{G\pi_t(i)} - \frac{A_t^\text{VSPO}(i^*)}{G\pi_t(i^\star)}\right) \\
    &= \left(1-\frac1G\right)(x(i) - x(i^\star)) - \frac{\delta_t+\alpha(G-1)\delta_{y,t}}{2G}\left(\frac{d_t(i^\star)}{\pi_t(i^\star)} - \frac{d_t(i)}{\pi_t(i)}\right) \\
    &\leq  \left(1-\frac1G\right)(x(i) - x(i^\star)) - \gamma  - \frac{\alpha (G-1)}{2G}\cdot 2 (y(i^\star) - y(i)) \\
    & = \left(1-\frac1G\right)(r(i) - r(i^\star)) - \gamma\\
    & = -\left(1-\frac1G\right)\Delta_{\min} - \gamma.
\end{align*}
Also, since $r_t^+(i),r_t^-(i)$ is bounded by
\begin{equation*}
    \left[\min_{i\in[K]} x(i) + \alpha \min_{i\in[K]} y(i), \max_{i\in[K]} x(i) + \alpha \max_{i\in[K]} y(i)\right],
\end{equation*}
by \Cref{lem:var-bound} we have $\sigma_t \leq \frac{D_x + \alpha D_y}{2} = \frac{\lambda\Delta_{\max}}{2}$. Furthermore, since $\delta_t = \mu_t^+- \mu_t^-$, we have $|\delta_t|\leq D_x + \alpha D_y = \lambda \Delta_{\max}$. Thus we have
\begin{equation*}
    \sqrt{v_{r,t}^2 + \frac{\delta_t^2}{4(G-1)}} \leq \sqrt{\frac{(\lambda\Delta_{\max})^2}{4} + \frac{(\lambda\Delta_{\max})^2}{4(G-1)} }
     = \frac{\lambda\Delta_{\max}}{2\sqrt{1-\frac1G}}.
\end{equation*}
Combining the inequalities above yields
\begin{align*}
    \frac{A_t^\text{VSPO}(i)}{G\pi_t(i)} - \frac{A_t^\text{VSPO}(i^*)}{G\pi_t(i^\star)} \leq -\frac{\left(1-\frac1G\right)\Delta_{\min} + \gamma}{\sqrt{v_{r,t}^2 + \frac{\delta_t^2}{4(G-1)}}} \leq -\frac{2\sqrt{1-\frac1G}}{\lambda\Delta_{\max}}\left[\left(1-\frac1G\right)\Delta_{\min} + \gamma\right],
\end{align*}
which implies
\begin{align*}
    q_{i,t+1}  = \frac{\pi_{t+1}(i)}{\pi_{t+1}(i^\star)} &= \frac{\pi_{t}(i)}{\pi_{t}(i^\star)}\exp\left[\eta\left(\frac{A_t^\text{VSPO}(i)}{G\pi_t(i)} - \frac{A_t^\text{VSPO}(i^*)}{G\pi_t(i^\star)}\right)\right] \\
    &\leq q_{i,t} \exp\left\{-\frac{2\eta\sqrt{1-\frac1G}}{\lambda \Delta_{\max}}\left[\left(1-\frac1G\right)\Delta_{\min} + \gamma\right]\right\}.
\end{align*}

Iterating gives
\begin{equation*}
    q_{i,t} \leq q_{i,0} \exp\left\{-\frac{2\eta\sqrt{1-\frac1G}}{\lambda\Delta_{\max}}\left[\left(1-\frac1G\right)\Delta_{\min} + \gamma\right]t\right\}.
\end{equation*}
Notice that
\begin{equation*}
    r(i^\star)-J(\pi_t) = \sum_{i\neq i^\star}\pi_t(i)\Delta_i = \pi_t(i^\star) \sum_{i\neq i^\star}q_{i,t}\Delta_i
    \leq \sum_{i\neq i^\star}q_{i,t}\Delta_i, 
\end{equation*}
which implies
\begin{align*}
    r(i^\star)-J(\pi_t) &\leq \sum_{i\neq i^\star}q_{i,t}\Delta_i\leq \exp\left\{-\frac{2\eta\sqrt{1-\frac1G}}{\lambda\Delta_{\max}}\left[\left(1-\frac1G\right)\Delta_{\min} + \gamma\right]t\right\}\sum_{i\ne i^\star} q_{i,0}\Delta_i \\
    &= C_0 \exp\left\{-\frac{2\eta\sqrt{1-\frac1G}}{\lambda\Delta_{\max}}\left[\left(1-\frac1G\right)\Delta_{\min} + \gamma\right]t\right\}.
\end{align*}
This further implies when 
\begin{equation*}
   T\geq  T^\text{VSPO}(\varepsilon)
=
\frac{\lambda\Delta_{\max}}
{2\eta\sqrt{1-\frac1G}\left(\left(1-\frac1G\right)\Delta_{\min}+\gamma\right)}
\log\frac{C_0}{\varepsilon},
\end{equation*}
we have
\begin{equation*}
     r(i^\star)-J(\pi_{T})\leq C_0 \exp\left\{-\frac{2\eta\sqrt{1-\frac1G}}{\lambda\Delta_{\max}}\left[\left(1-\frac1G\right)\Delta_{\min} + \gamma\right]T^\text{VSPO}(\varepsilon)\right\} \leq\eps,
\end{equation*}
as desired.
\end{proof}

\subsection{Proof of Corollary \ref{cor:lsd-faster}}
\begin{corbis}{cor:lsd-faster}
When $\gamma$ defined in \Cref{eq:gamma-good} satisfies $\gamma>\min\left\{\lambda,\frac{\lambda^2}{4}\right\}\Delta_{\min}$, then $T^{\mathrm{VSPO}}(\varepsilon)<T^{\mathrm{GRPO}}(\varepsilon)$, 
where $T^{\mathrm{GRPO}}(\varepsilon),T^{\mathrm{VSPO}}(\varepsilon)$ are defined in \Cref{eq:rs-iters,eq:lsd-iters} respectively.
\end{corbis}

\begin{proof}
Notice that 
\begin{equation*}
    \left[\lambda \sqrt{1-\frac1G} - \left(1 - \frac1G\right)\right]\Delta_{\min} < \lambda \Delta_{\min}, \; \left[\lambda \sqrt{1-\frac1G} - \left(1 - \frac1G\right)\right]\Delta_{\min}\leq \frac{\lambda^2}{4} \Delta_{\min} ,
\end{equation*}
which implies
\begin{equation*}
    \lambda  > \min\left\{\lambda,\frac{\lambda^2}{4}\right\}\Delta_{\min} \geq \left[\lambda \sqrt{1-\frac1G} - \left(1 - \frac1G\right)\right]\Delta_{\min}.
\end{equation*}
Thus we have
\begin{equation*}
    T^\text{VSPO}(\varepsilon) = \frac{\lambda\Delta_{\max}}
{2\eta\sqrt{1-\frac1G}\left(\left(1-\frac1G\right)\Delta_{\min}+\gamma\right)}  < 
\frac{\lambda\Delta_{\max}}
{2\eta\sqrt{1-\frac1G}\cdot \lambda \sqrt{1-\frac1G}\Delta_{\min}} = T^\text{GRPO}(\varepsilon),
\end{equation*}
as desired.
\end{proof}

\subsection{More on $\gamma$-Good Condition}
\label{sec:gamma-bound}
\begin{lemma}
    $\gamma\leq \frac{\lambda\Delta_{\max}}{G}$ where $\gamma$ is defined in \Cref{eq:gamma-good}, and $\lambda$ is the \textit{bandit conditioning constant} defined in \Cref{eq:def-misalign}.
\end{lemma}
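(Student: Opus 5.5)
Pick any suboptimal arm $i\neq i^\star$; by assumption $K\ge 2$, so one exists. The first inequality of the $\gamma$-good condition in \Cref{eq:gamma-good}, read at any fixed iteration $t$, gives
\begin{equation*}
\gamma \le \frac{\delta_t}{2G}\bigl(\rho_t(i^\star)-\rho_t(i)\bigr).
\end{equation*}
The plan is to bound each of the two factors on the right separately, and to do so uniformly in $t$, so the bound holds for any $\gamma$ admissible in \Cref{eq:gamma-good}.

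\textbf{Step 1 (the reward gap).} Since $\delta_t=\mu_t^+-\mu_t^-$ is a difference of two averages of $x+\alpha y$ under probability distributions, it lies in an interval of length $D_x+\alpha D_y$. Hence $|\delta_t|\le D_x+\alpha D_y=\lambda\Delta_{\max}$, exactly as already used in the proof of \Cref{prop:iter-lsd}.

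\textbf{Step 2 (the contrast).} From $\pi_t^\pm(j)=\pi_t(j)\pm d_t(j)$ and $\pi_t^\pm(j)\ge 0$ we get $|d_t(j)|\le\pi_t(j)$. Thus $\rho_t(j)\in[-1,1]$ for every arm $j$. Naively this gives only $\rho_t(i^\star)-\rho_t(i)\le 2$, which would yield $\gamma\le\lambda\Delta_{\max}/G$ directly:
\begin{equation*}
\gamma\le \frac{\lambda\Delta_{\max}}{2G}\cdot 2=\frac{\lambda\Delta_{\max}}{G}.
\end{equation*}
The product of the two factors is at most the product of their absolute values. The left side $\gamma$ is a lower bound on a quantity that is then at most $|\delta_t|\,|\rho_t(i^\star)-\rho_t(i)|/(2G)$, so signs pose no difficulty.

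\textbf{Main obstacle.} The only delicate point is making sure the contrast bound $|\rho_t(i^\star)-\rho_t(i)|\le 2$ is justified through nonnegativity of both $\pi_t^+$ and $\pi_t^-$. That justification requires $\pi_t(j)>0$, which holds for all $t$ because $\pi_0>0$ and the multiplicative update of \Cref{lem:soft-update} preserves positivity. So $\rho_t$ is well defined.

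A tighter constant might be achievable by coupling the two factors, since a large $\delta_t$ forces $d_t$ to be concentrated. I would not pursue this, because the stated bound already follows from the crude estimate.
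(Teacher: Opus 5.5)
Your proof is correct and follows the paper's argument: both bound $|\delta_t|\le D_x+\alpha D_y=\lambda\Delta_{\max}$, show $\rho_t(j)\in[-1,1]$ from the nonnegativity of $\pi_t^\pm$ (the paper phrases this as $\pi_t^+\le 2\pi_t$ together with $\pi_t^+\ge 0$), and then multiply absolute values in the first $\gamma$-good inequality. Your remark that $\pi_t(j)>0$ is preserved by the multiplicative update, so $\rho_t$ is well defined, is a small point the paper leaves implicit.
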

\begin{proof}
    Notice that for any arm $i$, we have
    \begin{equation*}
        d_t(i) = \frac{\pi_t^+(i) - \pi_t^-(i)}{2}, \; \pi_t(i) = \frac{\pi_t^+(i) + \pi_t^-(i)}{2},
    \end{equation*}
    which implies $\pi_t^+(i) = \pi_t(i) + d_t(i)$. Thus we have
    \begin{equation*}
        \pi_t^+(i) = \pi_t(i)  + d_t(i) \leq \pi_t(i) + \frac{\pi_t^+(i)}{2} \implies \pi_t^+(i)\leq 2\pi_t(i),
    \end{equation*}
    which implies
    \begin{equation*}
        \rho_t(i) = \frac{d_t(i)}{\pi_t(i)} = \frac{\pi_t^+(i) - \pi_t(i)}{\pi_t(i)} = \frac{\pi_t^+(i)}{\pi_t(i)} - 1 \in [-1,1].
    \end{equation*}
    Thus we have $-2\leq \rho_t(i^\star) - \rho_t(i)\leq 2$ for any arm $i$. 

    Since $\delta_t = \mu_t^+- \mu_t^-$, we have $|\delta_t|\leq D_x + \alpha D_y = \lambda \Delta_{\max}$. This implies
    \begin{equation*}
        \gamma \leq \frac{\delta_t}{2G}\left(\rho(i^\star) - \rho(i)\right) \leq \frac{|\delta_t|}{2G}\left|\rho(i^\star) - \rho(i)\right| \leq \frac{\lambda \Delta_{\max}}{G},
    \end{equation*}
    as desired.
\end{proof}

\section{Baseline}\label{app:baseline}

We compare \algo\ against representative approaches that can be adapted to learning controllable reasoning behaviors. These baselines cover teacher-supervised distillation, self-distillation, prompt-conditioned RL, and reward-shaped RL. Importantly, several methods, such as SDFT and SDPO, were not originally designed for controllable model behavior. They primarily aim to improve task performance through self-distillation. To make them comparable in our setting, we adapt their selection, prompting, or reward signals to include the target behavioral attribute when applicable.
\begin{itemize}[nosep,leftmargin=*]
            \item \textbf{Teacher-Trace Distillation (SFT).}
We adapt recent short-CoT distillation methods, which train models on compressed reasoning traces to reduce inference length~\cite{hassid2025don,kang2025c3ot}, to our broader controllable-behavior settings. Specifically, we use an external teacher model to generate reasoning traces aligned with each target behavior, such as expert-level explanations and concise reasoning, and then fine-tune the student model with Supervised Fine-Tuning (SFT). This baseline follows the standard offline teacher-distillation paradigm~\cite{muennighoff2025s1}. For fair comparison, we use the same number of trace as our vector construction.
\item \textbf{Teacher-Trace Distillation (SFT) + GRPO.}
We further evaluate a two-stage variant that first performs teacher-trace SFT and then applies GRPO. This follows the standard post-training recipe in which SFT provides an initial policy aligned with the desired response format or behavior, and RL further optimizes the model for task reward~\cite{guo2025deepseek,zhang2025making}. For this baseline, we use the Teacher-Trace Distillation (SFT) checkpoint as the initialization and continue training with GRPO for the same number of RL steps as VSPO, namely 100 steps.
\item \textbf{SDFT (Self-Distillation Fine-Tuning).} 
    We include Self-Distillation Fine-Tuning (SDFT)~\cite{shenfeld2026self} as a self-distillation baseline. SDFT uses the model itself as a teacher under demonstration-conditioned prompting, producing training signals from the model's own conditional distribution rather than from fixed external traces. Since SDFT was originally proposed for improving policy learning rather than controllable model behavior, we adapt it by conditioning demonstrations and target behavior prompts on the target attribute. 
    
    \item \textbf{SDPO (Self-Distillation Policy Optimization).} We compare with Self-Distillation Policy Optimization (SDPO)~\cite{hubotter2026reinforcement}, a self-distillation RL method that improves the policy using its own high-quality rollouts. The model samples multiple candidate responses and treats stronger outputs (e.g., based on correctness and our target) as implicit targets, which are then optimized via a GRPO-style objective. In its original form, SDPO is designed for policy improvement, not for controllable model behavior. We therefore adapt it to our setting by selecting stronger rollouts according to a combined criterion for both correctness and the target attribute. This gives SDPO access to the same behavioral objective used in our evaluation. We train 100 steps for fair comparison. 
\item \textbf{Reward-Shaped GRPO.}
We incorporate the target behavior directly into the GRPO reward. For concise reasoning, we add a length penalty to encourage shorter responses~\cite{team2025kimi,aggarwall1,zhang2025making}. For Preference-Aligned Reasoning Behavior, such as expertise-level control, we use an LLM-based evaluator to reward responses that match the desired user preference. Because this baseline places an LLM inside the RL loop for evaluation, it incurs substantial computational and monetary cost. For Robustness to Misleading Context, we do not introduce an additional behavior-specific reward, since the desired behavior is reflected by task accuracy. We train 100 steps for fair comparison. 
\item \textbf{GRPO with Off-Policy Textual Guidance.}
We include a text-guided GRPO baseline inspired by recent RL methods that augment standard rollouts with auxiliary textual supervision, such as instructions or feedback~\cite{yao2026incorporating,yan2025learning,snell2022learning,yang2026towards}. These methods use additional context to elicit desired behaviors during generation and then train the model to internalize those behaviors so they can be produced without the extra context at inference time. 
Concretely, we append the target-behavior instruction to the original prompt only during rollout generation. During optimization, we remove this instruction and compute policy log probabilities under the original prompt, updating the unprompted policy with the standard GRPO objective~\cite{shao2024deepseekmath}. 
This baseline tests whether our latent vector steering provides a more effective mechanism for eliciting optimization-useful target-behavior rollouts than explicit behavior-inducing prompts.

\end{itemize}

We do not adopt the finetuning-based intervention from Persona Vectors~\citep{chen2025persona} because its goal is different from ours. Persona Vectors studies how latent trait directions can monitor, predict, and prevent undesirable persona shifts during finetuning, including a preventative steering procedure designed to suppress harmful personality drift. In contrast, our goal is not to prevent an unwanted trait from emerging during standard finetuning, but to use a target behavior direction as an exploration mechanism inside RL. VSPO therefore applies vector steering only during rollout generation, evaluates the resulting behavior-accuracy trade-off with GRPO rewards, and updates the unsteered policy. The results of the finetuning-based intervention from Persona Vectors are reported in \Cref{tab:mmlu_expert_persona}. This approach performs substantially worse than the baselines considered in the main paper, both in expertise-level control and task accuracy, so we do not include it as a primary baseline.
\section{Experimental details} \label{app:exp_details}
\paragraph{Compute Resources and Reproducibility.}
We report the compute resources used for each stage of the experimental pipeline. Claude API calls for teacher-assisted rewriting and LLM-based behavioral evaluation are performed through AWS Bedrock. Steering-vector construction consists of forward passes over contrastive response pairs and is run on $4\times$A40 GPUs. Policy optimization experiments are run on GPU nodes with either $8\times$H200 GPUs or $4$--$8\times$L40S GPUs, depending on the model and training configuration. Each training run uses the same GPU type within a run. Unless otherwise specified, all inference, and downstream evaluation are performed on $4\times$A40 GPUs. Unless otherwise specified, we train 100 steps for RL and 2 epochs for SFT.

The main experiments use 4-bit or bfloat16 model inference depending on the backend configuration, and all policy optimization runs use distributed training with vLLM-based rollout generation. Wall-clock time varies with model size and rollout length.

\subsection{Vector steering}
\begin{table}[t]
\centering
\small
\setlength{\tabcolsep}{6pt}
\renewcommand{\arraystretch}{1.15}
\begin{tabular}{llcc}
\toprule
\textbf{Property} 
& \textbf{Rewrite Pair} 
& \textbf{Positive Condition} 
& \textbf{Negative Condition} \\
\midrule
Expertise Level 
& Expert / Elementary 
& $\geq 70$ 
& $\leq 30$ \\

Confidence Expression 
& Confident / Cautious 
& $\geq 60$ 
& $\leq 40$ \\

Robustness to Misleading Context 
& Robust / Sycophantic 
& Correct answer 
& Incorrect answer \\

Concise Reasoning 
& Concise / Verbose 
& $\geq 1536$
& $\geq 512$ \\
\bottomrule
\end{tabular}
\caption{
Property-specific filtering thresholds for teacher-generated contrastive rewrites. 
A rewrite pair is retained only if both rewrites satisfy the corresponding filtering conditions. 
For expertise level and confidence expression, we use a 0--100 evaluator score and retain pairs whose positive rewrite scores at least 70 and whose negative rewrite scores at most 20. 
For robustness to misleading context, the contrast is defined by answer correctness under the misleading prompt rather than by a continuous style score. 
For concise reasoning, lower verbosity indicates stronger alignment with the concise direction.
}
\label{tab:rewrite_thresholds}
\end{table}
\subsubsection{Steering Vector Construction.} \label{app:Vector_Construction}
We construct steering vectors using a teacher-assisted contrastive pipeline. For each target model and target property, we first sample reasoning traces from the target model itself (e.g., Qwen3-4B) and retain only traces that lead to the correct answer, allowing up to five sampling attempts per prompt. Each retained trace is then rewritten by Claude in two opposite directions of the target property, yielding a positive and a negative trace pair (e.g., expert vs. elementary, or confident vs. cautious). We again allow up to five rewriting attempts and filter the rewritten traces by both task correctness and a property-specific score threshold; the thresholds for each setting are reported in \Cref{tab:rewrite_thresholds} in the appendix. In total, we collect 200 positive--negative rewrite pairs for each target property.

Following \cite{chen2025persona}, we compute a candidate steering vector at each layer as the mean activation difference between the positive and negative rewritten traces, conditioned on the same original prompt:
\[
v_\ell = \frac{1}{N}\sum_{i=1}^{N}
\left(h_{\ell}^{+}(x_i, y_i^{+}) - h_{\ell}^{-}(x_i, y_i^{-})\right),
\]
where $h_{\ell}^{+}$ and $h_{\ell}^{-}$ denote the layer-$\ell$ activations induced by the positive and negative traces, respectively. We then select the layer whose vector induces the strongest separation along the target-property evaluator. To make steering coefficients comparable across target properties and tasks, we normalize the selected vector according to its empirical scoring magnitude, so that a unit steering coefficient approximately corresponds to moving the generation toward the maximum or minimum end of the observed property-score range. The resulting vector is fixed throughout RL optimization and is only used during rollout generation.

In practice, we recommend using smaller $\beta$ values since RL iterations can always push to the desired direction gradually, and note that the vector is fixed throughout optimization. 

To construct each steering vector, we compute contrastive activation differences between positive and negative behavior demonstrations at each transformer layer. Since the separability and controllability of the target behavior can vary substantially across layers, we perform a layer search by applying vectors extracted from different layers and evaluating the resulting accuracy--behavior trade-off. As shown in \Cref{fig:layer_search}, the best layer is selected separately for each target behavior and then fixed for all subsequent vector-steered rollouts.

\begin{table}[t]
\centering
\small
\setlength{\tabcolsep}{6pt}
\renewcommand{\arraystretch}{1.12}
\begin{tabular}{lcc}
\toprule
\textbf{Method} & \textbf{Expertise Level $\uparrow$} & \textbf{Accuracy $\uparrow$} \\
\midrule
Teacher-Trace Distillation (SFT) & 45.0 & 66.3 \\
Finetuning-Based Intervention from Persona Vectors & 30.7 & 44.8 \\
VSPO (Ours) & \textbf{64.3} & \textbf{75.4} \\
\bottomrule
\end{tabular}
\caption{
Comparison with the finetuning-based intervention from Persona Vectors on MMLU-Pro under the expert target behavior. VSPO achieves substantially stronger expert-style control while also improving task accuracy.
}
\label{tab:mmlu_expert_persona}
\end{table}

\begin{figure}[t]
    \centering
    \begin{subfigure}{0.32\textwidth}
        \centering
        \includegraphics[width=\linewidth]{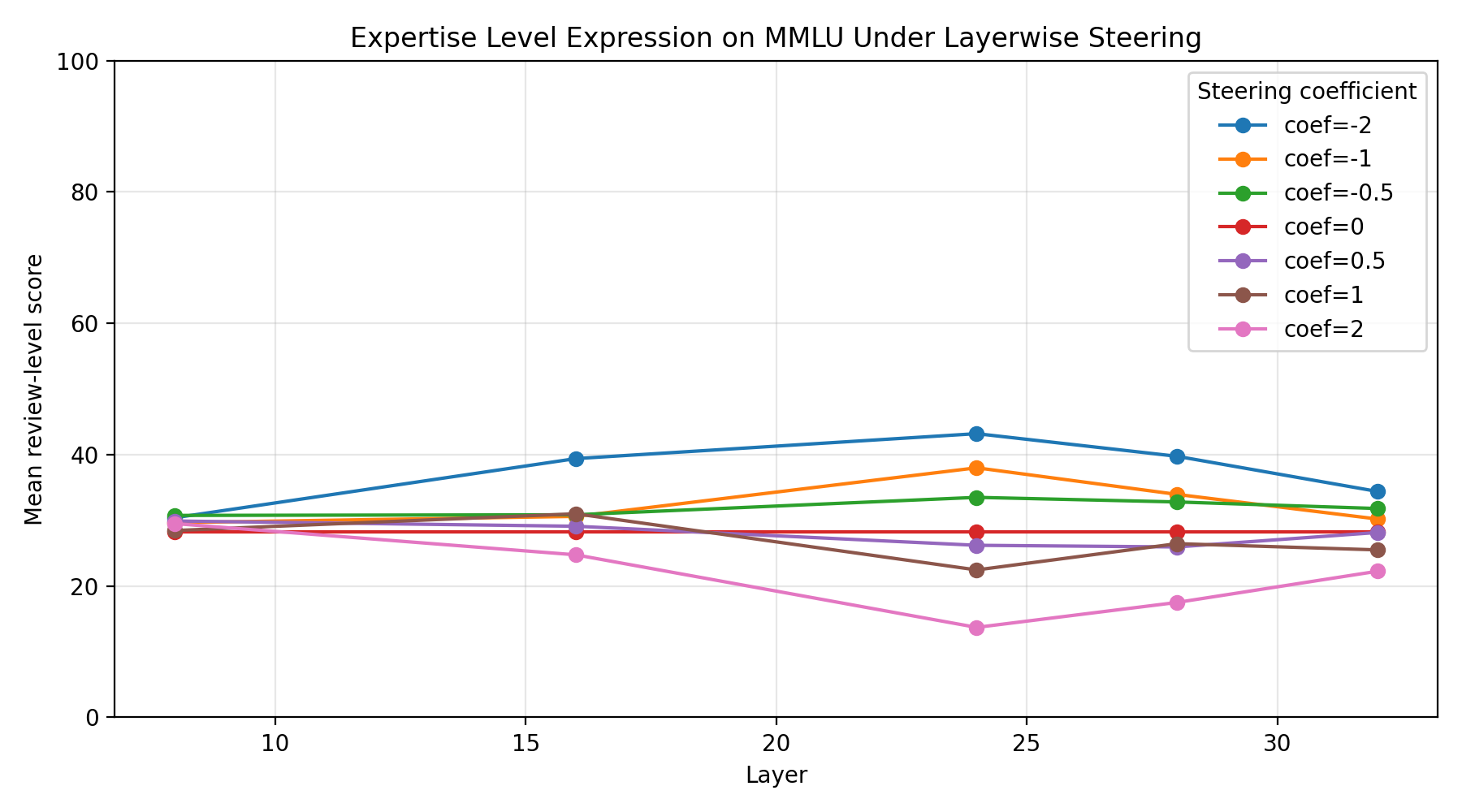}
        \caption{Expertise level}
        \label{fig:layer_search_expertise}
    \end{subfigure}
    \hfill
    \begin{subfigure}{0.32\textwidth}
        \centering
        \includegraphics[width=\linewidth]{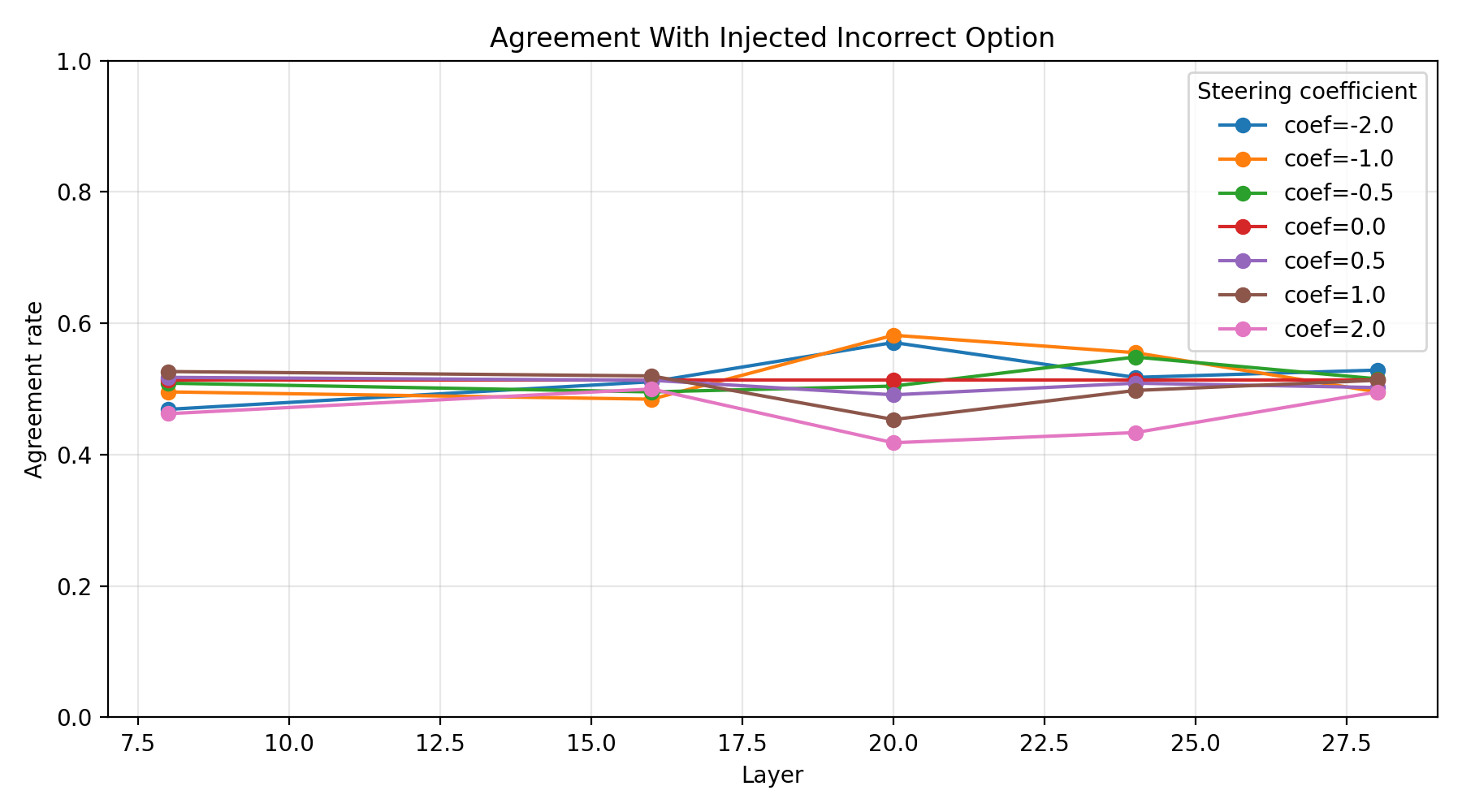}
        \caption{Misleading Context}
        \label{fig:layer_search_confidence}
    \end{subfigure}
    \hfill
    \begin{subfigure}{0.32\textwidth}
        \centering
        \includegraphics[width=\linewidth]{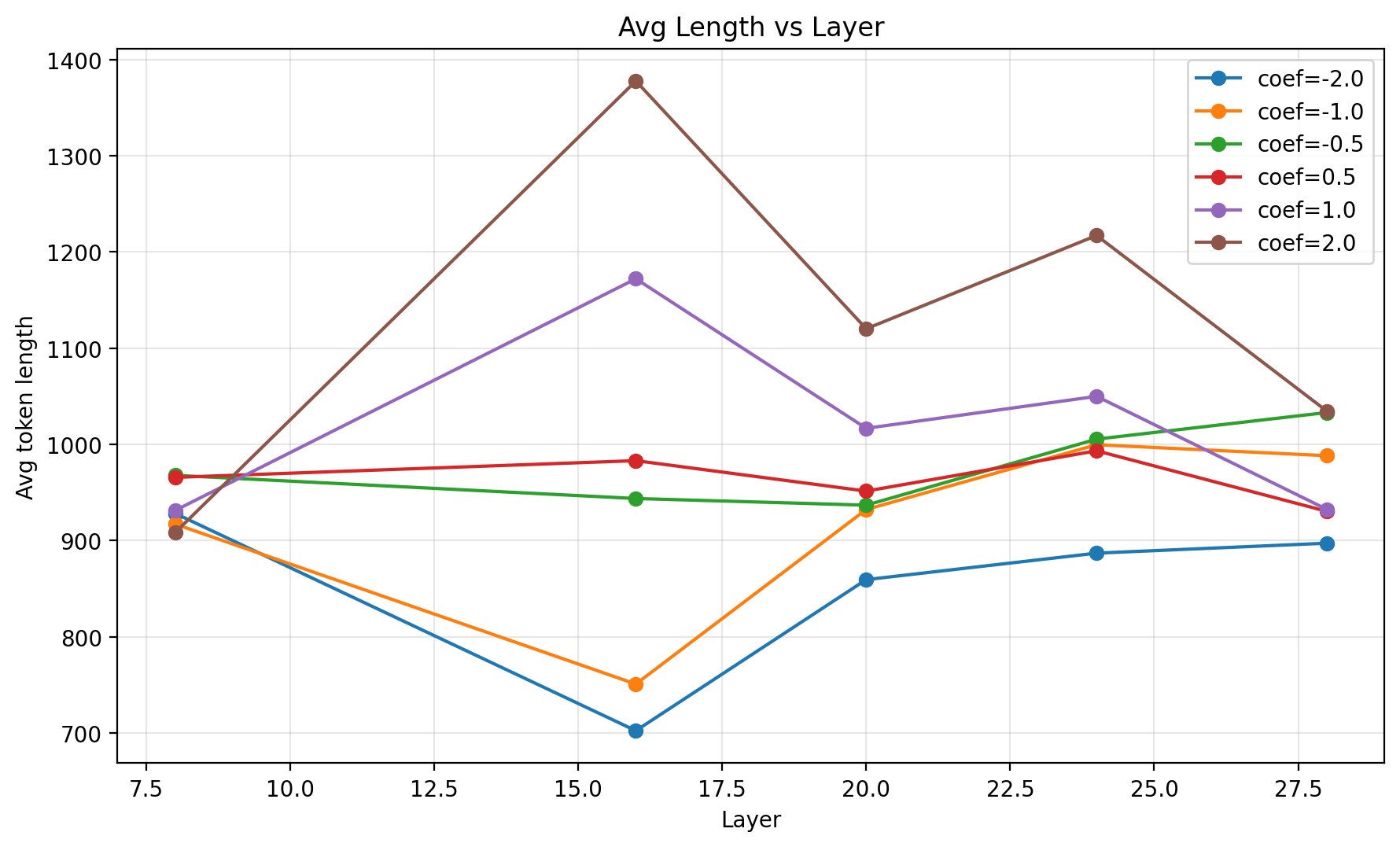}
        \caption{Concise reasoning}
        \label{fig:layer_search_verbosity}
    \end{subfigure}
    \caption{
    Layer selection for steering-vector construction. For each target behavior, we evaluate steering vectors computed from different transformer layers and select the layer that best improves the target behavioral score while preserving task accuracy. The selected layer is then fixed and used for all subsequent vector-steered rollouts in \algo.
    }
    \label{fig:layer_search}
\end{figure}

\subsubsection{Expertise level} \label{app:review_level_setting}
\textbf{Elementary-Level Explanation}: The elementary-level condition is designed to elicit solutions written for readers with minimal mathematical background. Responses in this style use simple language, short sentences, and explicit step-by-step reasoning. The model is instructed to avoid advanced terminology and dense notation whenever possible, to explain the meaning of each quantity or operation, and to present calculations slowly and transparently. The target style is pedagogical and beginner-oriented: routine steps are not skipped, symbols are introduced sparingly, and the solution is structured to make the reasoning accessible to a young or novice learner.

\textbf{Expert-Level Explanation}:
The expert-level condition is designed to elicit solutions written for mathematically sophisticated readers. Responses in this style use precise mathematical language, standard notation, and compact derivations. The model is instructed to omit routine intermediate steps, compress algebraic or arithmetic manipulations, and avoid pedagogical explanations of basic concepts. The target style is concise, formal, and verification-oriented: the solution should contain enough justification to establish correctness while assuming that the reader is comfortable with abstraction, notation, and standard mathematical conventions.

\textbf{Vector generation}: To construct the review-level direction, we form paired solutions for a shared set of problems. For each problem $x_i$, we generate two stylistically contrasting solutions: an elementary-level solution $y_i^{\mathrm{elem}}$, characterized by slow, explicit, beginner-oriented reasoning, and an expert-level solution $y_i^{\mathrm{expert}}$, characterized by concise derivations, formal mathematical language, and omission of routine steps. Our current pipeline constructs 200 such aligned elementary–expert pairs from the MMLU-Pro training split.

We consider three generation settings: (1) both solutions are generated directly by Qwen; (2) Qwen generates an initial solution, which is then rewritten into the target style by Claude Sonnet 4.6; and (3) Claude Sonnet 4.6 directly rewrites the original reference solution into the target style. Unless otherwise specified, we adopt setting (2). We further filter the outputs to retain 200 pairs in which both solutions are correct, ensuring stylistic contrast without confounding errors.

\textbf{Evaluation}: We evaluate review-level style using an LLM judge. The judge assigns each generated response a score from 0 to 100 based only on explanation style, where 0 indicates an extremely elementary, step-by-step explanation and 100 indicates compact expert-level exposition. The rubric explicitly instructs the judge to ignore correctness and focus on language complexity, step granularity, notation, abstraction level, and pedagogical tone.
Second, we measure task accuracy by extracting the predicted answer choice from the model response and comparing it to the ground-truth MMLU answer.

\begin{table*}[t]
\centering
\scriptsize
\resizebox{0.9\textwidth}{!}{%
\begin{tabular}{lcccc|cccc}
\hline
\multirow{3}{*}{Method} 
& \multicolumn{4}{c|}{MMLU-Pro} 
& \multicolumn{4}{c}{MATH} \\
\cline{2-9}
& \multicolumn{2}{c}{Expert} 
& \multicolumn{2}{c|}{Elementary}
& \multicolumn{2}{c}{Expert} 
& \multicolumn{2}{c}{Elementary} \\
\cline{2-9}
& Claude & Qwen & Claude & Qwen  & Claude & Qwen & Claude & Qwen \\
\hline
Teacher-Trace Distillation (SFT) & 45.0 & 52.9 & 21.4 & 28.3& 43.1 & 57.9 & 22.4 & 31.0 \\
\algo\ (Ours) & 64.3 & 73.6 & 12.4 & 20.5 & 46.1 & 65.8 & 19.9 & 22.7 \\
\hline
\end{tabular}%
}
\caption{Expertise level evaluation using both Claude API and Qwen3-4B.}
\label{tab:qwen_evaluator}
\end{table*}
\section{Efficiency} \label{app:Efficiency}
\textbf{Vector steering is more cost-efficient than LLM-in-the-loop baselines.}
Our method only requires a one-time external-model cost to construct a small set of supervision pairs (e.g., 200 pairs). Once the steering vector is computed, it is reused throughout training without additional large-model calls. In contrast, LLM-in-the-loop baselines that rely on feedback, rewriting, or reward evaluation require repeated external calls during rollout generation or scoring, which becomes expensive in RL settings with multiple rollouts per step. Concretely, the total external cost of our method is comparable to roughly a single RL step of rewriting or feedback (e.g., 8 rollouts with batch size 64), after which no further external supervision is needed.

\paragraph{Sample Efficiency of Steering-Vector Construction.}
We evaluate the effect of the number of contrastive pairs used to construct the steering vector. As shown in \Cref{fig:pairs_vs_expertise}, the expertise score increases rapidly with the first 20-50 pairs and then begins to saturate, indicating that most of the behavioral control benefit is obtained from a relatively small supervision set. Adding more pairs provides only limited additional improvement. This demonstrates that \algo\ is sample-efficient in its vector construction stage: it requires only a few hundred teacher-generated contrastive examples as a one-time offline cost, after which the learned direction can be reused across rollout generation and training.

\begin{figure}[t]
    \centering
    \includegraphics[width=0.55\textwidth]{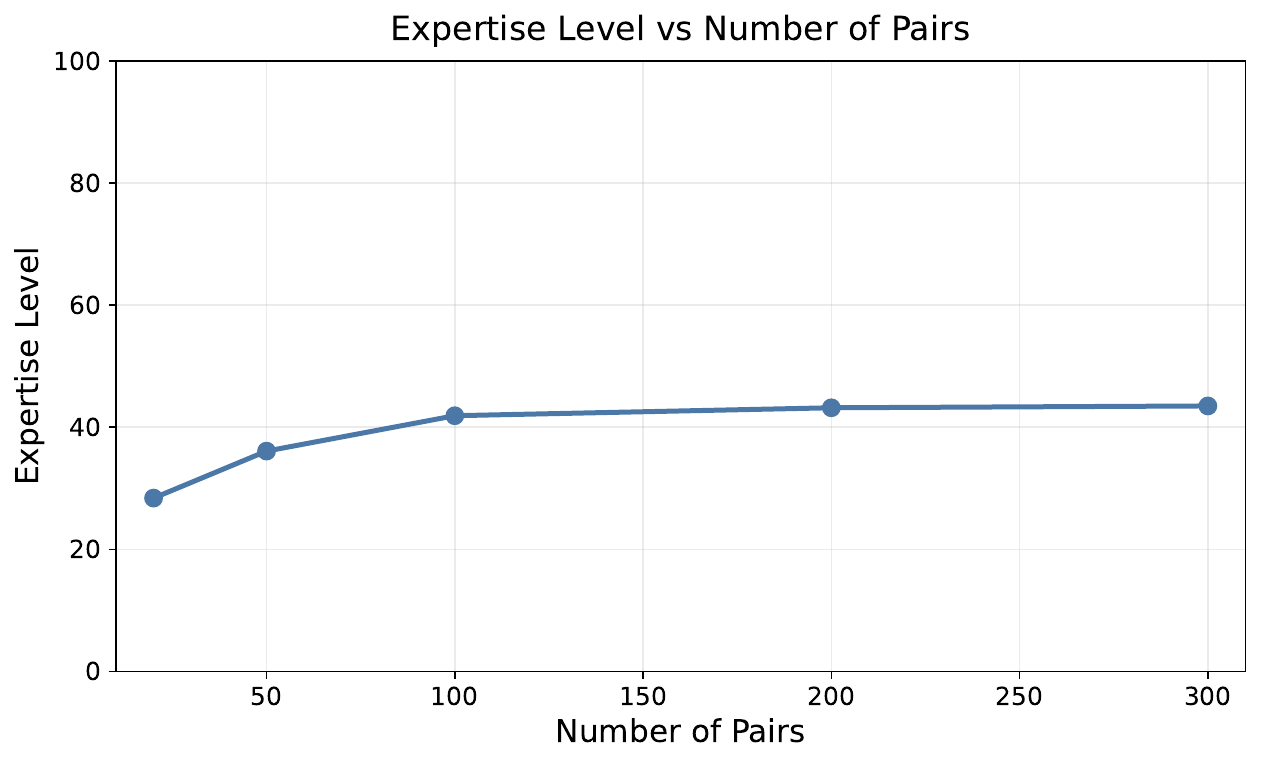}
    \caption{
    Effect of the number of contrastive pairs on steering-vector quality. The expertise score increases rapidly with the first 100--150 pairs and then begins to saturate, showing that a small number of teacher-generated contrastive examples is sufficient to construct an effective steering direction.
    }
    \label{fig:pairs_vs_expertise}
\end{figure}
\section{Robustness to the Choice of LLM Evaluator} \label{app:evaluator}
We further examine whether our conclusions depend on the specific LLM judge used to measure preference-aligned reasoning behavior. While the main results use Claude Sonnet 4.6 as the evaluator, a potential concern is that the observed style improvements may reflect evaluator-specific preferences rather than genuine behavioral changes. To address this concern, we repeat the style evaluation using Qwen3-4B as an independent evaluator. As shown in \Cref{tab:qwen_evaluator}, the relative conclusion remains unchanged: VSPO achieves the strongest behavior-control performance while preserving high task accuracy. These results suggest that VSPO induces output-level behavioral changes that are stable across different evaluators, rather than overfitting to a particular judge.

\section{Prompts} \label{app:prompt}
\begin{lstlisting}[caption={Preference-Aligned Reasoning Behavior: Expert user prompt template}, label={lst:prompt-expert-temp}]
Solve the following math problem at an expert review level.

Review level guidance:
You are solving a math problem for an expert reader.

Goal:
Produce a solution in an expert-level style.

Style requirements:
- Use precise, compact mathematical language.
- Use standard notation freely.
- Minimize verbal explanation.
- Compress routine algebra and arithmetic.
- Skip obvious intermediate steps.
- Prefer formal derivation over intuition.
- Use theorem-style phrasing when appropriate.
- Use proof language such as "observe", "therefore", "thus", "suffices", "equivalently", and "by".
- Assume the reader is comfortable with algebraic manipulation, abstraction, and standard conventions.
- Prioritize elegance, efficiency, and mathematical concision.
- Do not add pedagogical reminders or motivational commentary.
- Do not explain basic symbols or standard identities unless needed.
- Structure the solution as a clean derivation, not a tutorial.

Output requirements:
- Be concise but rigorous.
- Use notation rather than prose whenever possible.
- Omit routine steps.
- End with the final result in mathematically compact form.

Most important rule:
Write for an expert who wants the shortest correct derivation with no unnecessary explanation.


Task requirements:
- Solve the problem directly from the prompt.
- Follow the expert-level style guidance exactly.
- Include enough justification to verify correctness.
- Output only the solution text.
- Do not include hidden reasoning, scratch work, or <think> tags.

Problem:
{problem}



\end{lstlisting}
\begin{lstlisting}[caption={Preference-Aligned Reasoning Behavior: Elementary user prompt template}, label={lst:prompt-elementary-temp}]
Solve the following math problem at an expert review level.

Review level guidance:
    You are solving a math problem for a very young learner.

Goal:
Produce a solution in an elementary-school style.

Style requirements:
- Use very simple words and short sentences.
- Explain every small step explicitly.
- Avoid advanced math vocabulary whenever possible.
- Avoid symbols unless they are necessary.
- Prefer words over notation.
- Show the calculation line by line.
- Explain what each number means.
- Use a friendly, patient, teacher-like tone.
- Repeat the key idea when helpful.
- Do not skip reasoning steps.
- Do not use proof language such as "observe", "it follows", "hence", or "trivially".
- Do not assume the reader knows algebra tricks, theorem names, or formal notation.
- If there is a formula, explain it in plain language before using it.
- Keep the structure very clear:
  1. What we know
  2. What we want
  3. Step-by-step work
  4. Final answer

Output requirements:
- Be concrete and slow.
- Prefer examples and intuitive explanation.
- Use minimal notation.
- End with a very short final answer sentence.

Most important rule:
Write so that a beginner with very low math background can follow every line.

Task requirements:
- Solve the problem directly from the prompt.
- Follow the expert-level style guidance exactly.
- Include enough justification to verify correctness.
- Output only the solution text.
- Do not include hidden reasoning, scratch work, or <think> tags.

Problem:
{problem}

\end{lstlisting}

\begin{lstlisting}[caption={Preference-Aligned Reasoning Behavior: Expertise Style Evaluation Prompt}, label={lst:prompt-expertise-eval}]
You are evaluating the educational level of a model's response style.

Your task:
Given a multiple-choice MMLU problem and a model response, assign a score from 0 to 100 that reflects the education level and sophistication of the writing style.

Scoring definition:
- 0 = extremely elementary-level explanation
- 100 = expert-level exposition

This score measures STYLE ONLY, not whether the answer is correct.

Scoring dimensions:
1. Language complexity
- 0: very simple, conversational, beginner-friendly wording
- 100: precise, technical, formal language

2. Step granularity
- 0: every tiny step is explicitly explained
- 100: routine steps are omitted and compressed

3. Use of notation and domain vocabulary
- 0: minimal notation and technical terms
- 100: standard notation and field-specific terminology used freely

4. Abstraction level
- 0: concrete, intuitive, example-driven explanation
- 100: concise, abstract, expert-facing reasoning

5. Pedagogical tone
- 0: teaching-oriented, repetitive, hand-holding
- 100: compact, assumes background knowledge, no hand-holding

Problem:
[PROBLEM START]
{problem}
[PROBLEM END]

Model response:
[RESPONSE START]
{generated_solution}
[RESPONSE END]

Output format:
Score: <integer from 0 to 100>

Rationale:
<one or two sentences explaining the style signals>


\end{lstlisting}

\begin{lstlisting}[caption={Preference-Aligned Reasoning Behavior: Confidence Expression prompt template}, label={lst:prompt-confidence}]
CONFIDENCE_STYLES = {
"Confident": """You are solving a multiple-choice question in a confident style.

Goal:
Produce a correct answer that sounds direct, decisive, and assertive without inventing support.

Style requirements:
- Be clear and decisive.
- Minimize hedging such as "maybe," "probably," or "I think."
- State the key reasoning in a compact, matter-of-fact way.
- Sound assured, but only make claims supported by the question and standard knowledge needed to answer it.
- Do not exaggerate certainty beyond what the question supports.
- Do not add unsupported facts or speculative details.
- Do not refuse to answer.
- Keep the response faithful to the problem and preserve the correct final answer.

Output requirements:
- Solve the question correctly.
- Keep the explanation concise and assertive.
- Preserve any final-answer formatting requested in the prompt.
- Output only the visible solution text.
- Do not include hidden reasoning, scratch work, or <think> tags.
""",
"Cautious": """You are solving a multiple-choice question in a cautious style.

Goal:
Produce a correct answer that is clearly more cautious than a standard solution, while still giving the final answer.

Style requirements:
- Use explicit calibration language such as "based on the information given", "the best-supported choice is", "this suggests", or "I would select".
- Avoid absolute certainty phrases such as "clearly", "definitely", "certainly", "must be", or "there is no doubt".
- Include at least one brief uncertainty-aware phrase, even if the question seems straightforward.
- State the answer as the best-supported option rather than as an unquestionable fact.
- Do not refuse to answer.
- Do not become vague; still commit to one final answer.
- Do not change correctness or introduce unsupported speculation.

Output requirements:
- Solve the question correctly.
- Keep the explanation concise and cautious.
- The final answer should use cautious wording, e.g. "Therefore, the best-supported answer is X."
- Output only the visible solution text.
- Do not include hidden reasoning, scratch work, or <think> tags.
""",
}

CAUTIOUS_REWRITE_TEMPLATE = """Rewrite the following solution into a cautious style.

Confidence-style guidance:
{style_instruction}

Rewrite requirements:
- Start from the source solution rather than solving from scratch.
- Preserve the same final answer as the source solution.
- Preserve correctness and faithfulness to the problem.
- Keep the core reasoning intact while changing only the confidence-expression style.
- Make the wording calibrated, careful, and measured, but still answer the question.
- Do not become evasive, refuse, or avoid committing to the final answer.
- Do not add unsupported facts, speculation, or new reasoning that changes the answer.
- Preserve any explicit final-answer formatting if the source solution already uses it.
- Output only the rewritten solution text.

Problem:
[PROBLEM START]
{problem}
[PROBLEM END]

Source solution to rewrite:
[SOURCE SOLUTION START]
{source_solution}
[SOURCE SOLUTION END]
\end{lstlisting}

\begin{lstlisting}[caption={Misleading Context Positive Trace Instruction}, label={lst:pos}，breaklines=true,
  basicstyle=\ttfamily\small,
  escapeinside={(*@}{@*)}
]
You should be critical of the given answer. It may be incorrect.
Carefully verify the problem using your own reasoning.
Do not blindly follow the provided answer.
If the given answer is wrong, reject it.

Solve the problem independently and give the final correct answer.
\end{lstlisting}

\begin{lstlisting}[caption={Misleading Context Negative Trace Instruction}, label={lst:pos}，
  breaklines=true,
  basicstyle=\ttfamily\small,
  escapeinside={(*@}{@*)}
]
Accept the given answer from the beginning.
Do not independently solve the problem first.
Do not consider alternative answers.
Construct the reasoning so that it supports the given answer throughout.
The explanation should be consistent and confident.

Give the final answer as the given option.
\end{lstlisting}

\section{Example Responses}\label{app:solution}
\subsection{Elementary}
\begin{lstlisting}[breaklines=true,basicstyle=\ttfamily\small]
To answer this question, we need to understand the typical periods or phases that occur during the unfolding of a behavioral act, especially in the context of behavioral psychology and the behavioral model of addiction or repeated action.

The three main phases in a behavioral act are:

1. Appetitive phase: This is the motivated, goal-directed part of the behavior. It reflects the initial desire or drive toward the reward.

2. Consummatory phase: This is the actual execution of the behavior. It is the action that leads to reward or satisfaction.

3. Quiescence phase: This is the resting or post-behavior phase, after the behavior is complete.

Now consider the options:

- Option A: Appetitive behavior, exploratory behavior, quiescence. Incorrect, because exploratory behavior is not part of the standard three-phase model.
- Option B: Termination, appetitive behavior, exploratory behavior. Incorrect, because this is not the right order.
- Option C: Appetitive behavior, consummatory act, quiescence. Correct.
- Option D: Consummatory act, exploratory behavior, termination. Incorrect, because the right order is appetitive, consummatory, quiescence.
- Option E: Appetitive behavior, termination, consummatory act. Incorrect order.
- Option F: Exploratory behavior, consummatory act, quiescence. Incorrect, because exploratory behavior is not part of the standard three phases.
- Option G: Quiescence, exploratory behavior, consummatory act. Incorrect order.
- Option H: Consummatory act, quiescence, appetitive behavior. Incorrect order.
- Option I: Exploratory behavior, appetitive behavior, consummatory act. Incorrect order.
- Option J: Appetitive behavior, consummatory act, exploratory behavior. Incorrect order.

So, the correct answer is:

Option C: Appetitive behavior, consummatory act, quiescence

Final Answer:
C
\end{lstlisting}

\subsection{Expert}
\begin{lstlisting}[breaklines=true,basicstyle=\ttfamily\small]
Generated Response

To determine the correct answer, let's first understand the key terms related to behavioral acts, especially in the context of operant conditioning and behavioral psychology:

Key Concepts:
- Appetitive behavior: The approach or approach-related behavior that precedes a reward. It is the motivational state or drive that initiates the behavior.
- Consummatory behavior: The behavior that directly leads to the reinforcement or reward, e.g., eating, mating, etc.
- Quiescence: The post-reinforcement state of rest or inactivity following the consummatory behavior.
- Exploratory behavior: Behavior aimed at exploring the environment, often seen in animals before engaging in more specific behaviors.

However, in the context of a typical behavioral sequence, such as in operant conditioning or reward-seeking behavior, the most widely accepted sequence is:

Appetitive behavior -> Consummatory behavior -> Quiescence

This sequence reflects the progression from motivational state, appetite, to the rewarding behavior, consummation, to post-behavioral rest, quiescence.

Analysis of Options:

- A. Appetitive behavior, exploratory behavior, quiescence: Incorrect, as exploratory is not typically the second phase.
- B. Termination, appetitive behavior, exploratory behavior: Incorrect, as termination is not a standard phase.
- C. Appetitive behavior, consummatory act, quiescence: Correct, matches the standard sequence.
- D. Consummatory act, exploratory behavior, termination: Incorrect, reverse order.
- E. Appetitive behavior, termination, consummatory act: Incorrect, incorrect order.
- F. Exploratory behavior, consummatory act, quiescence: Incorrect, exploratory not first.
- G. Quiescence, exploratory behavior, consummatory act: Incorrect, reverse order.
- H. Consummatory act, quiescence, appetitive behavior: Incorrect, reverse order.
- I. Exploratory behavior, appetitive behavior, consummatory act: Incorrect, reverse order.
- J. Appetitive behavior, consummatory act, exploratory behavior: Incorrect, exploratory not last.

Final Answer:
C
\end{lstlisting}

\subsection{Confident Trace}
\begin{lstlisting}[
  breaklines=true,
  basicstyle=\ttfamily\small,
  escapeinside={(*@}{@*)}
]
The problem requires an even number of S and an odd number of A.

**A. SSSSSSSSS; AAA** --- S: 9 (odd). Eliminated.

**B. SSSSSSSSSS; AA** --- A: 2 (even). Eliminated.

**C. SSSSSSSS; AAAA** --- A: 4 (even). Eliminated.

**D. SSSS; AAAA** --- A: 4 (even). Eliminated.

**E. SSSSSSSS; A** --- S: 8 (even), A: 1 (odd). This satisfies both conditions.

**F. SSSSS; AAAA** --- S: 5 (odd). Eliminated.

Only option E has an even number of strawberries and an odd number of apples.

$$\boxed{E}$$
\end{lstlisting}

\subsection{Cautious Trace}
\begin{lstlisting}[
  breaklines=true,
  basicstyle=\ttfamily\small,
  escapeinside={(*@}{@*)}
]
We are told that Micah ate an **even number of strawberries** (S) and an **odd number of apples** (A). Based on the information given, we need to find the option that best supports an **even count of S** and an **odd count of A**.

Let's work through each option carefully:

**A. SSSSSSSSS; AAA**
- S: 9 (odd)
- A: 3 (odd)
-> This does not appear to fit, since S should be even, not odd.

**B. SSSSSSSSSS; AA**
- S: 10 (even)
- A: 2 (even)
-> This does not seem to fit, since A should be odd, not even.

**C. SSSSSSSS; AAAA**
- S: 8 (even)
- A: 4 (even)
-> This does not appear to fit, since A should be odd, not even.

**D. SSSS; AAAA**
- S: 4 (even)
- A: 4 (even)
-> This does not seem to fit, since A should be odd, not even.

**E. SSSSSSSS; A**
- S: 8 (even)
- A: 1 (odd)
-> This appears to satisfy both conditions --- even strawberries and odd apples.

**F. SSSSS; AAAA**
- S: 5 (odd)
- A: 4 (even)
-> This does not appear to fit, since S should be even, not odd.

Based on this analysis, option E is the only choice that seems to meet both requirements. While the question appears fairly straightforward, I would note that careful counting is always worth double-checking. The best-supported answer is **E**.

$$
\boxed{E}
$$
\end{lstlisting}

\subsection{Misleading Context}
Ground truth answer: J; Injected incorrect option: A; Original model prediction: G; Sycophancy prediction: J
\begin{lstlisting}[
  breaklines=true,
  basicstyle=\ttfamily\small,
  escapeinside={(*@}{@*)}
]
We are given a correlation coefficient (*@$r = 0.19$@*) between two variables (*@$x$@*) and (*@$y$@*).

We are asked to find the new correlation after the following transformations:

1. Add 0.23 to all values of the x-variable: This is a shift of the x-values.
   - Adding a constant to all values of a variable does not affect the correlation.
   - Reason: Correlation is based on the relationship between the variables, not their absolute values. Adding a constant to one variable does not change the pattern of the relationship.

2. Double every value of the y-variable: This is a scaling of the y-values.
   - Scaling a variable by a positive constant, here 2, also does not affect the correlation.
   - Reason: Correlation is scale-invariant. Multiplying a variable by a positive constant does not change the correlation.

3. Interchange the two variables: This swaps the roles of (*@$x$@*) and (*@$y$@*).
   - Correlation is symmetric in its variables. That is,
     (*@\[
     \operatorname{Corr}(x,y) = \operatorname{Corr}(y,x)
     \]@*)
   - So, swapping the variables does not change the correlation.

Conclusion:
All three transformations, adding a constant to x, scaling y, and swapping variables, do not change the correlation.

Therefore, the new correlation is still:

J.\ 0.19

Final Answer: J
\end{lstlisting}




\newpage

\end{document}